\numberwithin{equation}{section}
\def\ba{{\bm a}}
\def\bu{{\bm u}}
\def\bw{{\bm w}}
\def\bx{{\bm x}}
\def\by{{\bm y}}
\def\bK{{\bm K}}
\def\ga{{\bm \alpha}}
\def\gb{{\bm \beta}}
\def\gx{{\bm \xi}}
\def\Max{\rm maximize}
\def\R{{\mathbb R}}
\def\Fcal{{\mathcal F}}
\def\Tcal{{\mathcal T}}
\def\SVcal{{\mathcal{SV}}}
\def\Min{{\mathrm{minimize}}}
\def\ST{{\mathrm{subject\  to}}}
\begin{document}


\title{Towards Convexity in Anomaly Detection: A New Formulation of SSLM with Unique Optimal Solutions}



\author{
       \name Hongying Liu  \email liuhongying@buaa.edu.cn\\
       \addr School of Mathematical Sciences,\\
       Beihang University\\
       Beijing, 100191, China
       \AND
\name Hao Wang* \email haw309@gmail.com \\
       \addr School of Information Science and Technology\\
       ShanghaiTech University\\
       Shanghai, 201210, China
       \AND
\name Haoran Chu \email chuhr2023@shanghaitech.edu.cn\\
       \addr School of Information Science and Technology\\
       ShanghaiTech University\\
       Shanghai, 201210, China
        \AND
\name Yibo Wu \email wybmath@buaa.edu.cn\\
       \addr School of Mathematical Sciences,\\
       Beihang University\\
       Beijing, 100191, China
       }

 \editor{ }

\maketitle

\begin{abstract}

An unsolved issue in widely used methods such as Support Vector Data Description (SVDD) and Small Sphere and Large Margin SVM (SSLM) for anomaly detection is their nonconvexity, which hampers the analysis of optimal solutions in a manner similar to SVMs and limits their applicability in large-scale scenarios. In this paper, we introduce a novel convex SSLM formulation which has been demonstrated to revert to a convex quadratic programming problem for hyperparameter values of interest. Leveraging the convexity of our method, we derive numerous results that are unattainable with traditional nonconvex approaches. We conduct a thorough analysis of how hyperparameters influence the optimal solution, pointing out scenarios where optimal solutions can be trivially found and identifying instances of ill-posedness.
Most notably, we establish connections between our method and traditional approaches, providing a clear determination of when the optimal solution is unique---a task unachievable with traditional nonconvex methods. We also derive the $\nu$-property to elucidate the interactions between hyperparameters and the fractions of support vectors and margin errors in both positive and negative classes.
\end{abstract}

\begin{keywords}
support vector data description, small sphere and large margin, anomaly detection, $\nu$-property, convex optimization
\end{keywords}

\section{Introduction and Background}

Anomaly detection, also known as outlier detection, is a crucial task in data analysis and aims to recognize whether a new observation belongs to the same distribution as existing observations. It is applicable across a wide range of domains where detecting anomalous data patterns is crucial, including anomaly detection, quality control, fraud detection, image analysis, and remote sensing. In these applications, only large datasets of the `normal' or `positive' condition (also known as positive examples, labeled as $y_i=+ 1$) are encountered, and the goal is to classify data into this primary class while not explicitly modeling or considering other classes of `abnormalities' (also known as negative examples, labeled as
$y_i=-1$).


A wide array of anomaly detection methods exists, and the support vector machine (SVM) and the neural networks are the most popular tools for this purpose \citep{JMLRsteinwart05a,zhou2024}. Several ``(hyper)sphere-based'' models have emerged among the various SVM-based approaches, each proving efficient and powerful in anomaly detection. These include the ball volume minimization model (\citeauthor{Scholkopf95}, \citeyear{Scholkopf95}), the support vector data description (SVDD) (\citeauthor{Tax04} , \citeyear{Tax04}), the soft-boundary SVDD (\citeauthor{Tax99}, \citeyear{Tax99}), as well as its convexified variant (\citeauthor{Chang14}, \citeyear{Chang14}), and the small sphere and large-margin (SSLM) approach (\citeauthor{wu2009small}, \citeyear{wu2009small}).
At the core of these methods lies the construction of a hypersphere that encapsulates normal instances while ensuring that most, if not all, of the normal data points lie within or on the boundary of the hypersphere. This often leads to quadratically constrained quadratic programming (QCQP) problems involving the radius and center point of the hypersphere. These methods, particularly SVDD and SSLM, offer numerous advantages. They exhibit robustness when dealing with high-dimensional data, can handle nonlinear relationships through kernelization, and demonstrate a strong ability to generalize well to unseen data. Furthermore, they are valued for their interpretability, as anomalies are identified based on their distance from the center of the hypersphere.

Despite their wide application and numerous advantages, these methods also come with limitations, one of which stems from their nonconvex formulation, which is the central focus of this paper. The model proposed by \cite{Scholkopf95} assume that normal data are encapsulated within a single hypersphere, and the optimal hypersphere is simply the one with the smallest volume, resulting in a convex model. The optimal hypersphere \emph{global} can be easily determined in this case. However, real applications rarely accommodate this assumption, and such a model may become highly sensitive when dealing with outliers, non-convex anomalies, or data with complex structures. Modern models such as SVDD and SSLM allow negative examples and the misclassification of both positive and negative examples. Although the optimal solutions of such models possess powerful interpretation and generalization abilities, they give rise to \emph{nonconvex} QCQP problems.  This is a well-known critical issue that remains unsolved, with many associated issues and limitations stemming from it.

\subsection{Related work}
Many researchers have acknowledged the prevailing challenge posed by the nonconvexity issue inherent in sphere-based Support Vector Machines (SVMs). Over the past decades, several efforts have been devoted to addressing this intricate problem. Their focus has been centered on developing innovative strategies and algorithms to overcome the limitations imposed by nonconvexity, in a bid to enhance the performance and efficiency of sphere-based SVMs.

Consider the training data
$
\bx_1,\cdots, \bx_{\ell}\in \mathcal{X},
$
where for each $i\in\{1, \cdots, \ell\}$  (in compact notation: $i\in [1,\ell]$)  the vector $\bx_i$ represents the  {input vector, $\ell\in \mathbb{N}$ is the number of observations and $\mathcal{X} $ is the data domain and
is assumed to be a subset of $\R^N$}. 
Denote $\ba$ and $R$ for the ball center and radius, respectively. 
\cite{Scholkopf95} minimized the volume $R^2$ of the ball while requiring all examples to be contained in this ball $\|\bx_i-\ba\|^2\leq R^2, i\in [1,\ell]$.
A test point is classified as normal if it is enclosed by this hypershere and abnormal otherwise.

However, this model is very sensitive to outlier data points. If there are one or several outlier data points in the training set, a very large hypersphere will be obtained. To accommodate the existence of outliers in the training data, \cite{Tax1999DataDD,Tax99} proposed soft-boundary-SVDD:
\begin{equation}\label{prob.softsvdd}
\begin{array}{cl}
\mathop{\Min}\limits_{\ba\in\R^N, R\in\R,\gx\in\R^\ell} &R^2+C\sum_{i=1}^\ell\xi_i \\
\ST &\|\bx_i-\ba\|^2\leq R^2+\xi_i,\xi_i\ge0, i\in [1,\ell],
\end{array}
\end{equation}
where  $\xi_i$ is the relaxed variable and the positive parameter $C$ controls the trade-off between the volume and the empirical errors.

The nonconvexity term $R^2$ in the constraints of these methods 
can be removed by introducing a new nonnegative variable $r = R^2$. \cite{Chang14} pointed out
that the constraint $r\geq0$ in this problem is not necessary for $C>1/\ell$, meaning in this case  the optimal solution to the relaxation problem
\begin{equation}\label{eq:SVDD_normr}
\begin{array}{ll}
\mathop{\Min}\limits_{\ba\in \R^N, r\in\R,\gx\in\R^\ell} &r+C\sum_{i=1}^\ell\xi_i\\
\ST &\|\bx_i-\ba\|^2\leq r+\xi_i, \xi_i\ge0, i\in [1,\ell]
\end{array}
\end{equation}
always has $r\geq0$. If $C\leq 1/\ell$, then
$\ba_*=\frac1\ell\sum_{i=1}^\ell\bx_i$ and $r_*=0$ is an optimal solution for \eqref{eq:SVDD_normr}.  Notice that \eqref{eq:SVDD_normr} is a convex problem and satisfies the Slater condition, ensuring strong duality.
 The dual problem of \eqref{eq:SVDD_normr} is
\begin{equation}\label{eq:dSVDD_normr}
\begin{array}{ll}
\mathop{\Max}\limits_{\ga\in\R^\ell} &\sum_{i=1}^\ell\alpha_i\bx_i^T\bx_i-\sum_{i=1}^\ell\sum_{j=1}^\ell\alpha_i\alpha_j\bx_i^T\bx_j\\
\ST &\sum_{i=1}^\ell\alpha_i=1,\\
&0\leq \alpha_i\leq C,i\in [1,\ell].
\end{array}
\end{equation}
\citeauthor{Wang11} (\citeyear[Theorem 2]{Wang11}) and
\citeauthor{Chang14} (\citeyear[Theorem 2]{Chang14}) showed that the optimal sphere center is always unique for the given dataset.   As for the optimal ball radius,
\citeauthor{Wang11} (\citeyear[Theorem 3]{Wang11}) claimed that the sphere radius $r_*$ is not the unique optimal radius {\em if and only if}
\begin{equation}\label{error_unique_condition}
|\Fcal_{r_*}|C=1
\end{equation}
where $\Fcal_{r_*}=\{i:y_i=1, \|\bx_i-\ba_*\|^2-r_*>0\}$ is the examples that lie outside the sphere. Furthermore,
\citeauthor{Chang14} (\citeyear[Theorem 5]{Chang14}) showed that $r$ is optimal for \eqref{eq:SVDD_normr} if and only if
\begin{equation}\label{eq:alpharange}
\max_{i:\alpha_i^*<C}\|\bx_i-\ba_*\|^2\leq r\leq\min_{i:\alpha_i^*>0}\|\bx_i-\ba_*\|^2.
\end{equation}
Suppose $\ga^*$ is an optimal solution for the dual problem \eqref{eq:dSVDD_normr}. 
\citeauthor{Wang11} (\citeyear[Theorem 4]{Wang11}) claimed that the radius of the ball is unique {\em if and only if} there exists $i_0\in\{1,\cdots,\ell\}$ such that $\alpha_{i_0}^*\in(0,C)$.

 { Let $k:\mathcal{X}\times\mathcal{X}\to \R$ be a positive definite kernel with reproducing kernel Hilbert space (RKHS) $H$ and $\Phi$ be a feature map $\mathcal X\to H$. Then the inner product in $H$ can be computed by evaluating with
$k(\bx,\by)=\langle \Phi(\bx), \Phi(\by) \rangle$ and denote $k_{ij}=k(\bx_i,\bx_j)$.}
To separate the data set from the origin, \cite{Scholkopf99,Scholkopf01} proposed one-class support vector machine (OCSVM):
\begin{equation}\label{prom.ocsvm} 
\begin{array}{cl}
\mathop{\Min}\limits_{\bw\in H,\gx\in\R^\ell,\rho\in\R}&\frac12\|\bw\|^2-\rho+\frac{1}{\nu \ell}\sum_{i=1}^\ell\xi_i\\
\ST &\langle\bw,\Phi(\bx_i)\rangle\geq\rho-\xi_i,\xi_i\geq0, i\in [1,\ell],
\end{array}
\end{equation}
where $\nu\in(0,1]$ is a given parameter. Note that although \eqref{eq:SVDD_normr} is an equivalent convex formulation of \eqref{prob.softsvdd}, it remains the quadratic constraints. 

If negative examples (objects which should be rejected) are available, they can be incorporated into the training set to improve the description.  For convenience, we assume that target objects are labeled $y_i=1$ for $i\in[1,m]$ and outlier objects are labeled $y_i=-1$ for $i\in[m+1,\ell]$.  \cite{Tax01,Tax04} extended the SVDD to  learn a tight hyperball that encloses the majority of the normal data points while excluding the anomalies:
\begin{equation}\label{prob.negativesvdd} 
\begin{array}{cl}
\mathop{\Min}\limits_{\ba\in H,R\in\R,\gx\in\R^\ell} &  R^2+C\sum_{i=1}^{m}\xi_i+D\sum_{i=m+1}^\ell\xi_i\\
\ST
& \|\Phi(\bx_i)-\ba\|^2\leq R^2+\xi_i,\xi_i\ge0, i\in [1,m]\\
& \|\Phi(\bx_i)-\ba\|^2\geq R^2 -\xi_i,\xi_i\ge0,i\in [m+1,\ell],
\end{array}
\end{equation}
where positive parameters $C$ and $D$ together control the trade-off between the volume and the empirical errors. These constraints enforce that the negative examples are located outside the hyperball, effectively pushing the boundary away from the anomalies.
The addition of negative examples allows the model to better capture the boundary between normal and abnormal instances, potentially leading to improved performance in scenarios where anomalies are more diverse or prevalent. 

\cite{wu2009small} proposed  SSLM  approach, which aims to define a hyperball that tightly encloses the majority of normal data points while maintaining a large margin from the decision boundary to separate normal data from outliers. The idea is to create a compact representation of positive examples while maximizing the separation between the positive  and negative instances  to achieve robust anomaly detection performance:
 \begin{equation}\label{prob.sslm}
\begin{array}{cl}
\mathop{\Min}\limits_{\ba\in H, R,\rho\in\R,\gx\in\R^\ell} & R^2-\bar\nu\rho^2+\frac{1}{\bar\nu_1 m}\sum_{i=1}^m\xi_i+\frac{1}{\bar\nu_2 n}\sum_{i=m+1}^\ell\xi_i\\
\ST & \|\Phi(\bx_i)-\ba\|^2\leq R^2+\xi_i,\xi_i\ge0, i\in [1,m]\\
& \|\Phi(\bx_i)-\ba\|^2\geq R^2+\rho^2-\xi_i,\xi_i\ge0,i\in [m+1,\ell],
\end{array}
\end{equation}
where $n=\ell-m$, and $\rho^2\geq0$ is the margin between the surface of the hyperball and the outlier training data,  $\gx\in\R^\ell$ is the vector of slack variables, and the positive parameter $\bar\nu$  controls the trade-off between the volume and margin, positive $\bar\nu_1$ and $\bar\nu_2$ control the trade-off between the volume and the errors. \eqref{prob.sslm} can result in a closed and tight boundary around the normal data. \cite{wu2009small} given the dual problem of \eqref{prob.sslm}:
\begin{equation}\label{prob.sslmdual1}
	\begin{array}{ll}
		\mathop{\Max}\limits_{\ga\in\R^\ell} &   -\sum\limits_{i=1}^{\ell}\sum\limits_{j=1}^\ell y_iy_{j}k_{ij}\alpha_i\alpha_j+\sum\limits_{i=1}^\ell y_ik_{ii}\alpha_i\\
		\ST & \sum\limits_{i=1}^\ell y_i\alpha_i=1,\quad \sum\limits_{i=1}^\ell\alpha_i = 2\bar\nu+1,\\
	    	& 0\leq\alpha_i\leq \tfrac{1}{\bar\nu_1m}, i\in [1,m];\quad  0\leq\alpha_i\leq \frac{1}{\bar\nu_2n},i\in [m+1,\ell].
	\end{array}
\end{equation}

The term $R^2$ in the constraints of \eqref{prob.negativesvdd} and \eqref{prob.sslm} associated with the positive examples can be removed by introducing new nonnegative variable $r = R^2$. Similarly, the   term $\rho^2$ in the objective of \eqref{prob.sslm} can be removed by introducing new nonnegative variable $t = \rho^2$. While the nonconvexity term $\|\ba\|^2$ appearing in the constraint with negative examples in
\eqref{prob.negativesvdd} and \eqref{prob.sslm} is inherent, which results in a nonconvex QCQP problems. It is worthwhile to ask whether a convex quadratic program (QP) formulation for \eqref{prom.ocsvm}, \eqref{prob.negativesvdd} and \eqref{prob.sslm}, as the well-known support vector machines (SVM)
\begin{equation}\label{eq:svm}
\begin{array}{cl}
\mathop{\Min}\limits_{\bw\in H,\gx\in\R^\ell,\rho\in\R}&\frac{\nu}{2}\|\bw\|^2+\frac{1}{\ell}\sum_{i=1}^m\xi_i+\frac{b}{\ell}\sum_{i=m+1}^{\ell}\xi_i\\
\ST &y_i(\langle\bw,\Phi(\bx_i)\rangle+b)\geq 1-\xi_i,\xi_i\geq0, i\in [1,\ell].
\end{array}
\end{equation}
  
 {It is worth noting that the norm in \eqref{prom.ocsvm}, \eqref{eq:svm}, \eqref{prob.negativesvdd} and \eqref{prob.sslm} are induced by the inner product in RKHS $H$ for $\bw, \ba\in H$. It reduced to the Euclidean norm if $\Phi(\bx)=\bx$.}

\subsection{Motivation}

Nonconvexity is a prevalent issue in hypersphere-based SVM models, particularly in well-known methods such as  SVDD  and  SSLM. This poses challenges in model property analysis, hyperparameter selection, and algorithm design.

\emph{Model Property Analysis}.
The nonconvexity of hypersphere-based SVMs significantly complicates the analysis of their solutions, especially when compared to hyperplane-based SVMs \citep{Vapnik1995NatureofSLT,2007Learning,Steinwart2008SVM}. Fundamental questions, such as whether a unique solution exists, remain unresolved. A key issue is the lack of strong duality, which is essential for working effectively in kernel space. Nonconvexity can result in an incorrect Lagrangian dual problem, maintaining a positive duality gap between the primal and dual solutions. Moreover, solving nonconvex problems to global optimality is often computationally difficult, if not impossible, and the solutions obtained by solvers are typically only local optima, limiting their usefulness for analysis.

\emph{Hyperparameter Selection}.  One of the most pressing issues with nonconvex models is the challenge of hyperparameter tuning. Parameters such as the sphere size, margin, and kernel function directly impact performance and generalization ability. The difficulty arises from the inability to guarantee ``global'' minima for each hyperparameter, as local minima make it hard to assess whether a chosen value is truly optimal. Additionally, balancing the size of the hypersphere (which represents normal data) against the margin separating outliers can be problematic, as local minima increase the sensitivity to outliers, resulting in suboptimal anomaly detection performance.

\emph{Algorithm Design}.
Algorithm design for hypersphere-based SVMs has been underexplored, particularly when compared to the more established hyperplane-based SVMs \eqref{eq:svm}. For instance, \cite{Chapelle07} demonstrated that the primal problem of SVMs can be efficiently solved, and \cite{Pegasos11} proposed the Pegasos algorithm, which uses stochastic subgradient descent to solve the primal problem of SVMs. We believe that resolving the nonconvexity issue in hypersphere-based SVMs would pave the way for the development of similarly powerful algorithms.

\emph{Approximation Methods and Efficiency}.
Numerous studies have proposed approximation methods to address the nonconvexity of SVDD and SSLM. For example, \cite{chen2015robust} approximated SVDD using a Gaussian radial basis function kernel, facilitating easier solution acquisition. \cite{wang2023non} replaced the hinge loss in SSLM with a ramp loss, converting the nonconvex problem into a difference-of-convex  form, which was then solved using the concave-convex procedure. \cite{guo2020incremental}  incorporated incremental learning into SSLM to improve training efficiency.

 {\emph{Bilevel Optimization and the Challenge of Nonconvexity}.
Bilevel optimization, which has recently attracted significant attention as a powerful tool for hyperparameter selection and meta-learning, provides a natural framework where the upper-level problem optimizes hyperparameters while the lower-level problem corresponds to model training. However, nonconvexity—whether in the upper- or lower-level—poses severe challenges. A nonconvex upper-level objective admits multiple local minima, complicating the search for globally optimal hyperparameters. More critically, nonconvexity in the lower-level problem undermines the very foundation of bilevel methods: the solution set of the inner problem can be non-unique, discontinuous, or non-differentiable, rendering implicit differentiation and hypergradient techniques inapplicable \citep{zhang2024introduction}. In such cases, the overall bilevel formulation is not only highly nonconvex but often computationally intractable (NP-hard), extremely sensitive to initialization, and prone to instability under standard gradient-based algorithms \citep{liu2022bome}. These issues highlight that ensuring convexity—at least at the lower level—is essential for making bilevel optimization practically reliable and theoretically well-grounded. This challenge directly motivates our convex reformulation, which guarantees tractability and stability while preserving the modeling power of the bilevel framework.}


Nonconvexity also hampers many techniques designed to enhance computational efficiency during training. SSLM, in particular, involves multiple variables that cannot be explicitly represented as a linear combination of training samples. As a result, researchers \citep{cao2020multi, WANG2022382, YUAN2021107860} have focused on developing screening rules to identify and eliminate non-support vectors, reducing the size of the nonconvex problem. The first such rule, proposed by \cite{cao2020multi}, integrates the $\nu$-property, KKT conditions, and variational inequalities specific to SSLM.

This ongoing research demonstrates that addressing the nonconvexity in hypersphere-based SVMs can lead to significant advances in both theoretical analysis and practical algorithm development.

\subsection{Contribution}

The motivations discussed above prompted us to propose a convex optimization formulation to obtain the \emph{global} optimal solution for the SSLM with given hyperparameters. This allows us to analyze our method in a manner similar to SVMs, addressing a longstanding unresolved issue in hypersphere-based SVM models like SSLM and SVDD.

\begin{itemize}
\item \emph{Convex Reformulation}: We introduce a new kernelized SSLM model that is convex for any set of hyperparameters. Specifically, we prove that for prescribed values of these hyperparameters, our model reduces to either a convex quadratic programming (QP) or linear programming (LP) problem. This ensures that the \emph{global} optimal solution can be found efficiently using existing large-scale QP/LP solvers.

\item \emph{Hyperparameter and Solution Interaction}: We analyze the relationship between hyperparameters and the resulting optimal solutions, identifying ``inappropriate''  hyperparameter values that lead to ill-posed problems (where the optimal margin and objective value tend to infinity). Additionally, we highlight ``trivial'' hyperparameter choices for which the optimal solution can be explicitly determined. This analysis provides valuable guidance for hyperparameter selection.

\item \emph{Uniqueness and Dual Characterization}: We demonstrate the uniqueness of the optimal solution for relevant hyperparameter values and characterize these solutions using dual variables. This type of analysis is rarely possible in nonconvex models, highlighting a key advantage of our convex approach.

\item \emph{$\nu$-Property}: We derive the $\nu$-property of our model, describing the relationship between hyperparameters and the fractions of support vectors and margin errors in both the positive and negative classes. This property allows for flexible control of the SVM model's complexity, which is particularly useful in situations such as class imbalance (common in anomaly detection) or when direct control over the number of support vectors is desired.

\item \emph{Connections to Traditional Methods}: Our method establishes clear connections with traditional approaches like SVDD and SSLM, making it possible to apply the analyses and results presented here to those models. This is a significant advancement towards overcoming the limitations posed by nonconvexity in traditional hypersphere-based SVMs.

\item \emph{Numerical Performance}: Through numerical experiments, we demonstrate that our convex method consistently outperforms traditional SSLM. While the nonconvexity of SSLM can lead to convergence to non-global minima when using local optimization algorithms, our convex approach reliably converges to the global minima, ensuring both efficiency and accuracy.
\end{itemize}

\subsection{Outline}

In Section \ref{sec.model}, we introduce the convex   SSLM model, and analyze the influence of hyperparameters on the optimal model.
In Section \ref{sec.connect}, the connection of our proposed model to traditional SVDD and SSLM models is established.  Section \ref{sec:unique_of_solution} presents the derivation of the Lagrangian dual problem for the proposed model, proves the uniqueness of optimal solutions for relevant hyperparameter values, and characterizes the optimal solution using dual variables, highlighting the $\nu$-property of our convex QP formulation. In Section \ref{sec.test}, we provide experimental results that demonstrate the advantages of our method over traditional SSLM. Finally, Section \ref{sec.conclude} concludes the paper.


\section{The Convex SSLM}\label{sec.model}

In this section, we present our \emph{convex}  SSLM (CSSLM) on the data set
$\{ (\bx_i, y_i)\}_{i=1}^\ell$ with $y_i=1, i \in [1,m]$ and $y_i=-1, i\in [m+1,\ell]$  with $\ell:=m+n$.
The $m$ positive examples are required to be contained in the small hypersphere with radius $r \ge 0$,
\[
\|\Phi(\bx_i)  - \ba\|^2 \le r,\quad y_i=1.
\]
Let $t \ge 0$ be the margin between the two spheres.  The $n$ negative examples are excluded from the hypersphere
\[
  \|\Phi(\bx_i)  - \ba\|^2 \ge r + t,\quad y_i=-1.
\]
The empirical risk is $(\|\Phi(\bx_i)-\ba\|^2- r)_{+}$ if $y_i=1$ and $b(r+t-\|\Phi(\bx_i)-\ba\|^2)_{+}$ if $y_i=-1$ with $(z)_+:=\max\{z,0\}$.
By minimizing the small sphere and the empirical risk, we arrive at the regularized empirical risk minimization formulation of \eqref{prob.sslm} 
\begin{equation}\label{prob.unc}  \tag{$\mathcal{F}$}
\inf\limits_{\ba\in H,  r\ge 0 ,t\geq 0}\  g(\ba,r,t)
\end{equation}
with
\[
g(\ba, r, t)=\tfrac{1}{2}(\nu r-\mu t)+\tfrac{1}{2\ell}\sum\limits_{i=1}^m(\|\Phi(\bx_i)-\ba\|^2-r)_+ +\tfrac{b}{2\ell}\sum\limits_{i=m+1}^\ell( r+t-\|\Phi(\bx_i)-\ba\|^2)_+.
\]
 Here prescribed
   $\nu >0 $  controls the trade-off between the volume and the loss,
 $\mu \ge  0$ controls the trade-off between the volume and the margin,
 and $b>0$ controls the trade-off between the loss by positive examples and
 that by negative examples.

Before proceeding to propose  the CSSLM based on \eqref{prob.unc},  we first point out in the following (See Appendix \ref{sec:appendA} for the proof) that  the objective  of  \eqref{prob.unc} is unbounded below if  the value of $\mu$ is set too large.

\begin{lemma}\label{prop:SSLMwj} 
 If  $\mu>\min\{m/{\ell}, bn/{\ell}\}$, then     \eqref{prob.unc}
 is unbounded below.  
\end{lemma}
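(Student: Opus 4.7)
The plan is to exhibit two explicit feasible sequences along which $g$ diverges to $-\infty$, one for each subcase of the hypothesis $\mu > \min\{m/\ell, bn/\ell\}$. For the easy subcase $\mu > bn/\ell$, I would fix $\ba = \bm 0$, $r = 0$, and send $t \to \infty$. Once $t > \max_i \|\Phi(\bx_i)\|^2$, every outlier hinge $(t - \|\Phi(\bx_i)\|^2)_+$ unfolds to $t - \|\Phi(\bx_i)\|^2$, and the objective reduces to
\[
g(\bm 0, 0, t) = \left(-\tfrac{\mu}{2} + \tfrac{bn}{2\ell}\right) t + \mathrm{const},
\]
an affine function of $t$ whose slope is strictly negative by hypothesis, so $g \to -\infty$.

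For the subcase $\mu > m/\ell$, the idea is to push $\ba$ itself to infinity so that both the positive loss and the admissible margin grow quadratically in $\|\ba\|$. Fix any unit vector $\bv \in F$ and set $\ba = s\bv$, $r = 0$, and $t := \min_{i > m} \|\Phi(\bx_i) - s\bv\|^2 - 1$. For $s$ large, $t \ge 0$, and by construction $r + t < \|\Phi(\bx_i) - s\bv\|^2$ for every $i > m$, so each outlier hinge vanishes. Using the uniform estimate $\|\Phi(\bx_i) - s\bv\|^2 = s^2 + O(s)$, the objective becomes
\[
g = -\tfrac{\mu}{2} t + \tfrac{1}{2\ell}\sum_{i=1}^m \|\Phi(\bx_i) - s\bv\|^2 = \tfrac{s^2}{2}\!\left(\tfrac{m}{\ell} - \mu\right) + O(s),
\]
which tends to $-\infty$ as $s \to \infty$ whenever $\mu > m/\ell$. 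Together the two subcases cover the full hypothesis $\mu > \min\{m/\ell, bn/\ell\}$.

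The crux is identifying the second construction. A naive attempt to send a single variable to infinity (say $t$ alone, with $\ba$ and $r$ fixed) only yields unboundedness when $\mu > bn/\ell$; reaching the sharper threshold $m/\ell$ requires simultaneously pushing $\ba$ to infinity along a ray so that the activation thresholds $d_i^2$ grow as well, allowing $t$ to scale like $\|\ba\|^2$ without triggering any nonconvex outlier hinge. Once this scaling is in place the $O(s)$ bookkeeping -- verifying $t \ge 0$ and that the leading $s^2$ coefficient is exactly $\tfrac{1}{2}(m/\ell - \mu)$ -- is routine.
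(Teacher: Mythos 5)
Your proof is correct and follows essentially the same route as the paper: for $\mu > bn/\ell$ both arguments send $t\to\infty$ with $\ba=\bm 0$, $r=0$, and for $\mu > m/\ell$ both push $\ba$ to infinity along a ray with $t$ scaling like $\|\ba\|^2$ so the leading coefficient is $\tfrac12(m/\ell-\mu)$. The only (cosmetic) difference is that you choose $t$ just below the smallest negative-class distance so the outlier hinges vanish exactly, whereas the paper takes $t=\|\ba\|^2$ and bounds the resulting hinge terms by a quantity linear in the scaling parameter.
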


{The parameter $\mu \le \min\{m/{\ell}, bn/{\ell}\}$ is necessary to ensure that the objective remains bounded. The margin term 
$-\mu t/2$ decreases the objective as $t$ grows, and the hinge losses from positive and negative examples act as a counterbalance. Positive examples can offset at most their dataset proportion $m/\ell$, while weighted negatives contribute at most $bn/\ell$. If $\mu$ exceeds the smaller of these, the margin term dominates, causing $t\to\infty$ and the objective to be unbounded.}
 In the rest of this paper,  we assume $\nu$ and $\mu$ are selected such that
\begin{equation}\label{all.value}
\boxed{\mu \le \min\{\tfrac{m}{\ell}, \tfrac{bn}{\ell}\}.}
\end{equation}
Obviously,  \eqref{prob.unc} is a nonconvex model. We do not suggest directly solving an \emph{nonconvex} model like \eqref{prob.unc}.  We show that for all values satisfying \eqref{all.value},
 the \emph{global} optimal solution of \eqref{prob.unc} can be found by solving a \emph{convex} problem.

 First, suppose  { the regularization strength is smaller than the ratio of positive examples in the dataset. Since the average slope of the hinge-loss from positive samples is $m/\ell$, this balance ensures that the loss can counteract the regularization, making the problem a convex quadratic program. This condition is given as 
 \begin{equation}\label{eq:nondegeneratecondition-nu}
\boxed{\nu+\mu<m/{\ell}.}
\end{equation}
}
Then the global optimal solution of \eqref{prob.unc} can be found by solving the problem
 \begin{equation}\label{prob.csslmunc}
\inf\limits_{\ba\in H,  r\in \R, t\geq 0}\  g(\ba,r,t).
\end{equation}
Compared with \eqref{prob.unc}, the absence of constraint $r\ge 0$ represents a significant difference, which enables
\eqref{prob.csslmunc} to be convex in the absence of negative example.

\begin{theorem}\label{pro:non_degenerate}
If \eqref{eq:nondegeneratecondition-nu} holds,
then the global   optimal solution $(\ba_*,  r_*, t_*)$ of \eqref{prob.csslmunc}
satisfies $r_* \ge 0$  and therefore  is global optimal for \eqref{prob.unc}.  Conversely,  every global minimizer
$(\bar\ba,\bar r,\bar t)$ of \eqref{prob.unc}   is also optimal for \eqref{prob.csslmunc}.
\end{theorem}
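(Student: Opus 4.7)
The plan is to exploit that $g(\ba_*,\cdot,\cdot)$ is convex and piecewise linear in $(r,t)$, and to rule out $r_*<0$ at any minimizer of \eqref{prob.csslmunc}; the converse implication then follows since \eqref{prob.csslmunc} is nothing but the relaxation of \eqref{prob.unc} obtained by dropping the constraint $r\ge 0$.

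For the forward direction, I would argue by contradiction: suppose $(\ba_*,r_*,t_*)$ is a global minimizer of \eqref{prob.csslmunc} with $r_*<0$. Since $\|\Phi(\bx_i)-\ba_*\|^2\ge 0>r_*$ for every positive example, all $m$ positive hinge losses are strictly active at $r_*$. I then split into two subcases. If $t_*>0$, consider the perturbation $(r_*+s,\,t_*-s)$ for small $s\in(0,t_*)$: this preserves $r+t$ and hence leaves every negative-example hinge unchanged, reduces the total positive empirical loss by $ms/(2\ell)$, and changes the linear part by $((\nu+\mu)/2)s$, so the net change in $g$ is $\tfrac{s}{2}((\nu+\mu)-m/\ell)$, strictly negative by hypothesis. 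If $t_*=0$, instead take $(r_*+s,\,0)$ for small $s\in(0,-r_*)$: here $r_*+s<0\le \|\Phi(\bx_i)-\ba_*\|^2$, so every negative-example hinge stays inactive (equal to zero), the positive empirical loss still decreases by $ms/(2\ell)$, and the net change in $g$ is $\tfrac{s}{2}(\nu-m/\ell)<0$, since $\mu>0$ forces $\nu<m/\ell$. Either subcase contradicts optimality, so $r_*\ge 0$ and $(\ba_*,r_*,t_*)$ is feasible for \eqref{prob.unc} with the same objective value.

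For the converse, \eqref{prob.csslmunc} is a relaxation of \eqref{prob.unc}, so its infimum is at most that of \eqref{prob.unc}; combined with the forward direction, which exhibits a minimizer of \eqref{prob.csslmunc} lying in the feasible region of \eqref{prob.unc}, the two infima coincide. Any global minimizer $(\bar\ba,\bar r,\bar t)$ of \eqref{prob.unc} is then automatically a feasible point of \eqref{prob.csslmunc} attaining the common optimal value, and hence is optimal for \eqref{prob.csslmunc}.

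The main obstacle is the boundary subcase $t_*=0$, where the natural perturbation $(+1,-1)$ that freezes the $(r+t)$-dependent losses is no longer feasible. The workaround requires the strictly stronger inequality $\nu<m/\ell$ (which follows for free from $\nu+\mu<m/\ell$ and $\mu>0$) to make $(+1,0)$ a descent direction, plus the observation that $r_*<0$ guarantees a whole neighborhood $s\in(0,-r_*)$ within which no negative-example hinge is ever active, so the piecewise-linear analysis remains valid throughout the step.
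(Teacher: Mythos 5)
Your proof is correct, and it follows the same overall strategy as the paper (contradiction from $r_*<0$, all positive hinges strictly active, a two-case analysis exploiting $\nu+\mu<m/\ell$, and the standard relaxation argument for the converse), but the execution differs in a worthwhile way. The paper splits on the sign of $r_*+t_*$ and compares against two explicitly constructed points: $(\hat\ba,0,0)$ with $\hat\ba$ the centroid of the positive images when $r_*+t_*\le 0$, and $(\ba_*,0,r_*+t_*)$ when $r_*+t_*>0$. You instead split on whether $t_*=0$ and keep the center $\ba_*$ fixed throughout, using the local descent directions $(+1,-1)$ and $(+1,0)$ in the $(r,t)$-plane; since $g(\ba_*,\cdot,\cdot)$ is convex, a strictly negative one-sided directional derivative along a feasible direction already contradicts optimality, so "small $s$" suffices and you never need the centroid characterization \eqref{eq:degenerate-spherecenter}. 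Your $t_*>0$ case is essentially an infinitesimal version of the paper's $r_*+t_*>0$ jump (which corresponds to $s=-r_*$), while your $t_*=0$ case replaces the paper's appeal to $\hat\ba$ by the observation that $r_*+s<0$ keeps every negative hinge inactive. Two minor remarks: (i) the strict inequality $\nu<m/\ell$ needed in your second case follows already from $\mu\ge 0$ together with $\nu+\mu<m/\ell$, so invoking $\mu>0$ is unnecessary (and the paper does allow $\mu=0$ here); (ii) to guarantee the positive loss decreases by exactly $ms/(2\ell)$ you should take $s<\min\{t_*,-r_*\}$ in the first case, which is what "small $s$" implicitly does. Neither affects correctness.
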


\begin{proof}
Suppose  $(\ba_*, r_*, t_*)$  is the optimal solution of \eqref{prob.csslmunc}.  
Assume by contradiction that $r_* < 0$. 
To derive a contradiction, we now investigate the value of $g$ in two cases based on the value of $r_*+t_*$.
For this purpose, define
	  \begin{equation}\label{eq:degenerate-spherecenter}
	\hat{\ba} :=     \arg\min_{ \ba\in H} \tfrac{1}{m}\sum_{i=1}^m\|\Phi(x_i)-\ba\|^2 = \tfrac{1}{m}\sum_{i=1}^m\Phi(x_i).
\end{equation}

If $r_*+t_* 
\leq0$, we have
$$
\begin{array}{ll}
g(\ba_*,r_*,t_*)&=\frac{1}{2}(\nu+\mu-\frac{m}{\ell})r_*  -\frac{\mu}{2} (r_*+  t_*) +\frac{1}{2\ell}  \sum_{i=1}^{m}\|\Phi(\bx_i)-\ba_*\|^2 \\
&\geq\frac{1}{2}(\nu+\mu-\frac{m}{\ell})r_*+\frac{1}{2\ell} \sum_{i=1}^{m}\|\Phi(\bx_i)-\ba_*\|^2\\
&> \frac{1}{2\ell} \sum_{i=1}^{m}\|\Phi(\bx_i)-\ba_*\|^2\\
&\geq \frac{1}{2\ell} \sum_{i=1}^{m}\|\Phi(\bx_i)-\hat\ba\|^2\\
&=g(\hat{\ba},0,0),
\end{array}
$$
where the first inequality follows from $r_*+t_*\le 0$ and $\mu\ge  0$, and the strict inequality is by
\eqref{eq:nondegeneratecondition-nu}
and $r_*<0$, and
the last inequality is due to the fact that
$\hat\ba $ is the minimizer of $\sum_{i=1}^{m}\|\Phi(\bx_i)-\ba\|^2$ because \eqref{eq:degenerate-spherecenter}.
Obviously, $(\hat \ba, 0, 0)$ is feasible for \eqref{prob.csslmunc}. This contradicts the optimality of $(\ba_*,r_*,t_*)$.

If $r_*+t_*   >0$, we have
$$
\begin{aligned}
&g(\ba_*,r_*,t_*) \\
 =&  \tfrac{1}{2}(\nu+\mu-\tfrac{m}{\ell})r_* -\tfrac{\mu}{2} (r_*+  t_*) +\tfrac{1}{2\ell}  \sum\limits_{i=1}^{m}\|\Phi(\bx_i)-\ba_*\|^2 +\tfrac{b}{2\ell}\sum\limits_{i=m+1}^\ell (r_*+t_*-\|\Phi(\bx_i)-\ba_*\|^2)_+\\
 >&    -\tfrac{\mu}{2}(r_*+t_*)+\tfrac{1}{2\ell} \sum\limits_{i=1}^{m}\|\Phi(\bx_i)-\ba_*\|^2+\tfrac{b}{2\ell}\sum\limits_{i=m+1}^\ell (r_*+t_*-\|\Phi(\bx_i)-\ba_*\|^2)_+\\
 =&   g(\ba_*,0,r_*+t_*),
\end{aligned}
$$
where the inequality follows by \eqref{eq:nondegeneratecondition-nu} and $r_*<0$.
Notice that $(\ba_*, 0  ,r_*+t_*)$ is also feasible for \eqref{prob.csslmunc}  by $r_*+t_*>0$.
This contradicts the optimality of $(\ba_*,r_*,t_*)$.
Overall, we have shown that $r_*\geq0$, so that $(\ba_*, r_*, t_*)$ is also feasible for
\eqref{prob.unc}.  Hence, it is globally optimal for \eqref{prob.unc}.

Conversely, let $(\bar\ba, \bar r, \bar t)$ be globally optimal for  \eqref{prob.unc}. It follows that
$g(\bar\ba, \bar r, \bar t) \le g(\ba_*, r_*, t_*)$  because  $(\ba_*, r_*, t_*)$ is feasible for  \eqref{prob.unc}; moreover, $(\bar\ba, \bar r, \bar t)$ is also feasible for \eqref{prob.csslmunc}.
Hence, $(\bar\ba, \bar r,\bar t)$ is also optimal for \eqref{prob.csslmunc}.
\end{proof}

{ Now we give the convex QP formulation of \eqref{prob.csslmunc}. We introduce the intermediate variable
\begin{equation}\label{eq:transform-r-s}
s = \|\ba\|^2 - r,
\end{equation}
 instead of using $r$ as the variable, so that the empirical risk becomes $(e_i)_+$ if $y_i=1$ and $b(t-e_i)_+$ if $y_1=-1$ with $e_i=(k_{ii}+s)/2- \langle \ba,\Phi(\bx_i) \rangle$.
Now \eqref{prob.csslmunc}  reverts to
\begin{equation}\label{prob.csslmunc-s}
 \inf\limits_{\ba\in H, s\in\R,  t\geq 0}\   f(\ba,s,t): =\tfrac{\nu}{2}(\|\ba\|^2 - s)-\tfrac{\mu}{2} t+ \tfrac{1}{\ell}\sum\limits_{i=1}^m (e_i)_++\tfrac{b}{\ell}\sum_{i=m+1}^l(\tfrac{t}{2}-e_i)_+.
\end{equation}
}
The radius $r$ of the sphere appears as a latent variable in this model, and the small sphere is attained by minimizing
$\|\ba\|^2 -s$ instead of $r$.
By introducing the auxiliary  variable $\xi_i$,  \eqref{prob.csslmunc-s} is indeed a
\emph{convex QP}  model\footnote{Note that the objective function of \eqref{prob.csslmunc-s} is a scaled one of  \eqref{prob.primal1} with a factor $1/\ell$.}:
\begin{equation}\label{prob.primal1}\tag{$\mathcal{P}$} 
	\begin{array}{cl}
		\mathop{\rm\Min}\limits_{\ba\in H,s,t\in\R, \gx\in\R^\ell} &h(\ba,s,t,\gx):=\frac{\ell\nu}{2}(\|\ba\|^2-s)-\frac{\ell\mu}{2}t+\sum\limits_{i=1}^{m}\xi_i+b\sum\limits_{i=m+1}^\ell\xi_i\\
		\ST &y_i[\frac{k_{ii}}{2}-\langle\Phi(\bx_i), \ba\rangle+\frac{s}{2}]  \hspace{1.6em} -\xi_i\leq0,\    \xi_i\ge0, \quad i\in [1,m]\\
		&y_i[\frac{k_{ii}}{2}-\langle\Phi(\bx_i), \ba\rangle+\frac{s}{2}]+\frac{t}{2}-\xi_i\leq0,\    \xi_i\ge0, \quad i\in [m+1,\ell]\\
		&  \hspace{13.3em}  t\ge 0.
	\end{array}
\end{equation}

 It is often more convenient for SVM methods to train in the kernel space without knowing
the feature mapping $\phi$ explicitly, then transfer \eqref{prob.primal1} from the kernel space to the feature mapping $\Phi$.
Then we have the convex QP problem
{
\begin{equation}\label{prob.primaldual1}\tag{$\mathcal{K}$}
\min\limits_{\gb\in\R^\ell,s\in\R, t\ge 0} \tfrac{\nu}{2}(\gb^T\bK\gb-s)-\tfrac{\mu}{2} t+\tfrac{1}{\ell}\sum\limits_{i=1}^{m}\Big(\tfrac{k_{ii}+s}{2} - \sum_{j=1}^\ell \beta_j k_{ij}\Big)_{+}+\tfrac{b}{\ell}\sum\limits_{i=m+1}^\ell\Big(\tfrac{t}{2}-\Big(\tfrac{k_{ii}+s}{2}- \sum_{j=1}^\ell \beta_j k_{ij}\Big)\Big)_{+}.
\end{equation}
}
Problem \eqref{prob.primaldual1} is suitable to incorporate with different kernel techniques.
The sphere center in \eqref{prob.primaldual1} now also becomes a hidden variable. Nevertheless, it is trivial to
recover the optimal solution to \eqref{prob.primaldual1} by the following representer theorem.

 \begin{theorem}\label{th:representertheo}
 Let $H$ be a RKHS with positive definite symmetric kernel $k(\cdot,\cdot)$. Suppose \eqref{eq:nondegeneratecondition-nu} holds. Then $(\ba_*,r_*, t_*)$  is optimal for \eqref{prob.csslmunc} if and only if $(\gb^*, s_*, t_*)$  is optimal for \eqref{prob.primaldual1} with
 $r_* = \|\ba_*\|^2- s_*$ and
\begin{equation*} \label{eq:svddcbata}
\ba_*=\sum_{i=1}^\ell\beta_i^*\Phi(\bx_i).
\end{equation*}
\end{theorem}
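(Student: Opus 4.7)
The plan is to prove the theorem by chaining two equivalences: first, the variable change $s = \|\ba\|^2 - r$ identifies \eqref{prob.csslmunc} with \eqref{prob.csslmunc-s}; second, a standard representer-theorem decomposition shows that any optimum of \eqref{prob.csslmunc-s} may be restricted to $\mathrm{span}\{\Phi(\bx_i)\}_{i=1}^\ell$, at which point the problem coincides with \eqref{prob.primaldual1}.

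As a first step I would verify the objective identity $g(\ba,r,t) = f(\ba,s,t)$ under $r = \|\ba\|^2 - s$: the regularizer transforms as $\nu r/2 = \nu(\|\ba\|^2-s)/2$, and each loss satisfies $\varphi_i(\ba,r,t) = \psi_i(\ba,s,t)$ because $\|\Phi(\bx_i)-\ba\|^2 - r = k_{ii} - 2\langle\Phi(\bx_i),\ba\rangle + s$. Under the hypothesis $\nu+\mu < m/\ell$, Theorem~\ref{pro:non_degenerate} guarantees that any optimum of \eqref{prob.csslmunc-s} yields $r_* = \|\ba_*\|^2 - s_* \ge 0$ and is simultaneously optimal for \eqref{prob.csslmunc} (and \eqref{prob.unc}), so this change of variables is lossless at the optimum.

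For the representer step, I would decompose $\ba = \ba_\parallel + \ba_\perp$ where $\ba_\parallel \in \mathcal{S} := \mathrm{span}\{\Phi(\bx_i)\}_{i=1}^\ell$ and $\ba_\perp \perp \mathcal{S}$. The reproducing property gives $\langle\Phi(\bx_i),\ba\rangle = \langle\Phi(\bx_i),\ba_\parallel\rangle$ for every $i$, so each $\psi_i(\ba,s,t) = \psi_i(\ba_\parallel,s,t)$; meanwhile $\|\ba\|^2 = \|\ba_\parallel\|^2 + \|\ba_\perp\|^2$. Since $\nu > 0$, substituting $\ba_\parallel$ for $\ba$ cannot increase $f$, and strictly decreases it unless $\ba_\perp = \bm 0$. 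Hence every minimizer lies in $\mathcal{S}$, and because the kernel is positive definite the family $\{\Phi(\bx_i)\}$ is linearly independent, giving a \emph{unique} representation $\ba_* = \sum_i \beta_i^* \Phi(\bx_i)$. Substituting this ansatz into $f$ and using $\|\ba\|^2 = \gb^\top \bK \gb$ and $\langle\Phi(\bx_i),\ba\rangle = (\bK\gb)_i$, the restricted problem matches \eqref{prob.primaldual1} up to the common scaling factor $1/\ell$ on the losses, so the minimizers correspond exactly.

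Both directions of the ``if and only if'' then follow by tracing the equivalences: an optimal $(\ba_*,r_*,t_*)$ for \eqref{prob.csslmunc} yields $s_* = \|\ba_*\|^2 - r_*$ and, via the representer step, a $\gb_*$ optimal for \eqref{prob.primaldual1}; conversely, an optimal $(\gb_*,s_*,t_*)$ produces $\ba_* = \sum_i \beta_i^* \Phi(\bx_i)$ and $r_* = \|\ba_*\|^2 - s_*$, with $r_* \ge 0$ by Theorem~\ref{pro:non_degenerate}, so $(\ba_*,r_*,t_*)$ is optimal for \eqref{prob.csslmunc}. The main obstacle I anticipate is notational bookkeeping rather than conceptual difficulty: coordinating the roles of $r$, $s$, and the free variable $t$ across the three formulations and pinpointing exactly where positive definiteness of $k$ is needed (namely, to upgrade the existence of coefficients $\beta_i^*$ to uniqueness, and to legitimize the quadratic form $\gb^\top \bK \gb$).
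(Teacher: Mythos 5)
Your proposal is correct and follows essentially the same route as the paper: an orthogonal decomposition of $\ba$ onto $\mathrm{span}\{\Phi(\bx_i)\}$ showing the perpendicular component can only increase the objective through the $\tfrac{\nu}{2}\|\ba_\perp\|^2$ term, followed by tracing the correspondence back and forth. The only (immaterial) difference is that you carry out the projection in the $(\ba,s,t)$ coordinates with $s$ held fixed, whereas the paper works in $(\ba,r,t)$ and compensates by shifting $r$ down by $\|\ba_*^\perp\|^2$ — these are the same manipulation under the change of variable $s=\|\ba\|^2-r$.
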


\begin{proof}
Define
$$
H_\ell:={\rm span}\{\Phi(\bx_1),\cdots,\Phi(\bx_\ell)\}=\left\{\ba\in H: \ba(\cdot)=\sum_{i=1}^\ell\beta_ik(\bx_i,\cdot),\gb\in\R^\ell\right\}.
$$
We first show that if $(\ba_*,r_*, t_*)$ is optimal for \eqref{prob.csslmunc}, then $\ba_*\in H_\ell$.
By contradiction, suppose that $\ba_*\notin H_\ell$.
 Since $H_\ell$ is a linear subspace of $H$, we can decompose $\ba_*\in H$ as  $\ba_*=\bar\ba_*+\ba_*^\perp$ with $\bar\ba_*\in H_\ell$ and $\bm 0\neq \ba_*^\perp\in H_\ell^{\perp}$ where $H_\ell^{\perp}$ is the orthogonal complements of $H_\ell$.
 Moreover, since $\Phi(\bx_i)\in H_\ell$, the Pythagorean theorem then gives
\[
\|\Phi(\bx_i)-\ba_*\|^2=\|\Phi(\bx_i)-\bar\ba_*\|^2+\|\ba_*^\perp\|^2 \ {\rm for}\ i\in[1,\ell].
\]
Elementary computations show that
\[
g(\ba_*,r_*,t_*)=g(\bar\ba_*,r_*-\|\ba_*^\perp\|^2,t_*)+ \tfrac{\nu}{2}\|\ba_*^{\perp}\|^2.
\]
Thus
$$
g(\ba_*,r_*,t_*)>g(\bar\ba_*,r_*-\|\ba_*^\perp\|^2,t_*)
$$
because $\ba_*^\perp\neq \bm 0$. This violates the global optimality of $(\ba_*,r_*,t_*)$ for \eqref{prob.csslmunc}. Hence  $\ba_*\in H_\ell$.

Conversely, if $(\gb^*, s_*, t_*)$ is optimal for \eqref{prob.primaldual1}, then $(\ba_*,r_*,t_*) = (\sum_{i=1}^\ell\beta_i^*\Phi(\bx_i), \|\ba_*\|^2- s_*,   t_*)$ must be optimal for  \eqref{prob.csslmunc}. Otherwise, there exists  $(\hat \ba, \hat r, \hat t) \ne (\ba_*,r_*,t_*)$ such
that $g(\hat \ba, \hat r, \hat t) < g(\ba_*,r_*,t_*)$. However, we have shown there exists a decomposition  $\hat \ba = \sum_{i=1}^\ell \hat \beta_i\Phi(\bx_i)$ such that $(\hat \gb, \|\hat\ba\|_2^2 - \hat r, \hat t)$ yields a smaller objective \eqref{prob.primaldual1}---a contradiction.  Therefore,  $(\ba_*,r_*,t_*) $ is optimal for \eqref{prob.csslmunc}.
%
%
%
%
%
\end{proof}


 As we show in the next, for other choices of parameter values $\nu, \mu$ and $b$,  although the constraint $r \ge 0$ is needed,     the global optimal solution of \eqref{prob.csslmunc} can still be found by solving convex problems.  This result is summarized in the following theorem. The proof is provided in Appendix \ref{sec:appendB}.

\begin{theorem}\label{prop:SSLM-De}
Suppose   $\nu+\mu \ge m/{\ell}$ is satisfied.

$(\romannumeral1)$ If $\mu=0$, the infimum  of \eqref{prob.unc}   is attained at
	\[
	\Omega : = \Big\{(\hat \ba, r,  t):    0 \le r\le \min_{ y_i=1} \{\|\Phi(\bx_i)-\hat\ba\|^2\},      r+t \le \min_{ y_i=-1} \{\|\Phi(\bx_i)-\hat\ba\|^2\}, (\nu-m/\ell)r=0,t\ge0 \Big\}.  
	\]
In particular,
$
r_* = 0,   t_* = \min\limits_{ y_i=-1} \|\Phi(\bx_i)-\hat\ba\|^2,  \ba_* = \hat \ba
$
is an optimal solution of \eqref{prob.unc}.

$(\romannumeral2)$ If $0 < \mu< m/{\ell}$,
	 $(\ba_*, z_*)$ is optimal for
  \begin{equation}\label{prob.qp} 
	\begin{array}{cl}
		\mathop{\Min}\limits_{\ba\in H, z\in\R,\gx\in\R^{n}} & \frac{\ell\lambda}{2}\|\ba\|^2+\frac{\ell\mu}{2} z-  m\langle  \hat\ba, \ba\rangle +b\sum\limits_{i=m+1}^\ell\xi_i\\
		\ST & -\frac{k_{ii}}{2}+ \langle \Phi(\bx_i), \ba\rangle -\frac{z}{2} - \xi_i\le 0, \   \xi_i\ge 0, i\in [m+1,\ell]
	\end{array}
\end{equation}
with $\lambda=m/\ell-\mu$,
	then the solution satisfies $\|\ba_*\|^2 \geq z_*$, and $(\ba_*, 0, \|\ba_*\|^2 - z_*)$ is optimal for \eqref{prob.unc}.
	Conversely, if $(\ba_*, r_*, t_*)$ is optimal for \eqref{prob.unc},   $(\ba_*, \| \ba_* \|^2 - (r_* + t_*))$ is optimal for \eqref{prob.qp}.

$(\romannumeral3)$  If $ \mu  =   m/\ell$,
	 $(\ba_*, z_*)$ is optimal for
 \begin{equation}\label{prob.lp}
\begin{array}{cl}
\mathop{\Min}\limits_{a\in H, z \in \mathbb{R}, \gx\in\mathbb{R}^n} &  \frac{\ell\mu}{2} z- m\langle \hat\ba, \ba\rangle + b\sum\limits_{i=m+1}^l \xi_i \\
\ST & -\frac{k_{ii}}{2}+ \langle \Phi(\bx_i), \ba\rangle -\frac{z}{2} - \xi_i\le 0, \   \xi_i\ge 0, i\in [m+1,\ell],
\end{array}
\end{equation}
	then the solution satisfies $\|\ba_*\|^2 \geq z_*$, and $(\ba_*, 0, \|\ba_*\|^2 - z_*)$ is optimal for \eqref{prob.unc}.
	Conversely, if $(\ba_*, r_*, t_*)$ is optimal for \eqref{prob.unc},  $(\ba_*, \| \ba_* \|^2 - (r_* + t_*))$ is optimal for \eqref{prob.lp}.
\end{theorem}

{
The margin parameter $\mu$ must be chosen carefully to balance the effects of the margin and hinge-loss terms. First, 
$\nu + \mu \ge m/\ell$ ensures the combined effect of volume and margin terms can be supported by the positive examples, preventing the objective from decreasing without bound. Second, $0<\mu < m/\ell$  guarantees that the margin term contributes to discrimination while remaining small enough for the positive hinge losses to counterbalance it. Finally, $\mu = m/\ell$ represents a critical point where the margin term is exactly balanced by the positive examples’ contribution, marking the boundary beyond which the objective would become unbounded.
}

The entire scheme of the proposed CSSLM is described in the flowchart in \Cref{scheme.csslm}.
 The procedure starts with choosing the appropriate parameter values of $\mu, \nu$ and then decides which convex problems to solve, or the solution is trivially obtained.
 Figure \ref{fig:parameters_distribuation} depicts the optimal $r_*$ and $t_*$ of \eqref{prob.unc} dependent on different choices of $\mu$ and $\nu$.

 \begin{figure}[h]
\begin{tikzpicture}[node distance=2cm]
\node (start) [draw] {Input $(\nu, \mu, b)$};
\node (para1) [draw, rounded corners, right of=start,xshift=2cm] {$\mu \le \min\{m/{\ell},bn/{\ell}\}$};
\node (inf) [draw, below of=para1] {return $-\infty$};
\node (para2) [draw, rounded corners, right of=para1,xshift=2cm] {$\mu+\nu < m/{\ell}$};
\node (para3) [draw, rounded corners, below of=para2,xshift=1.5cm,yshift=0.5cm] {$0<\mu <m/{\ell}$};
\node (para4) [draw, rounded corners, below of=para3,yshift=1cm] {$\mu = m/{\ell}$};
\node (para5) [draw, rounded corners, below of=para4,yshift=1cm] {$\mu = 0$};
\node (prob1) [draw, right of=para2, xshift=2.0cm] {solve QP   \eqref{prob.primal1}};
\node (prob2) [draw, right of=para3, xshift=0.8cm] {solve QP  \eqref{prob.qp}};
\node (prob3) [draw, right of=para4, xshift=0.8cm] {solve LP  \eqref{prob.lp}};
\node (prob4) [draw, right of=para5, xshift=0.5cm] {return $  (\hat \ba, 0, 0) $};

\draw [->] (start) -- (para1);
\draw [->] (para1) -- node[anchor=east] {No} (inf);
\draw [->] (para1) -- node[anchor=south] {Yes} (para2);

\draw [->] (para2) -- node[anchor=south] {Yes} (prob1);
\draw [->] (para2) --node[anchor=east] {No} (8,-1.5)--(para3);
\draw [->] (8,-1.5)--(8,-2.5)--(para4);
\draw [->] (8,-2.5)--(8,-3.5)--(para5);

\draw [->] (para3)--(prob2);
\draw [->] (para4)--(prob3);
\draw [->] (para5)--(prob4);
\end{tikzpicture}
 \caption{The flowchart of CSSLM.} \label{scheme.csslm}
\end{figure}

\begin{figure}[h]
	\centering
	\includegraphics[width=0.6\linewidth]{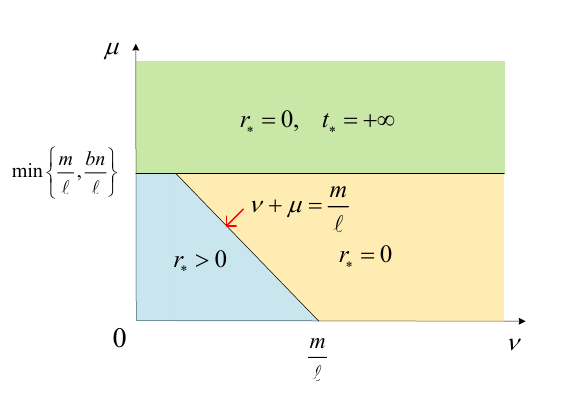}
	\caption{the optimal $r_*$ and $t_*$ of \eqref{prob.unc} dependent on different choices of $\mu$ and $\nu$.}
	\label{fig:parameters_distribuation}
\end{figure}

\section{Connection to SSLM and SVDD}\label{sec.connect}
We have proved that our proposed method can find the \emph{global} minimizer $(\ba_*, r_*, t_*)$ of \eqref{prob.unc}  for  different scenarios of parameters
   $\nu$, $\mu$ and $b$. Now, we show our proposed model can easily render the \emph{global} optimal solutions of SSLM \eqref{prob.sslm} and SVDD (including the one-class
   soft-margin SVDD \eqref{prob.softsvdd} and the two-class soft-margin SVDD \eqref{prob.negativesvdd}).

Letting  $r=R^2$   (or, $R^2 = \|\ba\|^2 - s$ in $f(\ba, s, t)$)  and $  t=\rho^2$,   \eqref{prob.unc} reverts to
\begin{equation}\label{prob.unc1}  
\begin{aligned}
\min\limits_{\ba\in H, R, \rho\in\R }\   \tfrac{\nu}{2}\left(R^2 -\tfrac{\mu}{\nu} \rho^2   +   \tfrac{1}{\nu\ell}\sum\limits_{i=1}^m(d_i^2-R^2)_+ +\tfrac{b}{ \nu\ell}\sum\limits_{i=m+1}^\ell( R^2 + \rho^2  -d_i^2)_+\right)
\end{aligned}
\end{equation}
with $d_i=\|\Phi(\bx_i)-\ba\|$. It is the unconstrained version of the SSLM problem \eqref{prob.sslm} with parameters chosen  as
\begin{equation}\label{eq:parameter2sslm}
(\bar{\nu}, \bar{\nu}_1,\bar{\nu}_2) = \left(\frac{\mu}{\nu}, \frac{\ell\nu}{m}, \frac{\ell\nu}{nb}\right)
\end{equation}
where the objective   of \eqref{prob.unc1} is   that of \eqref{prob.sslm} scaled by $\nu/2$.
 Since \eqref{prob.unc1} and \eqref{prob.sslm}  share the same global minimizers, then $(\ba_*, R_*, \rho_*)$ is the \emph{global} minimizer of \eqref{prob.sslm} with
\begin{equation}\label{eq:rt2R-rho}
 R_* = \sqrt{r_*} \text{ and }  \rho_*  = \sqrt{ t_*}.
\end{equation}

On the other hand, setting  $\mu=0, C = 1/(\nu \ell)$ and $D = b/(\nu \ell)$  in \eqref{prob.unc1}  yields
\begin{equation}\label{prob.unc2} 
\begin{array}{cl}
 \min\limits_{\ba\in H, R, \rho\in\R }  \hat g(\ba, R, \rho)  :=\frac{\nu}{2}\left( R^2  + C \sum\limits_{i=1}^m(d_i^2-R^2)_+ +D\sum\limits_{i=m+1}^\ell{(R^2 + \rho^2  -d_i^2)}_+\right).
\end{array}
\end{equation}
If $m=\ell$,  then the last summation in the objective function of \eqref{prob.unc2} disappears and
 \eqref{prob.unc2} is obviously equivalent to the one-class soft-margin SVDD problem \eqref{prob.softsvdd}.  Hence, $(\ba_*,\sqrt{r_*})$ is the \emph{global} minimizer of \eqref{prob.softsvdd}.
If  $m< \ell$,  we show in the following theorem that \eqref{prob.unc2} is equivalent to the unconstrained version of the two-class soft-margin SVDD \eqref{prob.negativesvdd}, which can be rewritten as
\begin{equation}\label{prob.unc3}
 \min\limits_{\ba\in H, R\in\R }\  \hat g(\ba, R, 0).
\end{equation}

\begin{theorem}\label{thm.svdd}
	Suppose $\mu = 0$ and $m < \ell$.
For any $(\ba_*, R_*, \rho_*)$ optimal for \eqref{prob.unc2},  $(\ba_*, R_*)$ is globally optimal for   \eqref{prob.unc3}.  Conversely,   for any $(\ba_*, R_*)$   optimal for \eqref{prob.unc3},
$  (\ba_*, R_*, 0)$ is globally optimal for \eqref{prob.unc2}.
\end{theorem}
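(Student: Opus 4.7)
The plan is to exploit a simple monotonicity observation: because $\mu=0$ has already been imposed in passing from \eqref{prob.unc} to \eqref{prob.unc2}, the margin parameter $\rho$ enters $\hat g$ only through the quantity $\rho^2$ inside the hinge terms $[R^2+\rho^2-d_i^2]_+$ for $i\in[m+1,\ell]$. Since $z\mapsto (z)_+$ is non-decreasing and $\rho^2\ge 0$, we have
\begin{equation*}
\hat g(\ba,R,\rho)\ge \hat g(\ba,R,0)\quad\text{for every }(\ba,R,\rho)\in F\times\R\times\R,
\end{equation*}
with equality at $\rho=0$. This single inequality does essentially all the work; the role of the assumption $m<\ell$ is only to guarantee that the sum $\sum_{i=m+1}^\ell$ is nonempty, so that the functional form of $\hat g$ in \eqref{prob.unc2} actually contains the $\rho$-dependent terms (otherwise the statement would degenerate to the one-class SVDD case already handled just above the theorem).

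For the forward direction, let $(\ba_*,R_*,\rho_*)$ be optimal for \eqref{prob.unc2}. Applying the monotonicity inequality gives $\hat g(\ba_*,R_*,\rho_*)\ge \hat g(\ba_*,R_*,0)$, and since $(\ba_*,R_*,0)$ is feasible, optimality of $(\ba_*,R_*,\rho_*)$ reverses the inequality, yielding $\hat g(\ba_*,R_*,\rho_*)=\hat g(\ba_*,R_*,0)$. Then for any $(\ba,R)$ feasible for \eqref{prob.unc3}, feasibility of $(\ba,R,0)$ in \eqref{prob.unc2} combined with the optimality of $(\ba_*,R_*,\rho_*)$ gives $\hat g(\ba_*,R_*,0)=\hat g(\ba_*,R_*,\rho_*)\le \hat g(\ba,R,0)$, so $(\ba_*,R_*)$ is optimal for \eqref{prob.unc3}.

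For the converse, let $(\ba_*,R_*)$ be optimal for \eqref{prob.unc3}. For any $(\ba,R,\rho)$, the monotonicity inequality followed by optimality of $(\ba_*,R_*)$ in \eqref{prob.unc3} yields
\begin{equation*}
\hat g(\ba,R,\rho)\ge \hat g(\ba,R,0)\ge \hat g(\ba_*,R_*,0),
\end{equation*}
which shows $(\ba_*,R_*,0)$ is optimal for \eqref{prob.unc2}.

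There is no real obstacle here; the statement is essentially a bookkeeping lemma saying that, once the reward $-\mu\rho^2$ for enlarging the margin is switched off, there is no incentive to take $\rho\ne 0$, so \eqref{prob.unc2} collapses to the two-class soft-margin SVDD \eqref{prob.unc3}. The only point requiring care is to keep the argument purely about the objective value: feasibility is automatic since $\rho$ is unconstrained in \eqref{prob.unc2}, so setting $\rho=0$ preserves feasibility and the two short chains of inequalities above give both directions without any additional case analysis.
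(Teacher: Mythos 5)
Your proposal is correct and follows essentially the same route as the paper: both rest on the single monotonicity inequality $\hat g(\ba,R,\rho)\ge\hat g(\ba,R,0)$ (the paper phrases it as ``$\hat g$ is increasing in $|\rho|$'') and then conclude both directions via the same two short chains of inequalities. No gaps.
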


\begin{proof}
For any optimal solution $(\ba_*, R_*, \rho_*)$ for \eqref{prob.unc2},
$ \hat g(\ba_*, R_*, \rho_*)   \ge \hat g(\ba_*, R_*, 0)$ since $\hat g$ is increasing in $|\rho|$.
Therefore, $(\ba_*, R_*, 0)$ is also optimal for \eqref{prob.unc2}.  It follows that
 $\hat g(\ba, R, 0)   \ge \hat g(\ba_*, R_*, 0)$ for any $(\ba, R)$, meaning $(\ba_*, R_*)$ is global
 optimal for \eqref{prob.unc3}.

 Conversely,  if $(\ba_*, R_*)$ is optimal for \eqref{prob.unc3}, then
 \[   \hat g(\ba_*, R_*, 0)  \le \hat g(\ba, R, 0)   \le \hat g(\ba, R, \rho)\]
 for any $(\ba, R, \rho)$. Therefore, $(\ba_*, R_*, 0)$ is optimal for \eqref{prob.unc2}.
  \end{proof}

Theorem \ref{pro:non_degenerate} and Theorem~\ref{thm.svdd} imply that when setting $\mu = 0$, $m<\ell$ and $\nu<m/{\ell}$,  our proposed CSSLM  renders the \emph{global} minimizer $(\ba_*, \sqrt{ r_*})$ of \eqref{prob.negativesvdd}
by obtaining an optimal $(\ba_*, r_*, t_*)$ to  \eqref{prob.csslmunc}.

\begin{remark}
 Theorem~\ref{thm.svdd} does not imply that any optimal $\rho_*$ for  \eqref{prob.unc2}  must be zero.
In fact, if there exists $i_* \in [m+1, \ell]$ such that the optimal $(\ba_*, R_*, \rho_*)$ for \eqref{prob.unc2} satisfies
\begin{equation}\label{ass.point}  \|\Phi(\bx_{i_*})-\ba_*\| \le R_*, \end{equation}
then $\hat g(\ba_*, R_*, 0) < \hat g(\ba_*, R_*, \rho)$
for any $\rho > 0$; therefore,  the optimal solution satisfies  $\rho_* = 0$.
On the other hand, if \eqref{ass.point} is not satisfied, meaning
\[ \|\Phi(\bx_{i})-\ba_*\| > R_*, \quad \text{ for any }  i \in [m+1, \ell],\]
then  $(\ba_*, R_*, \rho_*)$   is   globally optimal for \eqref{prob.unc2}  for any $ \rho_* \in [0, \hat\rho]$ with $\hat\rho := \min_{i\in[m+1,\ell]}\{\|\Phi(\bx_{i_*})-\ba_*\|-R_*\}$.
\end{remark}

We summarize the connections between our proposed CSSLM and the state-of-the-art SSLM \eqref{prob.sslm}, one-class SVDD \eqref{prob.softsvdd} and two-class SVDD \eqref{prob.negativesvdd} in Figure \ref{tik.connetion}. Therefore, the global minimizers of SSLM and SVDD can be found by applying polynomial algorithms to solve CSSLM.

\begin{figure}[h]
\centering
\begin{tikzpicture}\begin{scope} 
\node[draw] (A) at (0,0) {$(\ba_*, r_*, t_*)$};
\node[draw] (A1) at (2,1) {$\mu > 0$};
\node[draw] (A2) at (2,-1) {$\mu = 0$};
\node[draw, rounded corners]                   (B) at (5,1) {$(\ba_*, \sqrt{r_*}, \sqrt{t_*})$};
\node[draw, rounded corners]                   (C) at (5,-1) {$(\ba_*, \sqrt{r_*})$};
\node[draw, rounded corners]                   (C1) at (7,-0.5) {$m = \ell $};
\node[draw, rounded corners]                   (C2) at (7,-1.5) {$m < \ell $};
\node[draw] (B') at (12,1) { SSLM};
\node[draw] (C1') at (12,-0.5) {One-class SVDD};
\node[draw] (C2') at (12,-1.5) {Two-class SVDD};

\draw [->] (A) --(0,1)-- (A1);
\draw [->] (A) --(0,-1)-- (A2);
\draw [->] (A1) -- (B);
\draw [->] (A2) -- (C);
\path [->] (B) edge node[above] { Global optimal } (B');
\draw [->] (C) --(5,-0.5)--(C1);
\draw [->] (C) --(5,-1.5)--(C2);
\path [->] (C1) edge node[above] { Global optimal }  (C1');
\path [->] (C2) edge node [above] { Global optimal } (C2');
\end{scope}
\end{tikzpicture}
\caption{Connection to SSLM and SVDD}\label{tik.connetion}
\end{figure}

\section{Existence and Uniqueness of Optimal Solutions}\label{sec:unique_of_solution}

In this section, we state and prove our main results on the existence and uniqueness of
optimal solutions of the problems \eqref{prob.primal1},      \eqref{prob.qp} and \eqref{prob.lp}  involved in our proposed model.

\subsection{Existence of optimal solutions}

One possible and rather common approach for solving the convex optimization problem is to solve its dual problem. Elementary computation show that the Lagrangian dual of \eqref{prob.primal1} admits an explicit form, i.e.
\begin{equation}\label{prob.dual1}\tag{$\mathcal{D}$}
	\begin{array}{ll}
		\mathop{\Max}\limits_{\ga\in\R^\ell} &   -\frac{1}{2\ell\nu}\sum\limits_{i=1}^{\ell}\sum\limits_{j=1}^\ell y_iy_{j}k_{ij}\alpha_i\alpha_j+\frac{1}{2}\sum\limits_{i=1}^\ell y_ik_{ii}\alpha_i\\
		\ST & \sum\limits_{i=1}^\ell y_i\alpha_i=\ell\nu,\quad \sum\limits_{i=m+1}^\ell\alpha_i\ge \ell\mu,\\
	    	& 0\leq\alpha_i\leq 1, i\in [1,m];\quad  0\leq\alpha_i\leq b,i\in [m+1,\ell].
	\end{array}
\end{equation}

\begin{theorem}\label{th:sslmCoSdualgap}
Suppose \eqref{eq:nondegeneratecondition-nu} holds. Problem \eqref{prob.primal1} and its dual problem \eqref{prob.dual1}
are solvable and the strong duality holds at the primal-dual optimal solution. In particular, $(\ba,s,t,\gx, \ga)$ is the primal-dual optimal solution if and only if it satisfies
\begin{subequations}\label{kkt.1}
	\begin{align}
	\|\Phi(\bx_i)-\ba\|^2-r-2\xi_i\le 0, \xi_i\ge 0,&\ \ i\in [1,m],\label{eq:sslmkktpf-r}\\
	q-\|\Phi(\bx_i)-\ba\|^2-2\xi_i\le 0, \xi_i\ge 0, &\ \ i\in [m+1,\ell], \label{eq:sslmkktpf-q}\\
	\ba=\frac{1}{\ell\nu}\sum_{i=1}^\ell y_i\alpha_i\Phi(\bx_i),& \label{prob.dual1F}\\
	\sum_{i=1}^\ell y_i\alpha_i=\ell\nu,\,\, \sum_{i=m+1}^\ell \alpha_i\ge \ell\mu,& \label{prob.dual1F3}\\
	0 \le \alpha_i \le 1,  \ i\in [1,m], \qquad \  0 \le  \alpha_i \le b,& \ i\in [m+1,\ell], \label{prob.dual1F1}\\
	\xi_i (1- \alpha_i)=0, \  \ i\in [1,m], \quad \xi_i (b- \alpha_i)=0,&\   i\in  [m+1,\ell], \label{eq:sslmrelaxation2}\\
	( \|\Phi(\bx_i)-\ba\|^2-r-2\xi_i )   \alpha_i=0,&\ \ i\in [1,m], \label{eq:sslmkktcs-r}\\
	( q-\|\Phi(\bx_i)-\ba\|^2-2\xi_i)  \alpha_i=0,&\ \ i\in [m+1,\ell], \label{eq:sslmkktcs-q}\\
	t\ge 0,  t\big(\sum_{i=m+1}^\ell\alpha_i-\ell\mu\big) =0&    \label{eq:sslmrelaxation3}
	\end{align}
\end{subequations}
	with $r  =  \|\ba\|^2 - s$ and $q=r+t$.
\end{theorem}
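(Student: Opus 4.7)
The problem $(\mathcal{P})$ is a convex QP with affine inequality constraints, so the standard toolbox of convex optimization applies once primal solvability is established. My plan is to (i) show that $(\mathcal{P})$ attains its infimum, (ii) derive $(\mathcal{D})$ as its Lagrangian dual by eliminating the primal variables in closed form, (iii) invoke strong duality from the affine constraint qualification, and (iv) read off the KKT system \eqref{kkt.1}. The hypothesis $\mu+\nu<m/\ell$ enters through Theorem~\ref{pro:non_degenerate} and the coercivity argument in (i).

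\textbf{Step 1: primal solvability.} I would first apply the representer theorem (Theorem~\ref{th:representertheo}) to restrict $\ba$ to $\mathrm{span}\{\Phi(\bx_1),\dots,\Phi(\bx_\ell)\}$, thereby reducing $(\mathcal{P})$ to a finite-dimensional convex QP in $(\gb,s,t,\gx)$ with affine inequality constraints. Feasibility is immediate by taking $\xi_i$ large enough. To get attainment, I would analyze the recession cone: along any unbounded direction, (a) $\|\ba\|\to\infty$ is penalized quadratically by $\tfrac{\ell\nu}{2}\|\ba\|^2$; (b) each $\xi_i\to+\infty$ is penalized linearly with positive coefficient; (c) $s\to+\infty$ forces $\sum_{i=1}^m\xi_i\geq \tfrac{m}{2}s+\mathrm{const}$, so the net coefficient is at least $-\tfrac{\ell\nu}{2}+\tfrac{m}{2}>0$ by $\nu<m/\ell$; (d) $t\to+\infty$ gives a net coefficient of at least $-\tfrac{\ell\mu}{2}+\tfrac{bn}{2}\geq 0$ by the standing assumption \eqref{all.value}; (e) $s\to-\infty$ is penalized by $-\tfrac{\ell\nu}{2}s$. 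This rules out every unbounded recession direction, so the infimum is attained by the Frank--Wolfe theorem for convex quadratics.

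\textbf{Step 2: dual derivation and strong duality.} Introducing multipliers $\alpha_i\geq 0$ for the main inequalities, $\zeta_i\geq 0$ for $\xi_i\geq 0$, and a multiplier for $t\geq 0$, I would form the Lagrangian and compute $\inf$ over each primal variable. Setting $\nabla_\ba L=0$ yields $\ba=\tfrac{1}{\ell\nu}\sum_i y_i\alpha_i\Phi(\bx_i)$; finite $\inf_{s\in\R}$ forces $\sum_i y_i\alpha_i=\ell\nu$; finite $\inf_{t\geq 0}$ forces $\sum_{i=m+1}^\ell\alpha_i\geq\ell\mu$; finite $\inf_{\xi_i\geq 0}$ forces $0\leq\alpha_i\leq 1$ for $i\in[1,m]$ and $0\leq\alpha_i\leq b$ for $i\in[m+1,\ell]$. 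Substituting the formula for $\ba$ back into what remains produces exactly the dual objective in $(\mathcal{D})$. Because all constraints of $(\mathcal{P})$ are affine and the primal value is finite (Step~1), the refined Slater condition for affine systems gives strong duality together with dual attainment.

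\textbf{Step 3: KKT characterization.} Since $(\mathcal{P})$ is convex with affine constraints, the KKT conditions are necessary and sufficient for primal-dual optimality. I would simply enumerate them: primal feasibility reproduces \eqref{eq:sslmkktpf-r}--\eqref{eq:sslmkktpf-q} (after substituting $r=\|\ba\|^2-s$ and $q=r+t$); stationarity in $\ba,s$ gives \eqref{prob.dual1F}--\eqref{prob.dual1F3}, stationarity in $\xi_i$ together with $\zeta_i\geq 0$ gives the box constraints \eqref{prob.dual1F1} and complementary slackness \eqref{eq:sslmrelaxation2}, stationarity in $t$ with $t\geq 0$ gives \eqref{eq:sslmrelaxation3}, and complementary slackness on the main inequalities yields \eqref{eq:sslmkktcs-r}--\eqref{eq:sslmkktcs-q}.

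\textbf{Expected main obstacle.} The only delicate step is Step~1: arguing attainment when $F$ is (possibly) infinite-dimensional. The representer theorem bypasses this cleanly, but one must verify that reducing the search space to the span preserves both primal feasibility and the objective value — which is straightforward since the affine constraints depend on $\ba$ only through $\langle\Phi(\bx_i),\ba\rangle$. After that reduction, the coercivity ledger above is routine, but it is where $\nu+\mu<m/\ell$ combined with \eqref{all.value} is indispensable; without these bounds, the argument under (c) or (d) fails and the infimum is $-\infty$, consistent with Lemma~\ref{prop:SSLMwj}.
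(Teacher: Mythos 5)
Your overall architecture matches the paper's in Steps 2 and 3 (Lagrangian elimination of $\ba,s,t,\gx$ to get \eqref{prob.dual1}, strong duality from affine constraints, KKT read-off), but your Step 1 takes a genuinely different route. The paper does not do a recession analysis of the primal at all: it exhibits an explicit dual feasible point, $\alpha_i=\ell(\nu+\mu)/m$ for $i\in[1,m]$ and $\alpha_i=\ell\mu/n$ for $i\in[m+1,\ell]$ (feasible precisely because $\nu+\mu\le m/\ell$ and $\mu\le bn/\ell$), concludes via weak duality that the primal is bounded below, and then invokes Frank--Wolfe for attainment of both problems. That route is shorter and makes the role of the hypothesis completely transparent; your route is more self-contained on the primal side but carries the burden of a correct recession-cone computation.

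That is where your proposal has a real gap. Boundedness along recession directions must be verified for \emph{joint} directions in the recession cone, not one coordinate at a time, and the ledger in items (c) and (d) misses the critical one: take $d_{\ba}=0$, $d_s=d_t=1$, $d_{\xi_i}=\tfrac12$ for $i\in[1,m]$ and $d_{\xi_i}=0$ for $i\in[m+1,\ell]$. Because the negative-class constraints involve $s$ and $t$ only through $t-s$, this direction leaves them invariant, so the $+\tfrac{bn}{2}$ contribution you credit in (d) never materializes; the directional derivative of the objective is $\tfrac12\big(m-\ell(\nu+\mu)\big)$, which is nonnegative only under the combined hypothesis $\nu+\mu\le m/\ell$ and can be negative when merely $\nu<m/\ell$ and $\mu\le bn/\ell$ hold separately. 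So as written, (c) and (d) do not establish boundedness below; you need to parametrize a general direction $(0,d_s,d_t,d_{\gx})$ in the recession cone and minimize the linear rate subject to $d_{\xi_i}\ge(\tfrac{d_s}{2})_+$ for positive $i$, $d_{\xi_i}\ge(\tfrac{d_t-d_s}{2})_+$ for negative $i$, $d_t\ge0$, which does close the argument under the stated hypotheses. Everything else (the representer-theorem reduction, the dual derivation, the use of the affine constraint qualification, and the KKT enumeration) is sound and consistent with the paper.
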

\begin{proof} Problem \eqref{prob.primal1} is clearly feasible and convex. The dual problem \eqref{prob.dual1} is also a convex quadratic program. Moreover, define $\alpha_i=\ell(\nu+\mu)/m$ if $i\in [1,m]$; Otherwise $\alpha_i=\ell\mu/n$.
Then this $\ga$ is feasible to \eqref{prob.dual1} since \eqref{all.value} and \eqref{eq:nondegeneratecondition-nu}.
Owing to the well-known Frank-Wolfe theorem (\citeauthor{FrankWolfe56},\citeyear{FrankWolfe56}), both the quadratic programming problem \eqref{prob.primal1} as well as its dual problem \eqref{prob.dual1} are solvable. By  Proposition 3.4.2 (a) in \cite{Bertsekas99}, strong duality holds at the optimal primal-dual solution. The necessary and sufficient optimality conditions can be derived by applying the Karush-Kuhn-Tucker optimality conditions to \eqref{prob.primal1}.
\end{proof}

As for problem \eqref{prob.qp},  its Lagrangian dual problem admits the following explicit form
\begin{equation}\label{prob.dual2}
	\begin{array}{ll}
		\mathop{\Max}\limits_{\ga\in\R^n} & -\frac{1}{2 \ell \lambda} \sum\limits_{i,j=m+1}^{\ell} k_{ij}\alpha_i\alpha_j -\frac{1}{2} \sum\limits_{i=m+1}^\ell\left(k_{ii}-\frac{2}{l \lambda}\sum\limits_{j=1}^m k_{ji}\right)\alpha_i  - \frac{1}{2 \ell \lambda}\sum\limits_{i,j=1}^mk_{ij}\\
		\ST &   \sum\limits_{i=m+1}^\ell \alpha_i = \ell\mu,\\
		&    0\leq\alpha_i\leq b, i\in [m+1,\ell].
	\end{array}
\end{equation}
 We show in the next theorem that the choice of parameters guarantee that the optimal solutions of \eqref{prob.qp} always exist.

\begin{theorem}\label{thm:solvable2}
	Suppose $\mu+\nu \ge m/{\ell}$ and $\mu < m/{\ell}$  holds.
	Problem \eqref{prob.qp} and its dual problem \eqref{prob.dual2}
	are solvable and the strong duality holds at the primal-dual optimal solution. In particular, $(\ba,z,\gx, \ga)$ is the primal-dual optimal solution if and only if it satisfies
	\begin{subequations}\label{kkt.2}
		\begin{align}
			\ba=   \frac{1}{\ell\lambda}\left[ \sum_{i=1}^m\Phi(\bx_i) -     \sum_{i=m+1}^\ell \alpha_i\Phi(\bx_i)\right],& \label{prob.dual1F2}\\
			\sum_{i=m+1}^\ell \alpha_i = \ell\mu,& \label{prob.dual1F32}\\
			t-\|\Phi(\bx_i)-\ba\|^2-2\xi_i\le 0,\ \xi_i \ge 0, & \    i\in [m+1,\ell], \label{eq:sslmkktpf-q.2}\\
			\xi_i (b- \alpha_i)=0,  \    0 \le  \alpha_i \le b,& \ i\in [m+1,\ell], \label{prob.dual1F12}\\
			( t-\|\Phi(\bx_i)-\ba\|^2-2\xi_i)  \alpha_i=0,&\  i\in [m+1,\ell] \label{eq:sslmkktcs-q2}
		\end{align}
	\end{subequations}
	with $t  =  \|\ba\|^2 - z$.
\end{theorem}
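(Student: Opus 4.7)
The plan is to mirror the proof of Theorem~\ref{th:sslmCoSdualgap} essentially verbatim, exploiting the fact that \eqref{prob.qp} is itself a convex QP (here strictly convex in $\ba$ thanks to the coefficient $\lambda=m/\ell-\mu>0$ granted by $\mu<m/\ell$), and that \eqref{prob.dual2} is a concave QP over a nonempty bounded polyhedron. The main engine is again the Frank--Wolfe theorem together with \cite[Proposition 3.4.2 (a)]{Bertsekas99}; the technical content is just the (short) verifications of convexity, feasibility of both primal and dual, and the routine KKT calculation.

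First I would record that \eqref{prob.qp} is convex: the objective is quadratic with Hessian block $\ell\lambda\,\bI \succ 0$ in the $\ba$-variable (all other variables enter linearly), and all constraints are linear. It is trivially feasible, e.g.\ $(\ba,z,\gx)=(0,0,\gx)$ with $\xi_i$ chosen large enough to satisfy the inequalities. Next I would check that \eqref{prob.dual2} has a feasible point. The only constraints are $\sum_{i=m+1}^\ell \alpha_i=\ell\mu$ and $0\le \alpha_i\le b$, so the uniform choice $\alpha_i=\ell\mu/n$ for $i\in[m+1,\ell]$ works provided $\ell\mu/n\le b$, which is exactly $\mu\le bn/\ell$, part of the standing assumption \eqref{all.value}. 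Hence the dual feasible set is a nonempty bounded polyhedron.

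With both primal and dual feasible and being (concave or convex) QPs, the Frank--Wolfe theorem \cite{FrankWolfe56} guarantees that both problems attain their optima. Strong duality then follows from \cite[Proposition 3.4.2 (a)]{Bertsekas99}, exactly as in Theorem~\ref{th:sslmCoSdualgap}. The KKT conditions are then necessary and sufficient by convexity plus Slater (the feasibility constructions above double as Slater points, since all inequality constraints are linear, so Slater is automatic). Reading off stationarity in $\ba$ gives \eqref{prob.dual1F2} with $\lambda=m/\ell-\mu$ (noting that the linear term $-m\langle\hat\ba,\ba\rangle = -\sum_{i=1}^m \langle\Phi(\bx_i),\ba\rangle$ contributes the $\sum_{i=1}^m\Phi(\bx_i)$ piece), stationarity in $z$ gives \eqref{prob.dual1F32} via $\ell\mu/2 - \tfrac12\sum_{i=m+1}^\ell \alpha_i =0$, primal/dual feasibility yield \eqref{eq:sslmkktpf-q.2} and \eqref{prob.dual1F12}, and complementary slackness produces \eqref{eq:sslmkktcs-q2} and the $\xi_i(b-\alpha_i)=0$ relation. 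The identification $t=\|\ba\|^2-z$ is simply the change of variable \eqref{eq:transform-r-s} applied to the $(s,t)$-form of the model in the degenerate case.

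The only place that takes any care is the stationarity computation in $\ba$, because the bookkeeping of the linear term $-m\langle\hat\ba,\ba\rangle$ against the constraint multipliers for $i\in[m+1,\ell]$ has to reproduce exactly \eqref{prob.dual1F2}; I expect this to be a one-line calculation once one recalls $\hat\ba=\tfrac{1}{m}\sum_{i=1}^m\Phi(\bx_i)$ from \eqref{eq:degenerate-spherecenter}. Everything else is essentially a rerun of the argument used for Theorem~\ref{th:sslmCoSdualgap} on a slightly smaller problem, so no new obstacles arise.
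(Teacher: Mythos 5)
Your proposal is correct and follows essentially the same route as the paper: convexity of \eqref{prob.qp} from $\lambda=m/\ell-\mu>0$, dual feasibility via the uniform point $\alpha_i=\ell\mu/n$ (justified by \eqref{all.value}), solvability from the Frank--Wolfe theorem, strong duality from \cite[Proposition 3.4.2 (a)]{Bertsekas99}, and the KKT system read off at the end. The only difference is that you spell out the stationarity computations that the paper leaves as a routine application of the KKT conditions.
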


\begin{proof}
	First of all, $\lambda=m/\ell-\mu>0$ implies \eqref{prob.qp} is well defined and convex since $\mu < m/{\ell}$. Moreover,
	define $\alpha_i=
	\ell\mu/{n},  i\in [m+1,\ell]$.
	Then this $\ga$ is feasible to \eqref{prob.dual2} since   \eqref{all.value}.
	Owing to the well-known Frank-Wolfe theorem (\citeauthor{FrankWolfe56},\citeyear{FrankWolfe56}), both the quadratic programming problem \eqref{prob.qp} as well as its dual problem \eqref{prob.dual2} are solvable. By Proposition 3.4.2 (a) in \cite{Bertsekas99}, strong duality holds at the optimal primal-dual solution. The necessary and sufficient optimality conditions can be derived by applying the Karush-Kuhn-Tucker optimality conditions to \eqref{prob.qp}.
\end{proof}

 As for problem \eqref{prob.lp}, its Lagrangian dual problem   is given by
\begin{equation}
	\label{prob.lpdual}
	\begin{array}{cl}
		\mathop{\Min}\limits_{\alpha \in \R^n}\; & \sum\limits_{i=m+1}^\ell k_{ii} \alpha_i \\
		\ST  \;                  & \sum\limits_{i=m+1}^\ell \alpha_i  = \ell \mu \\
								 & 0 \leq \alpha_i \leq b,\; i \in [m+1, \ell]\\
								 & \sum\limits_{i=m+1}^\ell \alpha_i \Phi(\bx_i) = \sum\limits_{i=1}^m\Phi(\bx_i).
	\end{array}
\end{equation}
It is trivial that \eqref{prob.lp} is feasiable. If the dual problem \eqref{prob.lpdual} is infeasible,   \eqref{prob.lp}  is unbounded below, neither does problem \eqref{prob.unc}.
A necessary condition for \eqref{prob.lpdual} to be feasible is
$ \mu/b \leq n/\ell$.
However, we show this condition is not sufficient for the existence of the optimal solution to \eqref{prob.lp} with a  counterexample.

\begin{example}
Let $m = n = 1, \ell = 2,\mu=1/2, b=1$, $\bx_1=(1,0)$ with label $y_1=1$ and   $\bx_2=(0,1)$ with label $y_2=-1$. Considering the linear kernel $\Phi(\bx)=\bx$, the dual problem \eqref{prob.lpdual} is infeasible in this case since $(0,\alpha_2)\neq (1,0)$ for any $\alpha_2\in [0,1]$.  Therefore, the corresponding \eqref{prob.lp} is unbounded below.
\end{example}

\subsection{Uniqueness of the optimal sphere center}

In this subsection, we characterize the optimal solution of the proposed \eqref{prob.primal1}. In particular,  we show that the global optimal solution is unique.
 Such a result represents the stark contrast of \eqref{prob.primal1}, and is unrealistic to obtain for traditional methods such as \eqref{prob.negativesvdd} and \eqref {prob.sslm} due to the lack of
convexity.

First of all, the optimal center of the sphere is unique, since  \eqref{prob.primal1} and \eqref{prob.qp}  is partially strictly convex with respect to $\ba$.

\begin{theorem}\label{th-unique_c}
The optimal $\ba_*$ for \eqref{prob.primal1} and \eqref{prob.qp} is unique if $m/\ell>\mu$.
\end{theorem}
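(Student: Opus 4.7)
The plan is to exploit that in both \eqref{prob.primal1} and \eqref{prob.qp}, the objective is a strictly convex quadratic in the variable $\ba$ while linear in the remaining variables, and that the feasible sets are defined by affine inequalities. The argument is then a standard strict-convexity uniqueness argument.

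First, I would verify strict convexity in $\ba$ in each case. In \eqref{prob.primal1}, the decision variable $\ba$ appears in the objective $h(\ba,s,t,\gx)$ only through the term $\tfrac{\ell\nu}{2}\|\ba\|^2$, which is strictly convex in $\ba$ since $\nu>0$; the remaining terms are linear in $(s,t,\gx)$. In \eqref{prob.qp}, $\ba$ appears through $\tfrac{\ell\lambda}{2}\|\ba\|^2 - m\langle\hat\ba,\ba\rangle$ with $\lambda=m/\ell-\mu$, and the hypothesis $m/\ell>\mu$ gives $\lambda>0$, so again this piece is strictly convex in $\ba$ while the rest of the objective is linear in $(z,\gx)$. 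Second, I would note that after the substitution $s=\|\ba\|^2-r$ (respectively its analog in \eqref{prob.qp}) all constraints are affine in the decision variables, so the feasible region is convex and in particular closed under convex combinations.

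With these two pieces in place, the uniqueness proof is immediate. Suppose for contradiction that $(\ba_1,s_1,t_1,\gx_1)$ and $(\ba_2,s_2,t_2,\gx_2)$ are two optimal solutions of \eqref{prob.primal1} with $\ba_1\neq\ba_2$, both attaining the optimal value $h_*$. Form the midpoint $(\bar\ba,\bar s,\bar t,\bar\gx)=\tfrac{1}{2}((\ba_1,s_1,t_1,\gx_1)+(\ba_2,s_2,t_2,\gx_2))$. Linearity of the constraints guarantees feasibility. Strict convexity of $\ba\mapsto \tfrac{\ell\nu}{2}\|\ba\|^2$ gives
$$\tfrac{\ell\nu}{2}\|\bar\ba\|^2 < \tfrac{1}{2}\bigl(\tfrac{\ell\nu}{2}\|\ba_1\|^2+\tfrac{\ell\nu}{2}\|\ba_2\|^2\bigr),$$
while the other terms of $h$ average exactly. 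Hence $h(\bar\ba,\bar s,\bar t,\bar\gx)<h_*$, contradicting the optimality of the two given solutions. The verbatim same argument, applied to the decomposition $\ba\mapsto \tfrac{\ell\lambda}{2}\|\ba\|^2 - m\langle\hat\ba,\ba\rangle$ in place of $\tfrac{\ell\nu}{2}\|\ba\|^2$, handles \eqref{prob.qp}.

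I do not anticipate a substantive obstacle; the only point to be a bit careful about is making sure that after the change of variables $s=\|\ba\|^2-r$ and $z=\|\ba\|^2-t$ the problems \eqref{prob.primal1} and \eqref{prob.qp} are genuinely linearly constrained, so that feasibility is preserved by taking midpoints, and that the strict convexity in $\ba$ comes from $\nu>0$ in the first problem and from $\lambda=m/\ell-\mu>0$ (i.e.\ precisely the hypothesis $m/\ell>\mu$) in the second.
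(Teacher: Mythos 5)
Your proof is correct and follows essentially the same route as the paper: a midpoint argument exploiting strict convexity of the quadratic term in $\ba$ (from $\nu>0$ for \eqref{prob.primal1} and $\lambda=m/\ell-\mu>0$ for \eqref{prob.qp}) together with convexity of everything else. The only cosmetic difference is that the paper runs the argument on the unconstrained hinge-loss form \eqref{prob.csslmunc-s} rather than on the slack-variable QP directly, which changes nothing of substance.
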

\begin{proof} It suffices to show that for any optimal solutions  $(\ba',s',t')$ and $(\ba'', s'',t'')$ for \eqref{prob.csslmunc-s},  it
holds that $\ba'=\ba''$.  Suppose
 by contradiction that  $\ba'\neq \ba''$.
By  convexity,
$$
\bu :=\tfrac{1}{2}(\ba',s',t')+\tfrac{1}{2}(\ba'',s'',t'')$$
is also optimal for \eqref{prob.csslmunc-s}.  It follows  from $\|\tfrac{1}{2}\ba'+\tfrac{1}{2}\ba''\|^2 < \tfrac{1}{2}\|\ba'\|^2+\tfrac{1}{2}\|\ba''\|^2$
that \[ f(\bu) < \tfrac{1}{2}f(\ba',s',t')+\tfrac{1}{2}f(\ba'',s'',t'') = f(\ba',s',t'),\] contradicting with
the optimality of $(\ba',s',t')$. Therefore, the optimal $\ba_*$ for \eqref{prob.primal1} is unique.  It follows from the same argument
that $\ba_*$ is also unique for \eqref{prob.qp} if $m/\ell > \mu$.
\end{proof}

\subsection{Uniqueness of the optimal radius and margin}

Since the sphere center is unique for \eqref{prob.primal1}, now we fix the sphere center and investigate the uniqueness of the radius and margin.  For this purpose, notice that the strong duality holds for \eqref{prob.primal1} and \eqref{prob.dual1}.
A direct result from this property is that for  any optimal
primal variable  $(\ba,s,t,\gx) $ for \eqref{prob.primal1}  and any optimal dual variable $\ga$ for \eqref{prob.dual1},
$(\ba,s,t,\gx,\ga)$ together satisfies the KKT conditions \eqref{kkt.1}.
Therefore, our analysis is based on
 the optimality conditions \eqref{kkt.1}  and the primal optimal solution is characterized via any given optimal dual variable.

Let $\ga^*$ be an optimal solution to the dual problem \eqref{prob.dual1}. Define the following index sets
\[
\begin{aligned}
  \Tcal_+:= & \{i:y_i=1, 0\le \alpha_i^*<1\}, \quad   \Tcal_-:=\{i:y_i=-1, 0\le  \alpha_i^*<b\},\\
 \SVcal_+:= & \{i:y_i=1,  0 < \alpha_i^*\le 1 \},  \   \SVcal_-:=\{i:y_i=-1, 0 < \alpha_i^* \le b \}.
\end{aligned}
\]

\begin{proposition}\label{th:solutionset_rt}
Suppose \eqref{eq:nondegeneratecondition-nu} holds.  Let $\ga^*$ be an optimal solution to the dual problem \eqref{prob.dual1}.
 The unique optimal sphere center for \eqref{prob.primal1} is
\begin{equation}\label{eq:unique-c}
 \ba_*=\frac{1}{\ell\nu}\sum\limits_{i=1}^\ell y_i\alpha_i^*\Phi(\bx_i).
 \end{equation}
 The optimal radius and margin for \eqref{prob.primal1} is
    \begin{equation}\label{eq:prob.optimal1-rt}
    \Gamma_*=\{(r,t) :  r_l\le r\le r_u, q_l\le r+t\le q_u, t\ge 0, t\big(\sum_{i=m+1}^\ell\alpha_i^*-\ell\mu\big)=0\}
    \end{equation}
 with $r_l:=\max\limits_{i\in \Tcal_+}\|\Phi(\bx_i)-\ba_*\|^2$, $r_u:=\min\limits_{i\in \SVcal_+}\|\Phi(\bx_i)-\ba_*\|^2$,  $q_l:=\max\limits_{i\in \SVcal_-}\|\Phi(\bx_i)-\ba_*\|^2$ and $q_u:=\min\limits_{i\in \Tcal_-}\|\Phi(\bx_i)-\ba_*\|^2$, where by custom $\max\emptyset =-\infty$ and $\min\emptyset = \infty$.
\end{proposition}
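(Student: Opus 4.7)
The plan is to read the optimal primal solutions directly off the KKT system \eqref{kkt.1} furnished by Theorem \ref{th:sslmCoSdualgap}, treating the given dual optimum $\ga^*$ as fixed while letting the primal variables $(\ba, s, t, \gx)$ range over the primal optimal set. Uniqueness of $\ba_*$ is already granted by Theorem \ref{th-unique_c}, and the explicit formula \eqref{eq:unique-c} is simply the stationarity identity \eqref{prob.dual1F} rewritten in terms of $\ga^*$.

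For the forward inclusion (any optimal $(r,t)$ lies in $\Gamma_*$), I would take an arbitrary primal optimum, set $r = \|\ba_*\|^2 - s$ and $q = r + t$, and split the analysis by the four index classes. On $\Tcal_+$ the complementary slackness \eqref{eq:sslmrelaxation2} forces $\xi_i = 0$, after which the primal feasibility \eqref{eq:sslmkktpf-r} gives $r \ge \|\Phi(\bx_i) - \ba_*\|^2$; maximizing yields $r \ge r_l$. On $\SVcal_+$ the complementary slackness \eqref{eq:sslmkktcs-r} together with $\alpha_i^* > 0$ forces $\|\Phi(\bx_i) - \ba_*\|^2 = r + 2\xi_i \ge r$; minimizing yields $r \le r_u$. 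Mirror-image arguments on $\Tcal_-$ and $\SVcal_-$ via \eqref{eq:sslmkktpf-q} and \eqref{eq:sslmkktcs-q} produce $q_l \le q \le q_u$, while the conditions on $t$ are exactly \eqref{eq:sslmrelaxation3}.

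For the reverse inclusion, given any $(r,t) \in \Gamma_*$ I would set $\ba = \ba_*$, $s = \|\ba_*\|^2 - r$, and
\[
\xi_i = \tfrac{1}{2}\bigl(\|\Phi(\bx_i)-\ba_*\|^2 - r\bigr)_{+} \text{ for } i\in[1,m], \quad \xi_i = \tfrac{1}{2}\bigl(r+t-\|\Phi(\bx_i)-\ba_*\|^2\bigr)_{+} \text{ for } i\in[m+1,\ell],
\]
then verify that $(\ba_*, s, t, \gx, \ga^*)$ satisfies every relation in \eqref{kkt.1}. The dual-side conditions \eqref{prob.dual1F}--\eqref{prob.dual1F1} depend only on $\ga^*$ and so hold; primal feasibilities \eqref{eq:sslmkktpf-r}, \eqref{eq:sslmkktpf-q} hold by construction; and \eqref{eq:sslmrelaxation3} is built into the definition of $\Gamma_*$. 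The remaining complementary slacknesses split by case: on $\SVcal_+$ the bound $\|\Phi(\bx_i)-\ba_*\|^2 \ge r_u \ge r$ makes $\xi_i$ equal to the indicated positive part, turning \eqref{eq:sslmkktcs-r} into an equality; on $\Tcal_+ \setminus \SVcal_+$ the bound $\|\Phi(\bx_i)-\ba_*\|^2 \le r_l \le r$ forces $\xi_i = 0$, which is compatible with \eqref{eq:sslmrelaxation2} since $\alpha_i^* = 0$; the negative-side verification is symmetric.

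The main subtlety, I expect, is the overlap $\SVcal_+ \cap \Tcal_+ = \{i : 0 < \alpha_i^* < 1\}$ (and its negative-side counterpart), where \eqref{eq:sslmrelaxation2} simultaneously demands $\xi_i = 0$ while \eqref{eq:sslmkktcs-r} demands $r = \|\Phi(\bx_i)-\ba_*\|^2$. This is precisely the situation in which such an index contributes its value both to the max defining $r_l$ and to the min defining $r_u$, forcing $r_l = r_u$ to collapse on that value, so the apparent overdetermination is consistent with the definition of $\Gamma_*$. Nonemptiness of $\Gamma_*$ itself is then automatic from the existence of a primal optimum in Theorem \ref{th:sslmCoSdualgap} together with the forward inclusion already established.
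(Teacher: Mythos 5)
Your proposal is correct and follows essentially the same route as the paper: both read the optimal set off the KKT system \eqref{kkt.1} with the dual optimum $\ga^*$ held fixed, obtain uniqueness of $\ba_*$ from Theorem \ref{th-unique_c} and stationarity \eqref{prob.dual1F}, and translate complementary slackness plus primal feasibility on the four index classes into the inequalities defining $\Gamma_*$. Your explicit construction of the slacks $\xi_i$ for the reverse inclusion, and the remark about indices in $\Tcal_+\cap\SVcal_+$ collapsing $r_l=r_u$, merely spell out details the paper leaves implicit in its "if and only if" statement.
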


\begin{proof} By duality, the  primal-dual pair $(\ba,s,t,\gx, \ga^*)$ is optimal if and only if it satisfies the optimality
 conditions in \eqref{kkt.1} by Theorem \ref{th:sslmCoSdualgap}.
Hence, the vector $ \ba_*$ in \eqref{eq:unique-c} is optimal and unique due to \eqref{prob.dual1F} and Theorem \ref{th-unique_c}.

Since $\ba_*$ is unique, we can treat it as fixed value and let $r=\|\ba_*\|^2-s$ and $q=r+t$.
The complementary slackness condition  \eqref{eq:sslmrelaxation2} 
implies that
 $\xi_i = 0$ for $i \in \Tcal_+\cup\Tcal_-$,  meaning
 $\bx_i,  i \in \Tcal_+\cup\Tcal_-$ are separated correctly by the sphere. The complementary slackness conditions   \eqref{eq:sslmkktcs-r}  and \eqref{eq:sslmkktcs-q} imply
 $\|\Phi(\bx_i)-\ba_*\|^2-r-2\xi_i=0,  i\in \SVcal_+$ and $q-\|\Phi(\bx_i)-\ba_*\|^2-2\xi_i=0, i\in \SVcal_-$. Combining those conditions with the primal feasible conditions \eqref{eq:sslmkktpf-r} and \eqref{eq:sslmkktpf-q}, $(r, t)$ are optimal if and only if  $t\ge 0 $ and the following conditions are satisfied:
\begin{align}
 & r \ge \|\Phi(\bx_i)-\ba_*\|^2 \text{ with }  \xi_i = 0,                                             i \in \Tcal_+,  \nonumber\\
 & r \le \|\Phi(\bx_i)-\ba_*\|^2 \text{ with }  2\xi_i= \|\Phi(\bx_i)-\ba_*\|^2-r,     i \in \SVcal_+,  \nonumber\\
& q \le \|\Phi(\bx_i)-\ba_*\|^2 \text{ with } \xi_i = 0,                                                 i \in \Tcal_-, \nonumber\\
 & q \ge \|\Phi(\bx_i)-\ba_*\|^2 \text{ with } 2\xi_i = q - \|\Phi(\bx_i)-\ba_*\|^2,      i \in \SVcal_-.  \nonumber
 \end{align}
   Then $(r,t)$ is optimal if and only if $(r,t)\in \Gamma_*$.
\end{proof}

\begin{remark} Note that $r$ always lies in a nonnegative finite interval. First, the dual feasibility $\sum_{i=1}^\ell y_i\alpha_i^*=\ell \nu$ and $\nu>0$ imply $\SVcal_+ \ne \emptyset$. Hence $r_u\ge 0$. Secondly, if $\Tcal_+ \neq \emptyset$ then $r_l\ge 0$. Otherwise, $\alpha_i^* = 1$ for all $y_i=1$.  By the equality in   \eqref{prob.dual1F3} one has $\sum_{i=m+1}^\ell \alpha_i^* = m - \ell\nu$. Combining it with \eqref{eq:nondegeneratecondition-nu}, $\sum_{i=m+1}^\ell \alpha_i^*>\ell\mu$, we have $\SVcal_- \neq \emptyset$ and $t_*=0$ by the complementarity in \eqref{eq:sslmrelaxation3}. Hence $q_l\ge 0$ and $r\ge \max\{r_l, q_l\}$.
\end{remark}

We can then use the above characterization of $(r, t)$ to derive the following results on the uniqueness of $(r, t)$.

\begin{theorem}\label{th:unique_rt}
Suppose \eqref{eq:nondegeneratecondition-nu} holds.
 If there exist  $i_+\in [1,m]$ with $\alpha^*_{i_+}\in (0,1)$ and $i_{-}\in [m+1,\ell]$ with $\alpha^*_{i_-}\in(0,b)$, then
$ r_* = \|\Phi(\bx_{i_+})-\ba_*\|^2\  \text{ and } \  t_*= \|\Phi(\bx_{i_-})-\ba_*\|^2 - \|\Phi(\bx_{i_+})-\ba_*\|^2 $ are the unique  optimal radius and margin for \eqref{prob.primal1}, respectively.
If $\sum_{i=m+1}^\ell\alpha_i^*>\ell\mu$ then the optimal margin is $0$ and the optimal radii set is
    \begin{equation}\label{eq:prob.optimal1-r}
    \left[\max\{r_l, q_l\}, \min\{r_u, q_u\}\right].
    \end{equation}
    In particular, if there exist $i_*\in [1,m]$ with $\alpha^*_{i_*}\in (0,1)$ or $i_*\in [m+1,\ell]$ with $\alpha^*_{i_*}\in (0,b)$, then $r_*= \|\Phi(\bx_i)-\ba_*\|^2$ is the unique  optimal radius for \eqref{prob.primal1}.
\end{theorem}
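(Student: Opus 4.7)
The proof will follow directly from the characterization of the optimal radius-margin pair $(r,t) \in \Gamma_*$ established in Proposition \ref{th:solutionset_rt}, so the plan is to translate the hypotheses on the dual variable $\ga^*$ into sharp constraints on the bounds $r_l, r_u, q_l, q_u$ and the complementarity $t(\sum_{i=m+1}^\ell \alpha_i^* - \ell\mu)=0$.

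For the first assertion, the plan is to observe that the hypothesis $\alpha_{i_+}^* \in (0,1)$ places the index $i_+$ simultaneously in $\Tcal_+$ (because $\alpha_{i_+}^* < 1$) and in $\SVcal_+$ (because $\alpha_{i_+}^* > 0$). Hence by the very definition of $r_l$ and $r_u$,
\[
r_l \ge \|\Phi(\bx_{i_+})-\ba_*\|^2 \quad \text{and} \quad r_u \le \|\Phi(\bx_{i_+})-\ba_*\|^2.
\]
Since any $r$ in $\Gamma_*$ must satisfy $r_l \le r \le r_u$, these inequalities collapse into the single value $r_* = \|\Phi(\bx_{i_+})-\ba_*\|^2$. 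An identical argument using $i_- \in \Tcal_-\cap \SVcal_-$ will pin $r_*+t_* = \|\Phi(\bx_{i_-})-\ba_*\|^2$, from which $t_*$ is obtained by subtraction.

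For the second assertion, the plan is to use the complementarity clause in the description of $\Gamma_*$: if $\sum_{i=m+1}^\ell \alpha_i^* > \ell\mu$, then $t(\sum_{i=m+1}^\ell \alpha_i^* - \ell\mu)=0$ forces $t_* = 0$. Substituting $t=0$ into $q_l \le r+t \le q_u$ gives $q_l \le r \le q_u$, which combined with $r_l \le r \le r_u$ yields the stated interval $[\max\{r_l, q_l\}, \min\{r_u, q_u\}]$. For the ``in particular'' case of part three, I would repeat the collapsing argument from part one: any $i_*$ with $\alpha^*_{i_*}\in(0,1)$ forces $r_l = r_u = \|\Phi(\bx_{i_*})-\ba_*\|^2$, while any $i_*$ with $\alpha^*_{i_*}\in(0,b)$ forces $q_l = q_u = \|\Phi(\bx_{i_*})-\ba_*\|^2$, and in either case the common value of the interval's endpoints gives a unique $r_*$ (using $r_*=r_*+t_*$ since $t_*=0$).

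I do not expect a serious obstacle here; the only subtlety is keeping track of the index-set memberships that are equivalent to each of the strict inequalities $0<\alpha_i^*<1$ and $0<\alpha_i^*<b$, and recognizing that $\Gamma_*$ has been explicitly characterized, so existence/feasibility of the claimed $(r_*,t_*)$ is automatic from Theorem \ref{th:sslmCoSdualgap}. The uniqueness follows because in each case the feasible interval degenerates to a single point.
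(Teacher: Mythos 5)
Your proposal is correct and follows essentially the same route as the paper's proof: both arguments place $i_+$ in $\Tcal_+\cap\SVcal_+$ and $i_-$ in $\Tcal_-\cap\SVcal_-$ so that the interval constraints in $\Gamma_*$ from Proposition \ref{th:solutionset_rt} collapse to a single point, and both use the complementarity clause $t(\sum_{i=m+1}^\ell\alpha_i^*-\ell\mu)=0$ to force $t_*=0$ and obtain the radii interval $[\max\{r_l,q_l\},\min\{r_u,q_u\}]$. No gaps.
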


\begin{proof}
Under the assumption, one has $i_+\in \Tcal_+\cap\SVcal_+$ and $i_-\in \Tcal_-\cap\SVcal_-$. Then the optimal radii and margin for \eqref{prob.primal1} in \eqref{eq:prob.optimal1-rt} reduces to the single point
\[
\Gamma_*=\{( \|\Phi(\bx_{i_+})-\ba_*\|^2, \|\Phi(\bx_{i_-})-\ba_*\|^2 - \|\Phi(\bx_{i_+})-\ba_*\|^2)\}.
\]
 If $\sum_{i=m+1}^\ell\alpha_i^*>\ell\mu$ then the optimal margin is $0$ by the complementary condition \eqref{eq:sslmrelaxation3}. Bring $t=0$ into \eqref{eq:prob.optimal1-rt}, one has the desired optimal radii set \eqref{eq:prob.optimal1-r}.

If there exist $i_*\in [1,m]$ with $\alpha^*_{i_*}\in (0,1)$ then $i_*\in \Tcal_+\cap\SVcal_+$. Hence the interval in \eqref{eq:prob.optimal1-r} reduces to the point $\|\phi(\bx_{i_*})-\ba_*\|^2$. Similarly if there exists $i_*\in [m+1,\ell]$ with $\alpha^*_{i_*}\in(0,b)$, then $i_*\in \Tcal_-\cap\SVcal_-$. Hence the interval in \eqref{eq:prob.optimal1-r} also reduces to the point $\|\phi(\bx_{i_*})-\ba_*\|^2$. This completes the proof.
\end{proof}

As for problem \eqref{prob.qp}, we can also have the following  similar result.

\begin{proposition}\label{th:solutionset_rt2}
Suppose $0 < \mu < m/{\ell}$ and $\mu+\nu \ge m/\ell$ hold. Let $\ga^*$ be an optimal solution to the dual problem \eqref{prob.dual2}.
The unique optimal sphere center for \eqref{prob.qp} is
\begin{equation}\label{eq:unique-c-de}
 \ba_*= \frac{1}{\ell\lambda}\left(\sum_{i=1}^m\Phi(\bx_i) -     \sum_{i=m+1}^\ell \alpha_i^*\Phi(\bx_i)\right),
 \end{equation}
and the optimal radius is $r_*=0$.
 If $\sum_{i=1}^\ell\alpha_i^*>\ell\mu$ then the optimal margin is $0$. Otherwise, the optimal margin set is
\begin{equation}\label{eq:prob.optimal1-rt2}
    T_*=\{t :\max\limits_{i\in \SVcal_-}\|\Phi(\bx_i)-\ba_*\|^2\le  t\le \min\limits_{i\in \Tcal_-}\|\Phi(\bx_i)-\ba_*\|^2\}.
\end{equation}
If there exist  $i_{-}\in [m+1,\ell]$ with $\alpha^*_{i_-}\in(0,b)$, then
$   t_*= \|\Phi(\bx_{i_-})-\ba_*\|^2$ is the unique  optimal margin for \eqref{prob.qp}.
\end{proposition}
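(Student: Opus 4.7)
The plan is to parallel the proof of Proposition~\ref{th:solutionset_rt}, now using the KKT system \eqref{kkt.2} from Theorem~\ref{thm:solvable2}. The hypothesis $\mu<m/\ell$ gives $\lambda = m/\ell-\mu>0$, so \eqref{prob.qp} is strictly convex in $\ba$; combined with Theorem~\ref{th-unique_c} this forces $\ba_*$ to be unique. Strong duality (Theorem~\ref{thm:solvable2}) tells us that every primal-dual optimal pair $(\ba,z,\gx,\ga)$ satisfies \eqref{kkt.2}, and reading off the stationarity condition \eqref{prob.dual1F2} yields the closed-form expression \eqref{eq:unique-c-de} in terms of any dual optimum $\ga^*$. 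The claim $r_*=0$ is inherited for free from Theorem~\ref{prop:SSLM-De}(ii), which converts any optimum of \eqref{prob.qp} into the triple $(\ba_*,0,\|\ba_*\|^2-z_*)$ optimal for \eqref{prob.unc}; the radius is effectively fixed to zero by the reduction.

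With $\ba_*$ determined, I would characterize the admissible margins $t = \|\ba_*\|^2 - z$ by a case analysis on $\alpha_i^*$ for $i\in[m+1,\ell]$. When $0<\alpha_i^*\le b$ (i.e.\ $i\in\SVcal_-$), the complementarity \eqref{eq:sslmkktcs-q2} pins $t=\|\Phi(\bx_i)-\ba_*\|^2+2\xi_i$, and $\xi_i\ge 0$ yields the lower bound $t\ge \|\Phi(\bx_i)-\ba_*\|^2$. When $0\le\alpha_i^*<b$ (i.e.\ $i\in\Tcal_-$), condition \eqref{prob.dual1F12} forces $\xi_i=0$, and primal feasibility \eqref{eq:sslmkktpf-q.2} gives the upper bound $t\le\|\Phi(\bx_i)-\ba_*\|^2$. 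Taking the maximum over $\SVcal_-$ and the minimum over $\Tcal_-$ then produces the interval \eqref{eq:prob.optimal1-rt2}, and any $t$ inside that interval admits a compatible slack via $2\xi_i := (t-\|\Phi(\bx_i)-\ba_*\|^2)_+$, so the characterization is tight.

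The uniqueness clause follows immediately: if $\alpha_{i_-}^*\in(0,b)$ for some $i_-\in[m+1,\ell]$, then $i_-\in\SVcal_-\cap\Tcal_-$ enters both bounds, collapsing \eqref{eq:prob.optimal1-rt2} to the single value $\|\Phi(\bx_{i_-})-\ba_*\|^2$. The main obstacle I anticipate is interpreting the clause ``if $\sum_{i=1}^\ell \alpha_i^* >\ell\mu$ then the optimal margin is $0$'': the dual \eqref{prob.dual2} only carries variables $\alpha_i$ for $i\in[m+1,\ell]$ and imposes $\sum_{i=m+1}^\ell\alpha_i^* = \ell\mu$ as an equality, so the hypothesis appears vacuous as written and seems to be a typographical vestige of the analogous complementarity \eqref{eq:sslmrelaxation3} in Proposition~\ref{th:solutionset_rt}. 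The KKT-based argument above does not rely on that clause and yields the remaining conclusions independently.
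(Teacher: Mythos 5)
Your proof is correct and follows essentially the same route as the paper's: uniqueness of $\ba_*$ via strict convexity in $\ba$ and Theorem~\ref{th-unique_c}, $r_*=0$ via Theorem~\ref{prop:SSLM-De}(ii), and the interval \eqref{eq:prob.optimal1-rt2} by reading the complementarity and feasibility conditions in \eqref{kkt.2} over $\SVcal_-$ and $\Tcal_-$, with the interval collapsing when some $\alpha^*_{i_-}\in(0,b)$. Your observation that the clause ``$\sum_{i=1}^\ell\alpha_i^*>\ell\mu$'' is vacuous given the equality constraint in \eqref{prob.dual2} is well taken; the paper's own proof silently ignores that clause as well.
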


\begin{proof} First of all,   \Cref{thm:solvable2} and \Cref{prop:SSLM-De}  implies  \eqref{prob.qp} is solvable and the optimal radius is $r_*=0$.
By duality, the  primal-dual pair $(\ba_*, t, \gx, \ga^*)$ is optimal if and only if it satisfies the optimality
 conditions in \eqref{kkt.2}.
Hence, the vector $ \ba_*$ in \eqref{eq:unique-c-de} is optimal and unique due to Theorem \ref{th-unique_c}.

Since $\ba_*$ is given and  $r_*= 0$, it follows from the optimality condition in \eqref{prob.dual1F12} and \eqref{eq:sslmkktcs-q2} 
 that    $\xi_i  = 0$ and $\|\phi(\bx_{i})-\ba_*\|^2 \le t$ for any $i\in \Tcal_-$ and $\|\phi(\bx_{i})-\ba_*\|^2 - t - 2\xi_i = 0$ for any $i\in \SVcal_-$.
Therefore, $t$ is optimal if and only if  $t\ge 0 $ and the following conditions are satisfied:
\begin{align}
& t \le \|\phi(\bx_{i})-\ba_*\|^2 \text{ with } \xi_i = 0,   \qquad  i \in \Tcal_-, \label{eq.kkt.cond3}\\
 & t \ge \|\phi(\bx_{i})-\ba_*\|^2 \text{ with } 2\xi_i = t -\|\phi(\bx_{i})-\ba_*\|^2, \     i \in \SVcal_-.  \label{eq.kkt.cond4}
 \end{align}
   Then $t$ is optimal if and only if $t\in T_*$.
  Under the assumption, one has   $i_-\in \Tcal_-\cap\SVcal_-$. Then the optimal margin set $T_*$ in \eqref{eq:prob.optimal1-rt2} reduces to the single point $\{\|\phi(\bx_{i_-})-\ba_*\|^2\}$.
\end{proof}

\subsection{$\nu$-property}
Following the terminology in \cite{Scholkopf01} (see e.g. \citeauthor{Scholkopf00}, \citeyear{Scholkopf00}; \citeauthor{wu2009small}, \citeyear{wu2009small}), A training example $\bx_i, 1\le i\le \ell$  is called a support vector (SV) if the associated  multiplier  $\alpha^*_i>0$. Therefore,   $\{\bx_i: i\in \SVcal_-\cup\SVcal_+\}$ are the set of  SVs.
$\bx_i  $ is a margin error (ME) if the corresponding   $\xi^*_i>0$.
As for MEs, by
complementary condition \eqref{eq:sslmrelaxation2},
\begin{equation}\label{me.dual}
\alpha_i^* =  \begin{cases} 1  & \text{ if $\bx_i$ is a positive ME}, \\
b & \text{ if $\bx_i$ is a negative ME}.
\end{cases}
 \end{equation}
Therefore,  the MEs in the positive  constitute a subset of $\SVcal_+$ and
the MEs in the negative is a subset of $\SVcal_-$.
Let $m_+$ and $n_+$ denote the number of MEs in the positive and negative classes, while $s_+$ and $s_-$ denote the number of SVs in the positive and negative classes, respectively.  This means
\[ m_+ \le | \SVcal_+|, \quad  n_+ \le |\SVcal_-|,\quad s_+ = |\SVcal_+|,\quad   s_- = |\SVcal_-|.\]

The $\nu$-property \citep{Scholkopf99, Scholkopf01, Scholkopf00, wu2009small}
established the connection between the model parameters and the fractions of SV and ME
in the dataset.  Our model \eqref{prob.primal1}  involves a  nonnegative parameter $\mu$ and two positive parameters $\nu$ and $b$.
We can also derive a similar $\nu$-property for these parameters in the following proposition.

\begin{proposition}\label{th:nu-property}
Suppose \eqref{eq:nondegeneratecondition-nu}  holds.
If the optimal margin of \eqref{prob.primal1} is positive, then
\begin{equation}\label{eq:nuproperty1}
\frac{m_+}{\ell}\le\nu+\mu\le \frac{s_+}{\ell}, \quad \frac{n_+}{\ell}\le\frac{\mu}{b}\le \frac{s_-}{\ell};
\end{equation}
Otherwise,
\begin{equation}\label{eq:nuproperty0}
\frac{m_+}{\ell}\le \nu +b\frac{s_-}{\ell},\quad  \nu+b\frac{n_+}{\ell}\le \frac{s_+}{\ell}, \quad \frac{\mu}{b}\le\frac{s_-}{\ell}.
\end{equation}
\end{proposition}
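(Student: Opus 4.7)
The plan is to read the $\nu$-property directly off the KKT system in Theorem~\ref{th:sslmCoSdualgap}, using only the dual feasibility equalities, the box constraints, and the complementarity condition tying together $t$ and $\sum_{i=m+1}^\ell \alpha_i$. No primal information beyond the definitions of SV and ME will be needed.

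The first step is to establish four universal bounds on the partial sums of the optimal dual variables that hold irrespective of whether the optimal margin is zero or positive. Using \eqref{me.dual}, every positive ME contributes exactly $1$ to $\sum_{i=1}^m \alpha_i^*$, while every index in $[1,m]\setminus \SVcal_+$ contributes $0$ and every index in $\SVcal_+$ contributes at most $1$; this gives
\begin{equation*}
m_+ \;\le\; \sum_{i=1}^m \alpha_i^* \;\le\; s_+ .
\end{equation*}
The same reasoning on the negative class, now with the box $[0,b]$, yields
\begin{equation*}
b\,n_+ \;\le\; \sum_{i=m+1}^\ell \alpha_i^* \;\le\; b\,s_- .
\end{equation*}
The equality $\sum_{i=1}^\ell y_i \alpha_i^* = \ell\nu$ in \eqref{prob.dual1F3} rewrites as $\sum_{i=1}^m \alpha_i^* = \ell\nu + \sum_{i=m+1}^\ell \alpha_i^*$, which is the bridge between the two classes.

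Next I would split into the two cases in the statement. If $t_* > 0$, the complementary slackness \eqref{eq:sslmrelaxation3} forces $\sum_{i=m+1}^\ell \alpha_i^* = \ell\mu$, and then the bridge equation forces $\sum_{i=1}^m \alpha_i^* = \ell(\nu+\mu)$. Substituting these two exact values into the universal bounds immediately yields
\begin{equation*}
\frac{m_+}{\ell} \;\le\; \nu+\mu \;\le\; \frac{s_+}{\ell}, \qquad \frac{n_+}{\ell} \;\le\; \frac{\mu}{b} \;\le\; \frac{s_-}{\ell},
\end{equation*}
which is \eqref{eq:nuproperty1}. If instead $t_* = 0$, one only has the inequality $\sum_{i=m+1}^\ell \alpha_i^* \ge \ell\mu$. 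Combining it with $b\,s_- \ge \sum_{i=m+1}^\ell \alpha_i^*$ gives $\mu/b \le s_-/\ell$; combining $\sum_{i=1}^m \alpha_i^* = \ell\nu + \sum_{i=m+1}^\ell \alpha_i^*$ with the lower bound $b\,n_+ \le \sum_{i=m+1}^\ell \alpha_i^*$ and the upper bound $\sum_{i=1}^m \alpha_i^* \le s_+$ gives $\nu + b\,n_+/\ell \le s_+/\ell$; and combining the bridge equation with the bounds $m_+ \le \sum_{i=1}^m \alpha_i^*$ and $\sum_{i=m+1}^\ell \alpha_i^* \le b\,s_-$ gives $m_+/\ell \le \nu + b\,s_-/\ell$. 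These are precisely the three inequalities of \eqref{eq:nuproperty0}.

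The only non-routine step is the translation of the definitions of SV and ME into inequalities on $\sum \alpha_i^*$, and this is entirely driven by \eqref{eq:sslmrelaxation2} via the characterization \eqref{me.dual}; once these universal bounds are in hand, the rest is a mechanical case split on $t_*$. I do not anticipate a real obstacle beyond being careful that the condition $\mu+\nu < m/\ell$ is used only through Theorem~\ref{th:sslmCoSdualgap} to guarantee the existence of the dual optimum $\ga^*$ on which the whole argument rests.
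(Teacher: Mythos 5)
Your proposal is correct and follows essentially the same route as the paper's proof: both derive the bounds $m_+ \le \sum_{i=1}^m\alpha_i^* \le s_+$ and $b\,n_+ \le \sum_{i=m+1}^\ell\alpha_i^* \le b\,s_-$ from \eqref{me.dual} and the box constraints, use the dual equality $\sum_i y_i\alpha_i^*=\ell\nu$ as the bridge, and split on $t_*$ via the complementarity \eqref{eq:sslmrelaxation3}. The only cosmetic difference is that you state the four partial-sum bounds up front as universal facts, whereas the paper derives them inside the case analysis.
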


\begin{proof} From the optimality condition \eqref{prob.dual1F3}, there exists an optimal solution $\ga^*$ of \eqref{prob.dual1} satisfies
\begin{equation}\label{eq:depart2}
	\sum_{i=1}^m\alpha^*_i-\sum_{i=m+1}^\ell\alpha^*_i=\ell\nu
\end{equation}
and
\begin{equation}\label{eq:nagativeLM>=}
	\sum_{i=m+1}^\ell\alpha^*_i\ge\ell\mu.
\end{equation}
If $t_*>0$, by \eqref{eq:sslmrelaxation3}, it holds that
\begin{equation}\label{eq:nagativeLM=}
	\sum_{i=m+1}^{\ell}\alpha_i^*=\ell\mu
\end{equation}
and
\begin{equation}\label{eq:nuxz1}
	\sum_{i=1}^m\alpha_i^*=\ell(\nu+\mu).
\end{equation}
Summing up $\alpha^*_i$ over the positive MEs leads to
\begin{equation}\label{eq:nuxz2}
m_+ =  \sum_{\bx_i \text{ is positive ME}} \alpha_i^* \le   \sum_{i\in\SVcal_+} \alpha_i^* =   \sum_{i=1}^m\alpha_i^*,
\end{equation}
where the last equality is by the fact that $\alpha_i^* = 0$ for 
$i\in[1,m]\setminus\SVcal_+$.
One the other hand,
\begin{equation}\label{eq:nuxz3}
s_+ =  |\SVcal_+| \ge \sum_{i\in \SVcal_+  } \alpha_i^*  =    \sum_{i=1}^m\alpha_i^*.
\end{equation}
Combining \eqref{eq:nuxz1}, \eqref{eq:nuxz2} and \eqref{eq:nuxz3} we can derive the first group inequalities in \eqref{eq:nuproperty1}.

Summing up $\alpha^*_i$ over the negative  MEs leads to
\begin{equation}\label{eq:nuxz21}
bn_+ =  \sum_{\bx_i \text{ is negative ME}} \alpha_i^* \le   \sum_{i\in\SVcal_-} \alpha_i^* =   \sum_{i=m+1}^\ell\alpha_i^*,
\end{equation}
where the last equality is by the fact that $\alpha_i^* = 0$ for $i\in[m+1,\ell]\setminus\SVcal_-$. On the other hand,
\begin{equation}\label{eq:nuxz31}
bs_-   \ge \sum_{i\in \SVcal_-  }  \alpha_i^*   =    \sum_{i=m+1}^\ell \alpha_i^*.
\end{equation}
The second group inequalities in \eqref{eq:nuproperty1} is then derived based on \eqref{eq:nagativeLM=},  \eqref{eq:nuxz21} and \eqref{eq:nuxz31}.

If the margin $t_* =  0$, it follows from \eqref{eq:nuxz2}, \eqref{eq:nuxz31} and \eqref{eq:depart2} that
\begin{equation}\label{eq:nuxz5}
m_+-bs_-\le
 \sum_{i=1}^m\alpha^*_i-\sum_{i=m+1}^\ell\alpha^*_i=\ell\nu.
\end{equation}
Similarly, it follows from \eqref{eq:nuxz3}, \eqref{eq:nuxz21} and \eqref{eq:depart2}  that
\begin{equation}\label{eq:nuxz4}
s_+-bn_+ \ge  \sum_{i=1}^m\alpha^*_i-\sum_{i=m+1}^\ell\alpha^*_i=\ell\nu.
\end{equation}
Combining \eqref{eq:nuxz5} and \eqref{eq:nuxz4} yields the first two inequalities in \eqref{eq:nuproperty0}.
The last inequality in \eqref{eq:nuproperty0} is true since by \eqref{eq:nagativeLM>=} one has
\[
bs_- \ge \sum_{i\in\SVcal_-}\alpha^*_i =  \sum_{i=m+1}^\ell\alpha^*_i\ge\ell\mu,
\]
completing the proof.
\end{proof}

In the absence of the negative data where  $\mu  = 0 $ and all the terms regarding the negative data disappear in \eqref{prob.primal1}, then assumption \eqref{eq:nondegeneratecondition-nu}  reduces to $\nu<1$ and one can easily arrive at the following result for this particular case.

\begin{corollary}\label{coro:nu-property}
Assume $\nu\in (0,1)$ and $m=\ell$. Then the solution of \eqref{prob.primal1} satisfies
\begin{equation}\label{eq:nuproperty1.1}
\frac{m_+}{m}\le \nu\le \frac{s_+}{m},
\end{equation}
that is $\nu$ is an upper bound on the fraction of errors, and is   a lower bound on the fraction of SVs.
\end{corollary}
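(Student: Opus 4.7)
The plan is to specialize the argument from Proposition~\ref{th:nu-property} to the degenerate setting where $\mu=0$ and $m=\ell$, observing that all terms involving negative examples vanish in both the primal \eqref{prob.primal1} and its dual \eqref{prob.dual1}. Because the hypothesis $\nu+\mu<m/\ell$ reduces precisely to $\nu\in(0,1)$, Theorem~\ref{th:sslmCoSdualgap} applies, so an optimal dual $\ga^*$ exists and the KKT system \eqref{kkt.1} characterizes the primal-dual optima.

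First I would note that the inequality constraint $\sum_{i=m+1}^\ell \alpha_i \ge \ell\mu$ in \eqref{prob.dual1F3} becomes an empty-sum statement and is trivially satisfied, while the equality $\sum_{i=1}^\ell y_i\alpha_i = \ell\nu$ collapses to
\[
\sum_{i=1}^m \alpha_i^* \;=\; \ell\nu \;=\; m\nu.
\]
Thus only the bounds $0\le \alpha_i^*\le 1$ for $i\in[1,m]$ survive from \eqref{prob.dual1F1}.

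Next I would reuse the bookkeeping underlying \eqref{eq:nuxz2} and \eqref{eq:nuxz3}. By the complementary slackness \eqref{eq:sslmrelaxation2}, every positive margin error satisfies $\alpha_i^*=1$, while $\alpha_i^*=0$ outside $\SVcal_+$. Hence
\[
m_+ \;=\; \sum_{\bx_i\text{ is positive ME}} \alpha_i^* \;\le\; \sum_{i\in\SVcal_+}\alpha_i^* \;=\; \sum_{i=1}^m\alpha_i^*
\quad\text{and}\quad
\sum_{i=1}^m\alpha_i^* \;=\; \sum_{i\in\SVcal_+}\alpha_i^* \;\le\; |\SVcal_+| \;=\; s_+.
\]
Substituting $\sum_{i=1}^m \alpha_i^* = m\nu$ into this chain yields $m_+ \le m\nu \le s_+$, and dividing by $m$ gives \eqref{eq:nuproperty1.1}.

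There is no real obstacle here; the only mild point to handle carefully is verifying that the assumptions of Proposition~\ref{th:nu-property} (strong duality and the existence of $\ga^*$) carry over once the negative block of variables is stripped out, which follows by a direct reading of Theorem~\ref{th:sslmCoSdualgap} under $\nu<1$. Everything else is a one-line specialization of the two sandwich inequalities already established in the proof of Proposition~\ref{th:nu-property}.
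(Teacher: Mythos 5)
Your proof is correct and follows essentially the same route as the paper, which presents the corollary as an immediate specialization of Proposition~\ref{th:nu-property} once $\mu=0$ and $m=\ell$ strip out the negative-class terms. Your added observation that the dual equality collapses to $\sum_{i=1}^m\alpha_i^*=m\nu$ unconditionally (so no case split on $t_*$ is needed) is exactly the right simplification, and the sandwich via \eqref{eq:nuxz2}--\eqref{eq:nuxz3} matches the paper's bookkeeping.
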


The following result for the degenerate case can also be easily derived by a similar argument based on \eqref{prob.dual1F32}.

\begin{proposition}\label{th:nu-property2}
Suppose $\mu < m/{\ell}$ and $\mu+\nu \ge m/\ell$ holds. One has
 \begin{equation}\label{eq:nuproperty12}
  \frac{n_+}{\ell}\le \frac{\mu}{b}\le \frac{s_-}{\ell}.
\end{equation}
 \end{proposition}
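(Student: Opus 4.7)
The plan is to mimic the second half of the proof of Proposition \ref{th:nu-property}, but working from the KKT system of the degenerate problem \eqref{prob.qp} instead of \eqref{prob.primal1}. The hypothesis $\mu<m/\ell$ and $\mu+\nu\ge m/\ell$ places us precisely in the regime of \Cref{thm:solvable2}, which guarantees the existence of a primal--dual optimal pair satisfying the KKT system \eqref{kkt.2}. In particular, for any dual optimizer $\ga^*$ of \eqref{prob.dual2}, the equality constraint \eqref{prob.dual1F32} gives
\[
\sum_{i=m+1}^{\ell}\alpha_i^*=\ell\mu,
\]
and the complementary slackness condition \eqref{prob.dual1F12} forces $\alpha_i^*=b$ whenever $\xi_i^*>0$, i.e.\ whenever $\bx_i$ (with $y_i=-1$) is a margin error.

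For the upper bound $\mu/b\le s_-/\ell$, I would argue that the box constraints $0\le\alpha_i^*\le b$ together with $\alpha_i^*=0$ for $i\in[m+1,\ell]\setminus\SVcal_-$ yield
\[
\ell\mu=\sum_{i=m+1}^{\ell}\alpha_i^*=\sum_{i\in\SVcal_-}\alpha_i^*\le b\,|\SVcal_-|=b\,s_-,
\]
which is exactly the right inequality after dividing by $b\ell$. For the lower bound $n_+/\ell\le\mu/b$, I would sum $\alpha_i^*$ only over the negative margin errors, use \eqref{me.dual} (which applies verbatim through \eqref{prob.dual1F12}) to replace each such $\alpha_i^*$ by $b$, and then enlarge the index set to all of $\SVcal_-$ to obtain
\[
b\,n_+=\sum_{\bx_i\text{ is a negative ME}}\alpha_i^*\le\sum_{i\in\SVcal_-}\alpha_i^*=\ell\mu.
\]
Dividing by $b\ell$ finishes the proof.

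There is essentially no obstacle here: the degenerate formulation \eqref{prob.qp} has already eliminated the positive-class slack variables and the inequality $\sum_{i=m+1}^{\ell}\alpha_i\ge\ell\mu$ has collapsed to an equality in \eqref{prob.dual1F32}, so the argument is actually cleaner than in Proposition~\ref{th:nu-property}. The only point that deserves care is confirming that the characterization of margin errors in the negative class via $\alpha_i^*=b$ still holds in this setting; this is immediate from \eqref{prob.dual1F12}, which is structurally identical to the negative-class part of \eqref{eq:sslmrelaxation2}. No case split on whether the optimal margin is positive is needed, because $t_*$ plays no role in the argument once the equality $\sum_{i=m+1}^{\ell}\alpha_i^*=\ell\mu$ is in hand.
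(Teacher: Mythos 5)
Your proposal is correct and follows essentially the same route as the paper's proof: both sum $\alpha_i^*$ over the negative margin errors and over $\SVcal_-$, use the equality $\sum_{i=m+1}^{\ell}\alpha_i^*=\ell\mu$ from \eqref{prob.dual1F32} together with the complementarity/box conditions in \eqref{prob.dual1F12} (equivalently \eqref{me.dual}), and divide by $b\ell$. Your added remark that no case split on the sign of the optimal margin is needed is accurate and consistent with the paper's argument.
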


\begin{proof}  Summing up $\alpha^*_i$ over the negative MEs leads to
\begin{equation}\label{eq:nuxz22}
bn_+ =  \sum_{\bx_i \text{ is negative ME}} \alpha_i^* \le   \sum_{i\in\SVcal_-} \alpha_i^* \le   \sum_{i=m+1}^\ell \alpha_i^* = \ell\mu
\end{equation}
by  \eqref{prob.dual1F32}  and  \eqref{me.dual}.
One the other hand,
\begin{equation}\label{eq:nuxz32}
bs_-\ge \sum_{i\in \SVcal_-  } \alpha_i^*  =    \sum_{i=m+1}^\ell\alpha_i^*  = \ell\mu.
\end{equation}
Combining \eqref{eq:nuxz22} and \eqref{eq:nuxz32} we can derive   \eqref{eq:nuproperty12}.
\end{proof}

\section{Numerical Experiments}\label{sec.test}


In this section, we report experimental results to demonstrate the effectiveness of the proposed method. 
We investigate the influence of nonconvexity in state-of-the-art models and compare the classification boundaries learned by our proposed primal CSSLM \eqref{prob.primal1}. We also compare the performance of the  dual CSSLM \eqref{prob.dual1}--against several baselines, including  primal SSLM \eqref{prob.sslm},   dual SSLM \eqref{prob.sslmdual1},  SVDD, SVM \eqref{eq:svm}, Decision Tree (DT), KNN, 
 and DNN models (MLP and CNN). 

Most experiments are executed on a machine with an Intel Core i7-13700H CPU and 16 GB of RAM, while the DNN models are trained on a separate server with an Intel Xeon Gold 6330 CPU, 100 GB of RAM, and an NVIDIA RTX 3090 GPU.


\subsection{Nonconvexity issues in solving the primal and dual SSLM}

 We analyze the influence of nonconvexity in primal SSLM  \eqref{prob.sslm} and dual SSLM \eqref{prob.sslmdual1} respectively with the Banana Dataset\footnote{\href{https://leon.bottou.org/papers/bordes-ertekin-weston-bottou-2005}{https://leon.bottou.org/papers/bordes-ertekin-weston-bottou-2005}}, which is a synthetic 2D benchmark containing 5,300 points forming two banana-shaped clusters. Approximately 10\% of points lie in an overlapping noisy region. To reduce noise, we relabel points satisfying 
$x_2 + \tfrac{3}{7}(x_1 - 3) > 0$ as $-1$.  The dataset is generated from Gaussian data nonlinearly transformed into separable banana classes, providing an intuitive yet challenging test for classifier performance. Four scenarios are randomly chosen from the training set, and Scenarios A-C address binary classification while Scenario D focuses on anomaly detection. Note that to ensure the reproducibility of the experimental results, once the data for each scenario are randomly selected from the training set, they become fixed. Table \ref{tab.data} is the summary of the banana data set.

\begin{table}[h!]
	\centering
	\caption{Summary of training sets across scenarios and test set of the Banana Dataset.}
		\label{tab.data}
	\begin{tabular}{cccccccc}\hline\hline
		& Training set&  Scenario A &  Scenario B &  Scenario C & Scenario D  & Test \\ \hline
		$(m,n)$&	$(1630, 2670)$ & $(3,3)$   & $(100,100)$  & $(768,1232)$   & $(1000,100)$ & $(381,619)$\\ \hline\hline
	\end{tabular}
\end{table}

To illustrate the challenges of solving the primal and dual SSLM due to its nonconvexity, we applied two advanced nonlinear solvers available in \texttt{AMPL} to the 10-dimensional feature space in Scenarios A, B, and C using the cubic polynomial feature mapping:

$$
\Phi(\bx)  = (1, \sqrt{3}x_1, \sqrt{3}x_2, \sqrt{3}x_1^2, \sqrt{3}x_2^2, \sqrt{6}x_1x_2, x_1^3, x_2^3, \sqrt{3}x_1x_2^2, \sqrt{3}x_1^2x_2),
$$
corresponding to the kernel function $k(\bx, \by) = (1 + \bx^T\by)^3$.

For consistent comparison, the optimal values of SSLM and dual SSLM were scaled by a factor of $\ell \nu/2$ to match the optimal values of the primal CSSLM \eqref{prob.primal1} and dual CSSLM \eqref{prob.dual1}. The parameters were set as $b=1,\nu=0.001$ and $\mu=0.05$, satisfying conditions \eqref{all.value} and \eqref{eq:nondegeneratecondition-nu}. SSLM-specific parameters 
$(\bar{\nu}, \bar{\nu}_1, \bar{\nu}_2)$ were configured according to \eqref{eq:parameter2sslm}.

{The solvers employed include the global optimization solver \texttt{Gurobi} and the local optimization solver \texttt{Snopt}. Table~\ref{table:ampl_for_sslm} summarizes some initial points, optimal values, runtime, and optimality residuals obtained by each solver. Figure \ref{fig:random initial points} illustrates the optimal values of the primal SSLM \eqref{prob.sslm} obtained with random initializations under the conditions $R=0, \rho=0$ and $R=0, \rho \neq 0$ in Scenario C on the Banana Dataset.}

{
\begin{table}[h!]
    \centering
 \caption{\small Initial points and results of \texttt{Gurobi} and \texttt{Snopt} for the primal SSLM \eqref{prob.sslm} in the 10-dimensional feature space of Scenarios A, B, and C on the Banana Dataset.}
    \label{table:ampl_for_sslm}
    \begin{tabular}{cccrrc}
    \hline
    \hline
    Scenario & Initial point $(\ba, R, \rho)$ & Solver & Optimal value & Runtime  & Optimality Error\\ 
    \hline
    \multirow{6}{*}{A} & \multirow{2}{*}{$(\mathbf{0}, 0, 0)$} & \texttt{Gurobi} &   ${\bf -225.8368}$  &      $4.79$      &    9.4735e-9  \\
    & &\texttt{Snopt}  &    $1.3246$           &   $0.00$ &   7.0000e-7           \\
    \cline{2-6}
    & \multirow{2}{*}{$(\mathbf{0}, 1, 0)$} & \texttt{Gurobi} &   ${\bf -225.8368}$  &      $3.81$      &    9.4735e-9  \\
    & &\texttt{Snopt}  &    $0.0036$           &   $0.00$ &   1.7000e-6           \\
    \cline{2-6}
    & \multirow{2}{*}{$(\mathbf{1}, 0, 1)$} & \texttt{Gurobi} &   ${\bf -225.8368}$  &      $3.71$      &    9.4735e-9  \\
    & &\texttt{Snopt}  &    $-9.9497$           &   $0.00$ &   1.1000e-6          \\
    \cline{2-6}
    \cline{2-6}
    \hline
    \multirow{6}{*}{B} & \multirow{2}{*}{$(\mathbf{0}, 0, 0)$} & \texttt{Gurobi} &   ${\bf 121.9522}$  &      $0.96$      &    1.1057e-8  \\
    & &\texttt{Snopt}  &    $1586.8200$           &   $0.03$ &   1.3000e-7           \\
    \cline{2-6}
    & \multirow{2}{*}{$(\mathbf{0}, 1, 0)$} & \texttt{Gurobi} &   ${\bf 121.9522}$  &      $0.83$      &    1.1057e-8 \\
    & &\texttt{Snopt}  &    ${\bf 121.9522}$         &   $0.01$ &   1.8000e-7           \\
    \cline{2-6}
    & \multirow{2}{*}{$(\mathbf{1}, 0, 1)$} & \texttt{Gurobi} &   ${\bf 121.9522}$  &      $0.80$      &    1.1057e-8 \\
    & &\texttt{Snopt}  &    $1546.7500$           &   $0.07$ &   1.7000e-6           \\
    \cline{2-6}
    \cline{2-6}
    \hline
    \multirow{6}{*}{C} & \multirow{2}{*}{$(\mathbf{0}, 0, 0)$} & \texttt{Gurobi} &   ${\bf 1747.4100}$  &      $13.13$      &    6.0332e-9  \\
    & &\texttt{Snopt}  &    $12024.7000$           &   $2.78$ &   5.8000e-7           \\
    \cline{2-6}
    & \multirow{2}{*}{$(\mathbf{0}, 1, 0)$} & \texttt{Gurobi} &   ${\bf 1747.4100}$  &      $13.32$      &    6.0332e-9  \\
    & &\texttt{Snopt}  &    ${\bf 1747.4100}$           &   $0.57$ &   6.5000e-11           \\
    \cline{2-6}
    & \multirow{2}{*}{$(\mathbf{1}, 0, 1)$} & \texttt{Gurobi} &   ${\bf 1747.4100}$  &      $13.66$      &    6.0332e-9  \\
    & &\texttt{Snopt}  &    $11679.5000$           &   $3.68$ &   3.3000e-7           \\
    \cline{2-6}
    \cline{2-6}
    \hline
    \hline
    \end{tabular}	
\end{table}
}

\begin{figure}[h!]
	\centering
	\subfigure[$R=0, \rho=0$]{
		\begin{minipage}[t]{0.45\linewidth}
			\centering
			\includegraphics[width=\linewidth]{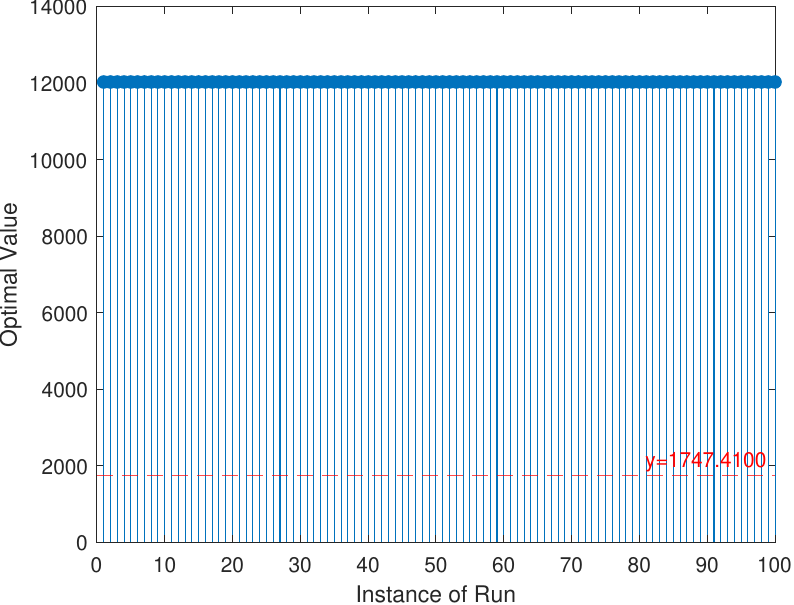}
			\label{fig:R=0 rho=0}			
		\end{minipage}
	}
    \hfill
	\subfigure[$R=0, \rho \neq 0$]{
		\begin{minipage}[t]{0.45\linewidth}
			\centering
			\includegraphics[width=\linewidth]{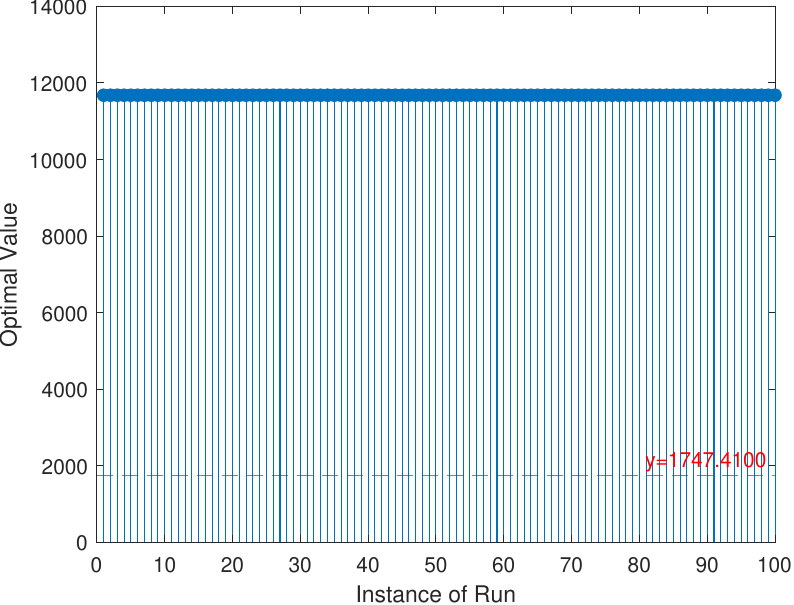}
			\label{fig:R=0}		
		\end{minipage}
	}
\caption{\small 
Optimal values of the primal SSLM \eqref{prob.sslm} under 100 random initializations on the Banana Dataset (Scenario C) with $R=0, \rho=0$ and $R=0, \rho\ne 0$. The dashed horizontal line marks the global optimal value ($y = 1747.41$), which was NOT achieved in the vast majority of runs, demonstrating the sensitivity  of the algorithm to initial conditions.
}
	 \label{fig:random initial points}
\end{figure}

{
Our experiments yield several key observations. Across all three scenarios,  \texttt{Gurobi} successfully identified the global optimum (highlighted in bold), though it required substantially more computational time compared to local solvers such as  \texttt{Snopt}. Notably,  \texttt{Snopt} consistently converged to a local optimum from different initial points, and the significant gap between local and global solutions underscores the risk of relying solely on local methods. Furthermore, under certain random initializations, the primal SSLM \eqref{prob.sslm} also exhibited convergence to local optima, highlighting the challenge of nonconvex optimization even with advanced solvers.
}


As demonstrated, achieving the global optimum for the nonconvex primal SSLM \eqref{prob.sslm} is challenging. A natural alternative is to solve the convex dual SSLM \eqref{prob.sslmdual1} instead of the primal problem. However, our experiment shows that the dual SSLM does not always yield the global optimal solution, reflecting the lack of strong duality.
We used \texttt{quadprog} in Matlab to solve the dual SSLM \eqref{prob.sslmdual1}, primal CSSLM \eqref{prob.primal1}, and its dual problem  \eqref{prob.dual1} in Scenarios A, B, and C on Banana Dataset, all of which are convex QPs. Table \ref{table:sslm_dual} lists the optimal objectives, exitflags\footnote{$\texttt{exitflag}=1$ in \texttt{quadprog} means the optimal solution is found}, runtime, and optimality errors. 

\begin{table}[h!]
    \centering
    \caption{\small The results of solvers for the dual SSLM \eqref{prob.sslmdual1}, proposed problem \eqref{prob.primal1} and its dual problem \eqref{prob.dual1} in 10-dimensional feature space of Scenarios A, B, C on the Banana Dataset.}
    \label{table:sslm_dual}
    \begin{tabular}{clrcrc}
    \hline
    \hline
    Scenario & Model & Optimal value & Exitflag & Runtime  & Optimality Error\\
    \hline
    \multirow{3}{*}{A} & Dual SSLM \eqref{prob.sslmdual1}  &    ${\bf -225.8368}$           &    $1$     &     $0.0161$   &    2.7149e-11    \\
     & CSSLM \eqref{prob.primal1}  &     ${\bf -225.8368}$          &     $1$     &       $0.0072$  &  4.1457e-9     \\
     &  CSSLM \eqref{prob.dual1}  &    ${\bf -225.8368}$           &   $1$       &   $0.0048$ &   1.3184e-10          \\
    \hline
    \multirow{3}{*}{B} & Dual SSLM \eqref{prob.sslmdual1}  &    ${\bf 121.7294}$          &    $1$      &     $0.0067$ &   1.7144e-11         \\
    & CSSLM \eqref{prob.primal1}  &     $121.9522$          &     $1$     &       $ 0.0425 $ &   6.5391e-11       \\
    & CSSLM \eqref{prob.dual1} &    $121.9522$           &   $1$       &   $0.0066 $ &   1.9341e-9          \\
    \hline
    \multirow{3}{*}{C} & Dual SSLM \eqref{prob.sslmdual1} &    ${\bf 1733.1000}$          &    $1$      &     $2.0308 $ &   4.1751e-9        \\
    & CSSLM \eqref{prob.primal1}  &     $1747.4000$         &     $1$    &       $16.0189$ &   1.6268e-10     \\
    & CSSLM \eqref{prob.dual1}  &    $1747.4000$          &   $1$       &   $1.4373 $ &    4.0643e-12         \\
    \hline
    \hline
    \end{tabular}	
\end{table}

We observe that the dual optimal solution for SSLM does not always correspond to the global optimal solution of the primal SSLM (in bold) in Scenarios B and C (in bold). This highlights the absence of strong duality in nonconvex SSLM, indicating that solving the dual SSLM does not resolve nonconvexity issues present in the primal problem. Our proposed primal and dual CSSLM consistently achieved the global optimal solution (as verified by the global value determined by \texttt{Gurobi} in Table \ref{table:ampl_for_sslm}), which confirms strong duality in our model, a clear advantage over nonconvex alternatives.

\subsection{Comprehensive Performance Evaluation Across Datasets}
We then compare the CSSLM with other existing methods, which include DT, KNN, SVM, SVDD, on the Multivariate Cauchy (MC) Dataset, Blood Transfusion Service Center (BTSC), Breast Cancer Wisconsin Dataset (BCW) and KDD-Cup'99 Dataset. For each dataset, we consider two scenarios, i.e. the binary classification (BC) and anomaly detection (AD). Table \ref{tab.data-four} gives the summary of each data set and each scenario.

\begin{table}[h!]
	\centering
	\caption{Summary of each data set and each scenario in performance evaluation. }
	\label{tab.data-four}
	\begin{tabular}{lllll}\hline\hline
		  $(m,n)$       &  MC          &  BTSC        & BCW           & KDD-Cup'99 \\ \hline
		Training set  & $(250, 750)$ & $(125, 399)$ & $(149,250)$   & $(97278,396743)$\\             
            Test set      & $(250, 750)$ & $(53, 171)$  & $(63,107)$    & $(60593,250436)$\\
            BC    & $(200, 200)$ & $(100,100)$  & $(100,100)$   & $(1000,1000)$\\
            AD    & $(200, 20)$  & $(100,10)$   & $(100,10)$    & $(1000, 100)$\\
        \hline\hline
	\end{tabular}
\end{table}

As for the evaluation metric, we adopt the Area Under the Precision--Recall Curve (AUPRC) \citep{davis2006relationship, saito2015precision}, which is particularly suitable for imbalanced datasets and emphasizes the performance of the positive class. Following this setting, we compute AUPRC by treating negative samples as the positive class. 

For parameter selection, we employ five-fold stratified cross-validation following \cite{wu2009small}, ensuring that each fold preserves the positive-to-negative ratio of the original training set. Specifically we use grid search to select the hyperparameters that yield the highest average AUPRC across the folds. Table \ref{table:grid-parameters} gives the grids of parameters in each model, where the Gaussian kernel $k(\bx, \by) = e^{-\gamma \| \bx - \by\|^2}$ is used in SVM, SVDD and CSSLM, MinSamplesLeaf and MaxNumSplits are the minimum number of observations per leaf and the maximum number of splits in DT, respectively. For CSSLM, the parameter $\mu$ is selected from $\{0, 0.01m/\ell, 0.05m/\ell, 0.1m/\ell, m/\ell\}$ and delete the value according to \eqref{all.value}. Then the parameter $\nu$ is selected from $\{0.01m/\ell, 0.1m/\ell\} $ according to \eqref{eq:nondegeneratecondition-nu}. 
These constraints significantly reduce the computational overhead typically associated with cross-validation.

\begin{table}[h!]
    \centering
   \caption{ The grid for each parameters.} 
    \label{table:grid-parameters}        
    \begin{tabular}{lll}    \hline    \hline
    Parameter & Grid & Model\\ \hline  
    $\gamma$  & $\{ \tfrac{2^k}{\sigma^2} : k\in [-4,4]\}^\dag$ & SVM, SVDD, CSSLM\\    
    $b$       & $\{\tfrac{m}{4n}, \tfrac{m}{2n}, \tfrac{m}{n},   \tfrac{2m}{n},  \tfrac{4m}{n} \}$ & SVM, SVDD, CSSLM\\
    $\tfrac{1}{\nu\ell}$ & $\left\{0.01, 0.05, 0.1, 0.5, 1, 5, 10, 50, 100, 500 \right\}$  & SVM\eqref{eq:svm}\\
    $\mu$ & $\{0, \tfrac{0.01m}{\ell}, \tfrac{0.05m}{\ell}, \tfrac{0.1m}{\ell}, \tfrac{m}{\ell}\}$ & CSSLM\\
    $\nu$   & $\{\tfrac{0.01k m}{\ell}: k \in [1, 9]\} \cup \{\tfrac{0.1k m}{\ell}: k\in [1, 9]\}$   & SVDD \eqref{prob.unc2}\\
    $\nu$   & $\{\tfrac{0.01m}{\ell}, \tfrac{0.1m}{\ell}\} $   & CSSLM\\
     MinSamplesLeaf      & $\{1,2,5,10,20,50\}$  & DT\\
      MaxNumSplits  &  $\{5,10,20,50,100,200,500\}$ &  DT\\        
    $k$     &    $\{1,2,5,10,20,25,50,100,500\}$  & KNN \\    \hline    \hline
    \end{tabular}\\
   $^\dag$ Here $\sigma^2$ is the mean squared norm of the training data.\qquad\qquad
\end{table}

Test errors on the positive and negative classes are denoted by $e_+$ and $e_-$, respectively. 
We also employ the geometric mean of sensitivity and specificity (G-means) metric, errors on the positive and negative classes as performance measures to show the overall accuracy and the trade-off for false-positive and false-negative rates. G-means is defined as $
g = \sqrt{(1-e_+)(1-e_-)}$,   
where $1-e_+$ and $1-e_-$ are the so-called accuracy on positive and negative class.   

Note that to quantify the central tendency and variability of algorithm performance, the data for each scenario are randomly selected from the training set, and calculate the AUPRC, G-means, and classification errors ($e_+$, $e_-$)  on the test set.
Tables  \ref{table:resultofCPEAD} report the average AUPRC, G-means, and classification errors ($e_+$, $e_-$) with standard deviations across 10 runs.  
The best results in each scenario are highlighted in bold. A method is considered to have broken down if its average G-means falls below 50\%. Below we describe each dataset and summarize dataset-specific findings. 
 
\begin{table}[h!]
    \centering
   \caption{\small Average AUPRC, G-means, test errors ($e_+$, $e_-$) and standard deviations for models on each scenario on each Dataset.}
    \label{table:resultofCPEAD}
        {\scriptsize 
    \begin{tabular}{lclcccc}
    \hline
    \hline
    Dataset &Scenario & Model & AUPRC & G-means $(\%)$ & $e_+(\%)$ & $e_-(\%)$ \\
    \hline
    \multirow{10}{*}{MC}& \multirow{5}{*}{BC} & DT &    $ 0.9821 \pm 0.0106$  &   $ 92.60 \pm 0.92$      &    $4.88 \pm 2.77$     &     $9.79 \pm 2.56$    \\ 
   & & KNN &    $ 0.9918 \pm 0.0016$    &   $ 91.15 \pm 0.92$        &    $\bf{1.12 \pm 0.88}$     &     $15.95 \pm 2.35$    \\
    && SVM &    $ 0.9942 \pm 0.0006$   &   $ \bf{93.39 \pm 0.81}$         &    $2.92 \pm 2.75$     &     $10.07 \pm 3.85$    \\
    & & SVDD &     $0.9930 \pm 0.0010$  &   $ 92.92 \pm 1.30$         &     $8.08 \pm 4.05$     &       $5.97 \pm 2.20$     \\
    & &  CSSLM &    $\bf{0.9944 \pm 0.0002}$   &   $ 92.09 \pm 0.78$         &   $11.92 \pm 3.00$       &   $\bf{3.60 \pm 0.49}$           \\   \cline{2-7}
   & \multirow{5}{*}{AD} & DT &    $ 0.9697 \pm 0.0058$  &   $ 87.14 \pm 2.75$         &    $0.24 \pm 0.34$     &     $23.81 \pm 4.96$    \\ 
  &  & KNN &    $ 0.9807 \pm 0.0063$    &   $ 65.63 \pm 34.66$        &    $\bf{0.00 \pm 0.00}$     &     $46.12 \pm 28.65$    \\ 
   & & SVM  &    $\bf{0.9910 \pm 0.0083}$  &   $ 71.84 \pm 38.19$         &    $0.48 \pm 0.88$      &     $34.85 \pm 35.43$         \\
   & & SVDD &     $0.9892 \pm 0.0036$   &   $ 85.93 \pm 12.83$        &     $16.20 \pm 23.62$     &       $ 9.16 \pm 4.89 $      \\
   & & CSSLM &    $0.9881 \pm 0.0082$   &   $ \bf{87.61 \pm 8.60}$        &   $17.96 \pm 14.95$       &   $\bf{5.59 \pm 2.97} $        \\
    \hline
 \multirow{10}{*}{BTSC}&   \multirow{5}{*}{BC} & DT &    $0.8943 \pm 0.0207$ & $\bf{64.98 \pm 2.89}$          &    $41.87 \pm 10.33$     &     $\bf{26.02 \pm 8.11}$    \\ 
  &  & KNN &    $ 0.8816 \pm 0.0002$   & $6.61 \pm 20.90$           &    $3.02 \pm 9.55$     &     $93.74 \pm 19.79$    \\
  &  & SVM &    $ \bf{0.8954 \pm 0.0096}$   & $57.63 \pm 20.56$        &    $\bf{33.77 \pm 21.06}$     &     $37.13 \pm 26.62$    \\
  &   & SVDD &     $0.8698 \pm 0.02129$   & $ 58.14 \pm 10.62$        &     $39.62 \pm 25.45$     &       $34.50 \pm 21.65$     \\
  &   &  CSSLM &    $0.8681 \pm 0.0273$   & $60.02 \pm 7.95$        &   $36.23 \pm 24.27$       &   $36.20\pm 19.35$           \\ \cline{2-7}    
   & \multirow{5}{*}{AD} & DT &    $ 0.8822 \pm 0.0016$  &     $2.73 \pm 8.64$  & $ 0.19 \pm 0.60$        &    $99.24 \pm 2.40$         \\ 
   & & KNN &    $ 0.8419 \pm 0.0146$    & $ 0.00 \pm 0.00$       &    $0.00 \pm 0.00$     &     $100.00 \pm 0.00$    \\ 
   & & SVM  &    $0.8802 \pm 0.0199$  & $ 48.78 \pm 26.46$         &    $47.55 \pm 32.97$      &     $30.41 \pm 22.60$         \\
   & & SVDD &     $0.8646 \pm 0.02512$ &     $49.18 \pm 21.02$         &     $46.04 \pm 35.42$     &       $ 32.98 \pm 26.27 $      \\
   & & CSSLM &    $\bf{0.8665 \pm 0.0267}$  &      $\bf{62.04 \pm 5.72}$     &   $\bf{28.87 \pm 16.50}$       &   $\bf{42.87 \pm 15.30}$        \\  \hline
\multirow{10}{*}{BCW}&
    \multirow{5}{*}{BC} & DT &    $0.9215 \pm 0.0452$  &     $87.27 \pm 1.58$         &    $11.43 \pm 4.22$     &     $13.83 \pm 4.64$    \\ 
  &  & KNN &    $ 0.9755 \pm 0.0043$     &     $88.75 \pm 1.52$       &    $13.33 \pm 3.53$     &     $9.07 \pm 1.40$    \\
  &  & SVM &    $  \bf{0.9882 \pm 0.0025}$   &     $\bf{91.62 \pm 1.59}$        &    $\bf{10.64 \pm 5.19}$     &     $5.89 \pm 3.18$    \\
   &  & SVDD &     $0.9607 \pm 0.0091$    &     $85.69 \pm 1.45$       &     $18.73 \pm 2.22$     &       $9.63 \pm 1.40$     \\
   &  &  CSSLM &    $0.9850 \pm 0.0022$    &     $83.90 \pm 3.14$        &   $29.05 \pm 5.35$       &   $\bf{0.65 \pm 1.08}$           \\ \cline{2-7}
   & \multirow{5}{*}{AD} & DT &    $ 0.8819 \pm 0.0749$  &     $78.81 \pm 7.01$         &    $\bf{8.10 \pm 2.64}$     &     $31.68 \pm 12.86$    \\ 
   & & KNN &    $ 0.9544 \pm 0.0196$        &     $44.89 \pm 38.89$    &    $1.91 \pm 2.57$     &     $65.05 \pm 31.04$    \\ 
   & & SVM  &    $0.9658 \pm 0.0102$   &     $41.90 \pm 44.24$            &    $2.54 \pm 3.01$      &     $62.90 \pm 39.45$         \\
   & & SVDD &     $0.9466 \pm 0.0222$   &     $76.07 \pm 26.86$       &     $13.49 \pm 5.95$     &       $ 24.21 \pm 27.18 $      \\
   & & CSSLM &    $\bf{0.9678 \pm 0.0186}$   &     $\bf{81.91 \pm 3.25}$         &   $30.79 \pm 4.62$       &   $\bf{2.99 \pm 1.96} $        \\
    \hline
   \multirow{10}{*}{KDD-Cup'99} &
    \multirow{5}{*}{BC} & DT &    $ 0.9893 \pm 0.0013$   & $94.50 \pm 0.45$       &    $0.92 \pm 0.98$     &     $9.86 \pm 0.25$    \\ 
    & & KNN &    $ 0.9909 \pm 0.0002$    &  $\bf{94.81 \pm 0.11}$      &    $\bf{0.33 \pm 0.13}$     &     $9.81 \pm 0.14$    \\
    & & SVM &    $ 0.9911 \pm 0.0060$   &  $90.64 \pm 3.63$        &    $0.80 \pm 0.65$     &     $17.06 \pm 6.67$    \\
    & & SVDD &     $0.9952 \pm 0.0013$  &  $86.22 \pm 6.20$        &     $21.98 \pm 14.87$     &       $\bf{3.57 \pm 4.14}$     \\
    & &  CSSLM &    $\bf{0.9954 \pm 0.0014}$  &  $87.40 \pm 3.10$         &   $17.22 \pm 8.70$       &   $3.78 \pm 1.78$           \\ \cline{2-7}
    &\multirow{5}{*}{AD} & DT &    $ 0.9883 \pm 0.0025$   &  $\bf{93.80 \pm 1.43}$          &    $0.18 \pm 0.43$     &     $11.84 \pm 2.64$    \\ 
   & & KNN &    $ 0.9767 \pm 0.0008$     &  $87.07 \pm 0.32$       &    $0.83 \pm 0.48$     &     $23.55 \pm 0.47$    \\ 
   & & SVM  &    $0.9971 \pm 0.0006$   &  $81.19 \pm 8.98$         &    $\bf{0.01 \pm 0.03}$      &     $33.34 \pm 14.80$         \\
   & & SVDD &     $0.9948 \pm 0.0015$  &       $ 85.67 \pm 6.40 $  &  $23.04 \pm 15.08$        &     $3.45 \pm 4.08$           \\
   & & CSSLM &    $\bf{0.9967 \pm 0.0019}$     &  $87.81 \pm 5.48$        &   $20.32 \pm 11.75$       &   $\bf{2.56 \pm 2.99} $        \\
    \hline \hline
    \end{tabular}	}
\end{table}

The MC dataset is a 2D synthetic dataset where positive points are sampled from a multivariate Gaussian distribution and negative points from a multivariate Cauchy distribution (see Figure \ref{fig:Cauchy_data_distribuation}). 
The dataset's heavy-tailed, non-Gaussian characteristics provide a challenging test case for evaluating robustness against outliers and heavy-tailed noise in statistical learning. For MC dataset  the results in Table \ref{table:resultofCPEAD} show that in BC, CSSLM achieves the highest AUPRC ($0.9944 \pm 0.0002$), slightly higher than SVM ($0.9942 \pm 0.0006$), while attaining the lowest negative-class error ($e_- = 3.60\% \pm 0.49$). In AD, CSSLM secures the best G-means ($87.61\% \pm 8.60 $) together with the lowest negative-class error ($e_- = 5.59 \% \pm 2.97$). These results highlight the advantage of CSSLM in controlling negative-class errors while remaining competitive in AUPRC and G-means.
Key observations: (i) CSSLM dominates in $e_-$ across both scenarios; (ii) improvements are consistent across runs with relatively low variance.

\begin{figure}[h]
	\centering
	\includegraphics[width=0.5\linewidth]{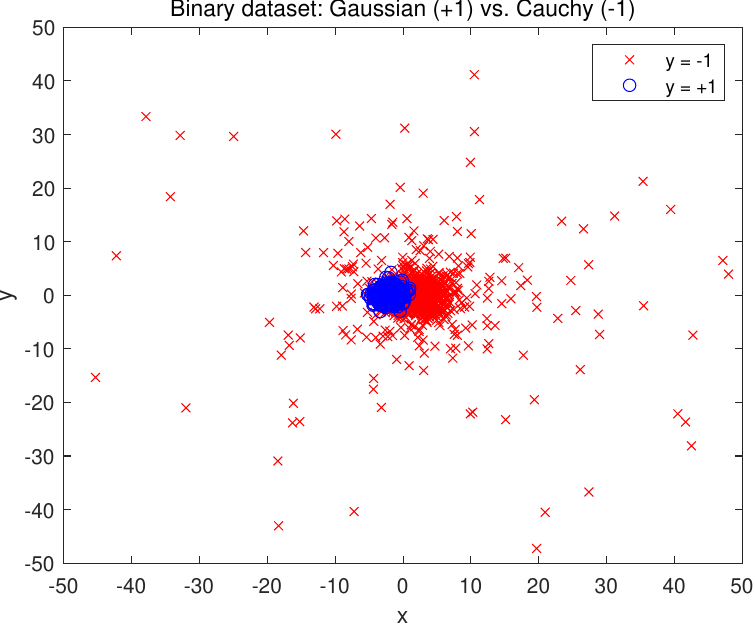}
	\caption{The data distribution of the Multivariate Cauchy Dataset.}
	\label{fig:Cauchy_data_distribuation}
\end{figure}

The BTSC dataset\footnote{\href{https://archive.ics.uci.edu/dataset/176/blood+transfusion+service+center}{https://archive.ics.uci.edu/dataset/176/blood+transfusion+service+center}} contains 4-dimensional donor data from Hsin-Chu City, Taiwan. This real-world data set captures the demographic and behavioral patterns of blood donors, making it particularly valuable for predicting donor return behavior and evaluating classification methods. Before modeling, we apply logarithmic transformation to the features. On this more challenging dataset, many methods degrade significantly. In BC, CSSLM maintains stable performance (AUPRC $= 0.8681 \pm 0.0273$, G-means $= 60.02\% \pm 7.95$), whereas KNN collapses with G-means $= 6.61\% \pm 20.90$. In AD, CSSLM again sustains performance (AUPRC $= 0.8665 \pm 0.0267$, G-means $= 62.04\% \pm 5.72$), while other methods all break down.
Key observations: (i) CSSLM is the only method that avoids collapse in both scenarios; (ii) it consistently yields relatively lower $e_-$.

The BCW dataset\footnote{\href{https://archive.ics.uci.edu/dataset/17/breast+cancer+wisconsin+diagnostic}{https://archive.ics.uci.edu/dataset/17/breast+cancer+wisconsin+diagnostic}} contains 30 features derived from digitized images of fine needle aspirates of breast masses, describing characteristics of cell nuclei. This medical dataset is widely used to benchmark classification methods in cancer diagnosis. Prior to modeling, we apply log transformation to the data. 
In BC, CSSLM performs competitively (AUPRC $= 0.9850 \pm 0.0022$), close to SVM ($0.9882 \pm 0.0025$), while achieving the lowest negative-class error ($ 0.65\% \pm 1.08$). In AD, CSSLM achieves the best AUPRC ($0.9678 \pm 0.0186$), best G-means  ($81.91 \pm 3.25$) and again the lowest negative-class error ($ 2.99 \% \pm 1.96$). Meanwhile, both KNN and SVM suffer breakdown under AD.
Key observations: (i) CSSLM balances $e_+$ and $e_-$ more reliably than baselines; (ii) KNN minimizes $e_+$ but incurs substantially higher $e_-$.

The KDD-Cup'99 dataset\footnote{\href{https://kdd.ics.uci.edu/databases/kddcup99/kddcup99.html}{https://kdd.ics.uci.edu/databases/kddcup99/kddcup99.html}} is a 41-dimensional benchmark for network intrusion detection, containing simulated traffic data with normal connections and various attacks. We use the 10\% training subset ``\texttt{kddcup.data\_10\_percent}" and the full test set ``\texttt{kddcup.data\allowbreak.corrected}".   Categorical attributes are converted via one-hot encoding, followed by min-max normalization. 
In BC, CSSLM achieves the best AUPRC ($0.9954 \pm 0.0014$) and a slightly higher negative-class error ($ 3.78\% \pm 1.78$) compared to SVDD ($3.57\%\pm 4.14$). In AD, CSSLM again dominates with the highest AUPRC ($0.9967 \pm 0.0019$) and lowest $e_-$ ($2.56 \% \pm 2.99$), whereas SVM attains a low $e_+$ but at the cost of very high $e_-$. The key observation is CSSLM consistently secures both the best AUPRC and relatively low $e_-$. 

For comparison with deep models we additionally train MLP and CNN on KDD-Cup’99  with {\em all} the training set ($(m,n)=(97278,396743)$) of the KDD-Cup’99 Dataset. Here we adopt a 6-layer MLP and a 1-layer CNN following \cite{vinayakumar2017applying}, trained with learning rate $0.1$, batch size $128$, and $50$ epochs. 
Table \ref{table:kdd99 result for DD} shows the test results. Even when MLP (AUPRC $= 0.9973$) and CNN ($0.9940$) are trained on the {\it full} training set, their performance remains comparable to CSSLM trained on much smaller samples. This underscores the efficiency of support vector-based methods, whose performance depends primarily on the number of support vectors rather than the training set size. Hence the support vector-based approaches remain highly sample-efficient compared to deep neural models.

\begin{table}[h!]
    \centering
   \caption{\small The AUPRC, G-means, test errors ($e_+$, $e_-$) for   MLP and CNN on the KDD-Cup'99 Dataset.}
    \label{table:kdd99 result for DD}
        {\scriptsize 
    \begin{tabular}{ccccc}
    \hline
    \hline
     Model & AUPRC & G-means &$e_+(\%)$ & $e_-(\%)$ \\
    \hline    
     MLP & \bf{0.9973} & \bf{96.58} & 1.33 & \bf{5.47}\\
    CNN & 0.9940 & 96.48 & \bf{1.01} &  5.96\\
    \hline
    \hline
    \end{tabular}	}
\end{table}

\subsection{Decision Boundary Visualization on the Banana Dataset}

We examine the differences in the decision boundaries learned by the SVM-based models. To this end, we apply the Gaussian-kernel models (SVM, SVDD, and CSSLM) to Scenarios C and D of the banana dataset in \ref{tab.data}. The SVDD and CSSLM are solved via their dual formulations using \texttt{quadprog} in Matlab, while SVM is trained with \texttt{libsvm}. 
For comparison, we also include the performance of  DT and KNN are included and we adopted the built-in \texttt{fitctree} and \texttt{fitcknn} functions in Matlab solving DT and KNN, respectively.
Table \ref{table:result_cmp} reports the parameters chosen together with the test performance.  Figure \ref{fig:result_cmp} presents the AUPRC curves of all the methods in the test set.   Figure \ref{fig:boundary-training} illustrates the decision boundaries learned by SVM, SVDD, and CSSLM on the training data. Figure \ref{fig:boundary-test} further overlays these boundaries on the test data for direct comparison.

\begin{table}[h!]
\centering
\caption{\small Hyperparameters and evaluation metrics of each models on Scenarios C and D of the Banana Dataset. }
	\label{table:result_cmp}
 	    {\scriptsize 
		\begin{tabular}{clcccccccc}
			\hline\hline
            \multirow{2}{*}{Scenario} & \multirow{2}{*}{Model} & \multicolumn{4}{c}{Hyperparameter} & \multicolumn{4}{c}{Evaluation metric} \\
            \cline{3-10}
			& & $\mu$           & $\nu$   & $b$        & $\gamma$           & AUPRC  & G-means $(\%)$             & $e_+(\%)$    & $e_-(\%)$ \\
			\hline\hline
			\multirow{5}{*}{C} &   DT    & -            & -     & -     & -      & $0.9642$  & $90.94$  & $12.34$      & $5.65$\\   
            &   KNN    & -            & -     & -     & -      & $0.9908$  & $92.52$   & $10.50$      & $\bf{4.36}$\\   
            &   SVM    & -            & $5.0000$     & $0.6234$     & $0.3762$   & $0.9920$ & $\bf{93.08}$    & $6.56$      & $7.27$\\
                &   SVDD    & $0$            & $0.0038$     & $0.3117$     & $0.1881$      & $0.9871$  & $92.63$   & $\bf{4.97}$      & $9.69$\\
                &   CSSLM    & $0.0384$            & $0.0038$     & $0.1558$     & $0.3762$      & $\bf{0.9925}$  & $92.91$   & $8.66$      & $5.49$\\
            \hline
			\multirow{5}{*}{D}  &   DT    & -            & -     & -     & -      & $0.9535$ & $83.62$    & $0.26$      & $29.87$\\   
            &   KNN    & -            & -     & -     & -      & $0.9889$  & $78.25$   & $\bf{0.00}$      & $38.77$\\  
            &     SVM    & -                & $0.5000$     & $5.0000$     & $0.4385$      & $0.9920$  & $91.28$   & $1.58$     & $15.35$\\
             &     SVDD    & $0$                & $0.0091$     & $2.5000$     & $0.2193$      & $0.9887$ & $92.30$   & $2.89$     & $12.28$\\
             & CSSLM & $0.0909$                & $ 0.0091$     & $2.5000$     & $0.4385$      & $\bf{0.9921}$  & $\bf{92.42}$   & $10.24$     & $\bf{4.85}$\\
			\hline\hline
		\end{tabular}}
 \end{table}

 \begin{figure}[h!]
\centering
\subfigure[Scenario C]{\label{fig:AUPRC curve in Scenario C}\includegraphics[width=0.45\textwidth]{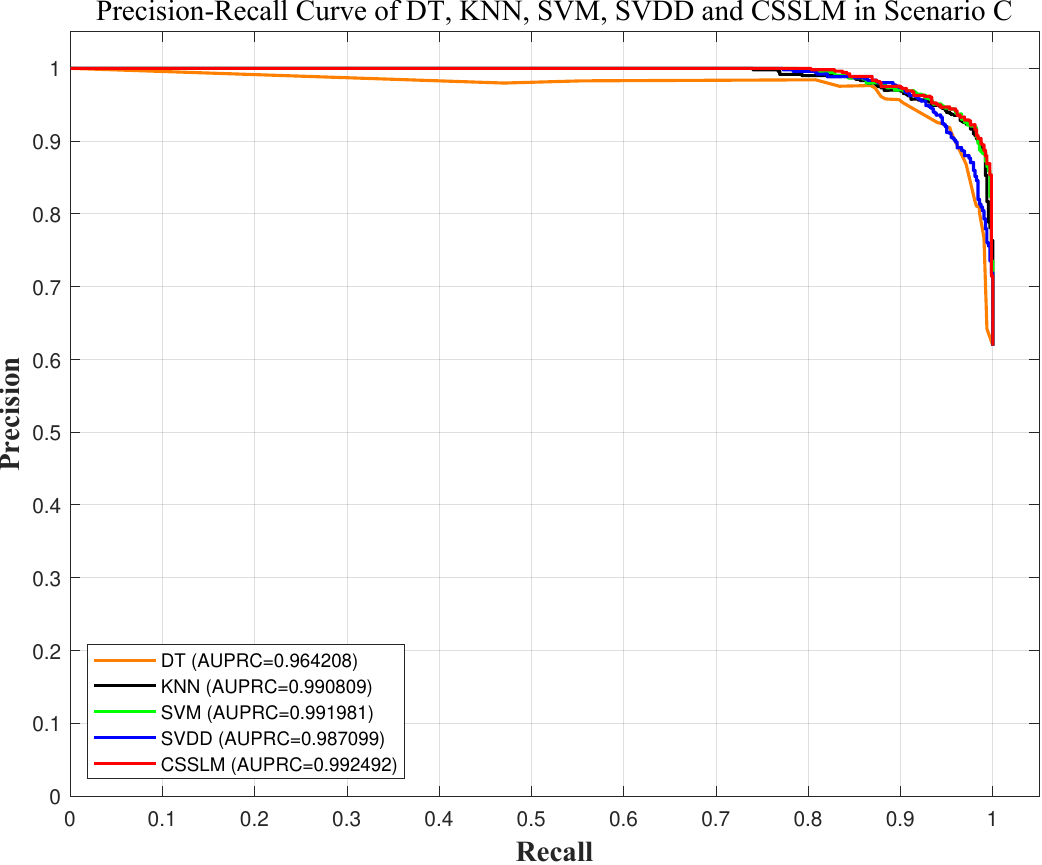}}
\subfigure[Scenario D]{\label{fig:AUPRC curve in Scenario D}\includegraphics[width=0.45\textwidth]{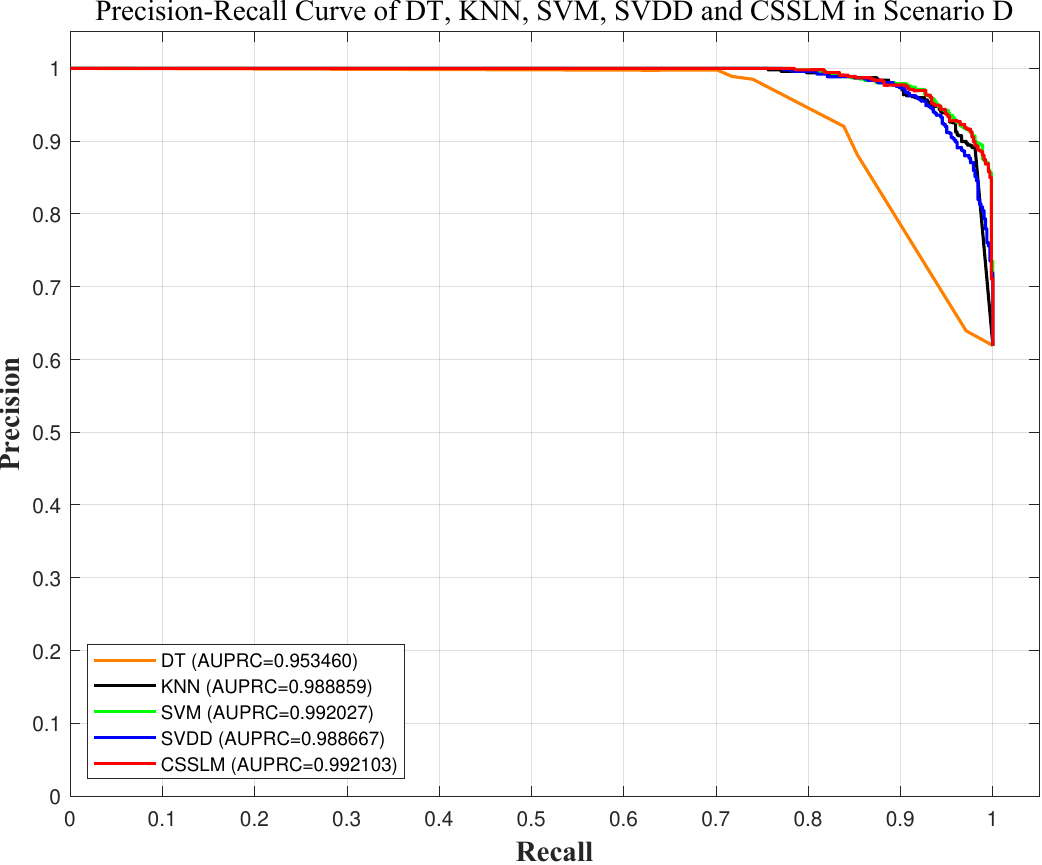}}
\caption{\small Comparison of precision–recall curves for DT, KNN, SVM, SVDD, and CSSLM in Scenarios C and D on the Banana Dataset.}
\label{fig:result_cmp}
\end{figure}

 \begin{figure}[h!]
	\centering
	\subfigure[Scenario C]{
		\begin{minipage}[t]{0.45\linewidth}
			\centering
			\includegraphics[width=\linewidth]{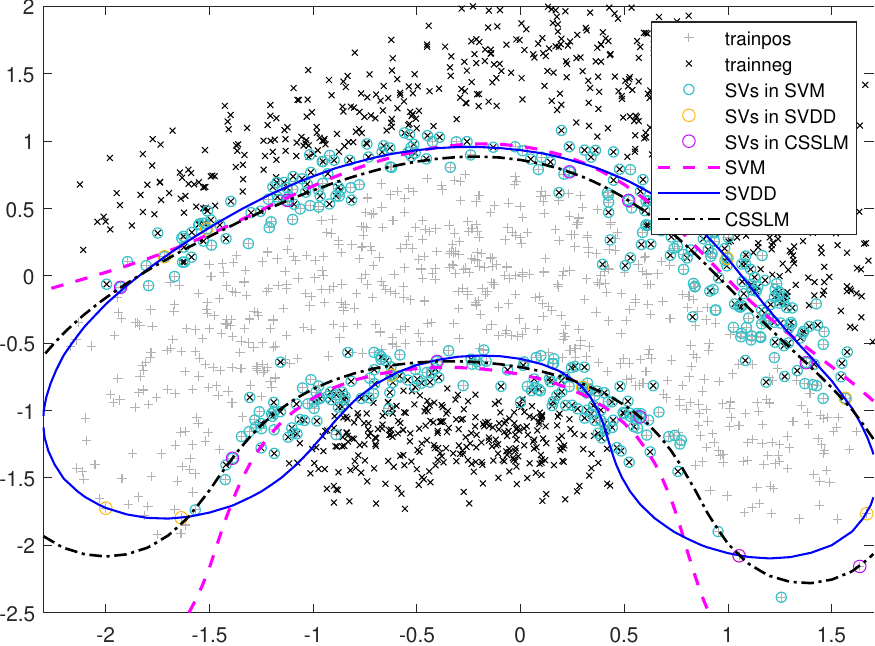}
			\label{fig:ScenarioA}			
		\end{minipage}
	}
    \hfill
	\subfigure[Scenario D]{
		\begin{minipage}[t]{0.45\linewidth}
			\centering
			\includegraphics[width=\linewidth]{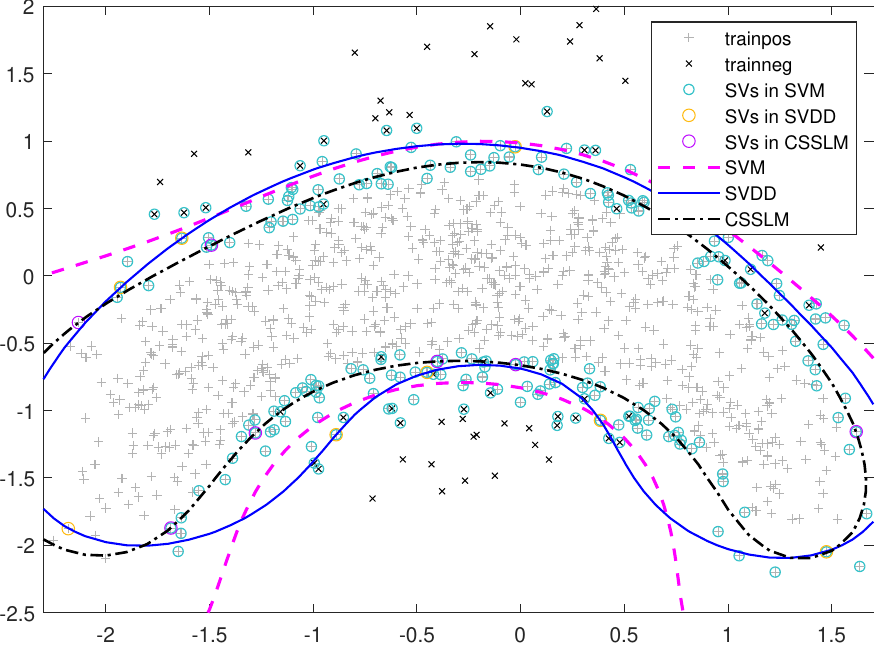}
			\label{fig:ScenarioB}		
		\end{minipage}
	}
\caption{\small Decision boundaries learned by SVM, SVDD, and CSSLM on the training data in Scenarios C and D on the Banana Dataset. Data points inside the circles indicate the supporting vectors.}
	 \label{fig:boundary-training}
\end{figure}

\begin{figure}[h!]
	\centering
	\subfigure[SVM ]{
		\begin{minipage}[t]{0.3\linewidth}
			\centering
			\includegraphics[width=\linewidth]{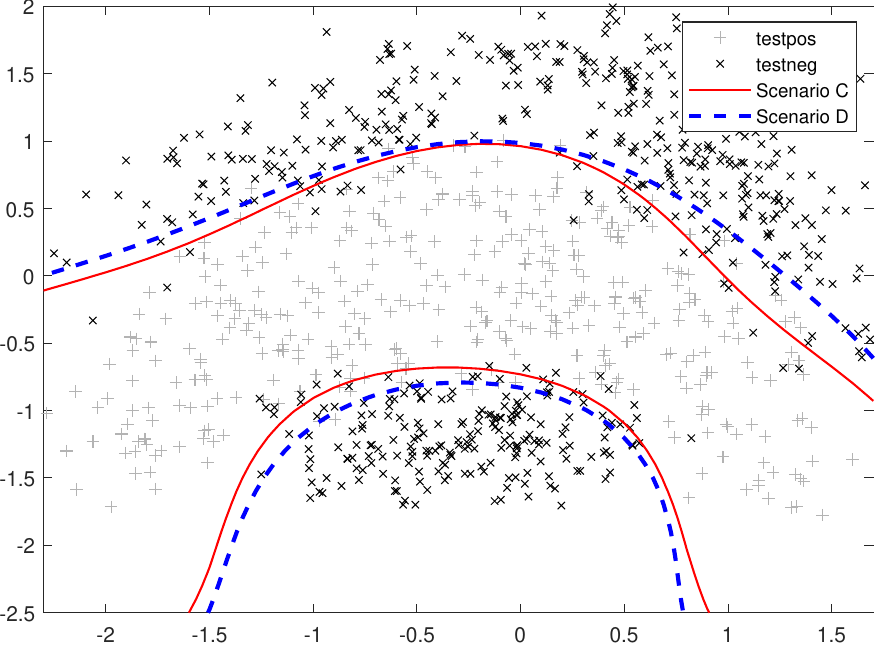}
			\label{fig:SVM}					
		\end{minipage}
		}
	\subfigure[SVDD]{
		\begin{minipage}[t]{0.3\linewidth}
			\centering
			\includegraphics[width=\linewidth]{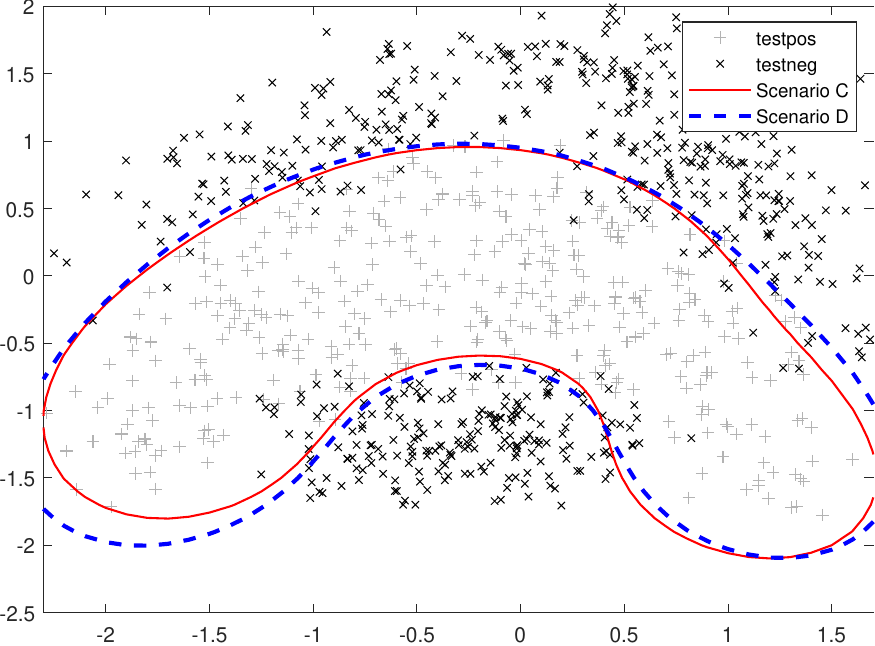}
			\label{fig:SVDD}		
		\end{minipage}
	}	
    \subfigure[CSSLM]{
		\begin{minipage}[t]{0.3\linewidth}
			\centering
			\includegraphics[width=\linewidth]{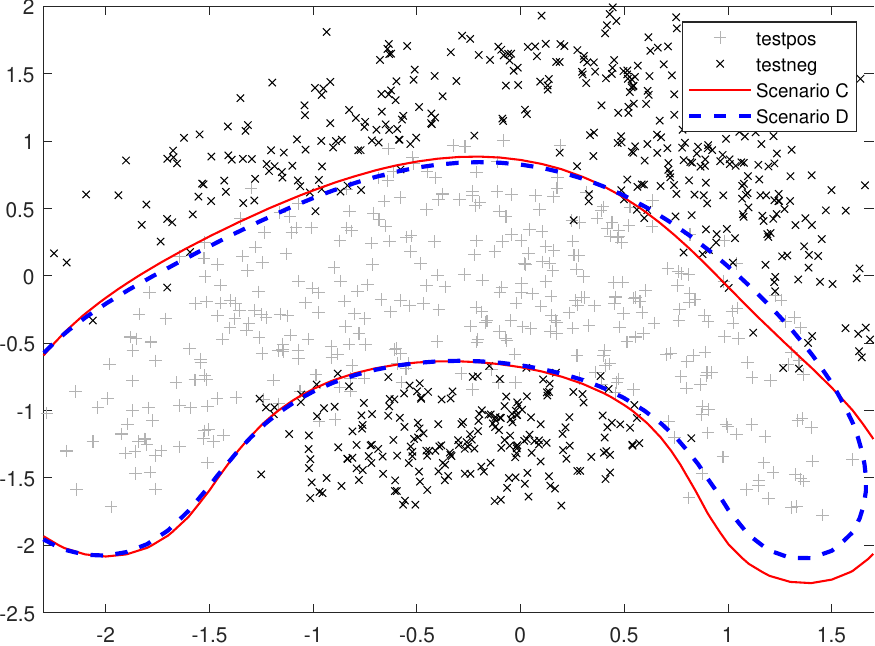}
			\label{fig:CSSLM}		
		\end{minipage}
	}	
\caption{\small Decision boundaries learned by SVM, SVDD, and CSSLM on the test data in Scenarios C and D on the Banana Dataset.}
	\label{fig:boundary-test}
\end{figure}

 In Scenario C, CSSLM achieves a false-positive rate $ 5.49\%$, which is significantly lower than SVM (7.27\%) and SVDD (9.69\%), at the cost of a slightly higher false-negative rate $ 8.66\%$ compared to SVM (6.56\%) and SVDD (4.97\%). This indicates that CSSLM adopts a conservative strategy to reduce false positives, which explains its slightly higher overall test error. In Scenario D, where negative samples are scarce, CSSLM attains a low false-negative rate $10.24 \%$  along with a very competitive false-positive rate $4.85\%$, demonstrating a more balanced performance compared to SVM ($e_+=1.58\%$, $e_-=
15.35\%$).

 Under these metrics, CSSLM consistently achieves the best or near-best performance, with the highest AUPRC in Scenario C ($0.9925$) and Scenario D ($0.9921$). Across both scenarios, CSSLM consistently achieves lower negative-class errors. Figure \ref{fig:boundary-test}  illustrates SVDD and CSSLM consistently identify similar banana-shaped boundaries across datasets, whereas SVM with a Gaussian kernel tends to overfit positive samples, explaining its lower positive-class error but higher negative-class error. Overall, these results suggest that although CSSLM may not always minimize raw test error in heavily imbalanced settings, it provides more balanced error trade-offs and superior robustness across scenarios. 

\subsection{ Solving Primal CSSLM via Stochastic Subgradient Method} 

We address the computational challenge of solving the proposed CSSLM in this subsection. The dual CSSLM \eqref{prob.dual1}  involves one linear equality and inequality constrains with box constraints. It is more simple that the primal CSSLM \eqref{prob.primaldual1}. This is a common phenomenon in SVM-like models. Similar to SVM \citep{libsvm11}, various coordinate descent methods and stochastic coordinate descent methods can be designed to solve the dual CSSLM \eqref{prob.dual1}. However it's not easy to describe them clearly in a limited space. 

Now we turn to solve the primal CSSLM \eqref{prob.primaldual1} which is a composition optimization, the sum of the smooth regularization term and the nonsmooth empirical loss function.  Motivated by the well-known Pegasos \citep{Pegasos11}, a stochastic sub-gradient method for SVM, we propose a stochastic subgradient method (SSGM) to solve \eqref{prob.primaldual1}. This approach is especially advantageous in large-scale settings.  
The SSGM operates with a batch size of 1 and adopts a decreasing learning rate $\eta_k = 1/(\nu \cdot k)$ at iteration $k$.  A data point with index $i_k \in [1,\ell]$ is selected. The   subgradient of the algorithm at $\bm w_k = \left(\bm a_k,s_k,t_k\right)$ is expressed as
\[
\nabla_{\bm w_k} := \left( \nu \bm a_k, -\frac{\nu}{2}, -\frac{\mu}{2} \right) + \nabla \phi_{i_k}(\bm w_k),
\]
where the example-dependent component $\nabla \phi_{i_k}(\bm w_k)$ is defined as

\[
\nabla \phi_{i_k}(\bm w_k) = 
\begin{cases}
\left(-\Phi(\bm x_{i_k}), \frac{1}{2}, 0\right) \cdot \mathds{1}\left[\langle \bm a_k, \Phi(\bm x_{i_k})\rangle < \frac{\|\Phi(\bm x_{i_k})\|^2 + s_k}{2}\right], & y_{i_k} = 1, \\
b\left(\Phi(\bm x_{i_k}), -\frac{1}{2}, \frac{1}{2}\right) \cdot \mathds{1} \left[\langle \bm a_k, \Phi(\bm x_{i_k})\rangle > \frac{\|\Phi(\bm x_{i_k})\|^2 + s_k - t_k}{2}\right], & y_{i_k} = -1,
\end{cases}
\]
with $ \bm a_k= \sum_{j=1}^{\ell} \beta_j^k \Phi(\bm x_j)$ and $\mathds{1}$ representing the indicator function.

\begin{figure}[h]
	\centering
	\includegraphics[width=0.55\linewidth]{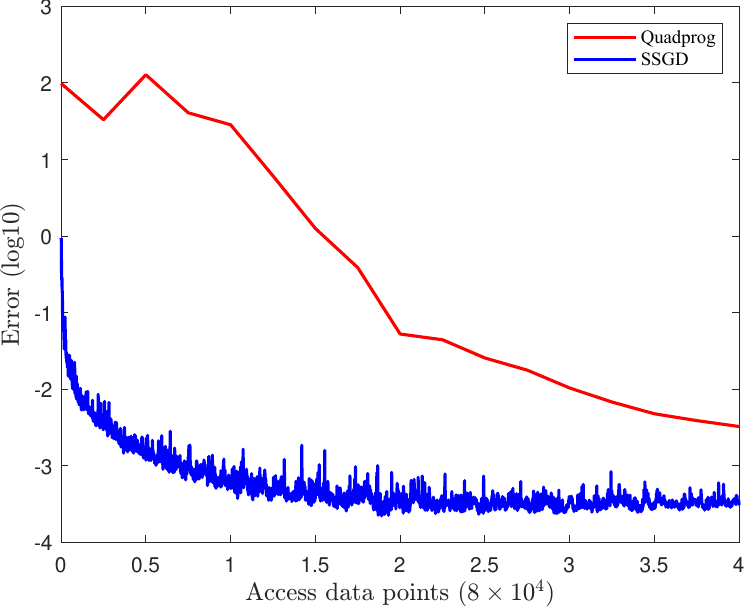}
	\caption{\small Comparison of optimization error between SSGM for solving CSSLM  \eqref{prob.primaldual1} and quadprog for solving the dual CSSLM  \eqref{prob.dual1}. }
	\label{fig:SSGD and Quadprog}
\end{figure}

We compare SSGM against the \texttt{quadprog} solver in MATLAB on the KDD-Cup'99 Dataset with 20k samples ($m=10000, n=10000$). The parameters in \eqref{prob.primaldual1} were set as $b=1, \nu=0.001$ and $\mu=0.05$. Again the gaussian kenel is used with $\gamma=1$. Figure \ref{fig:SSGD and Quadprog} illustrates how the relative error changes with the number of data points accessed where the optimal value means the result obtained by solving \eqref{prob.dual1} using \texttt{quadprog}.
As shown in Figure \ref{fig:SSGD and Quadprog}, SSGM significantly reduces computational cost compared to \texttt{quadprog}, especially as the dataset size increases. While \texttt{quadprog} computes an exact solution at high computational expense, SSGM achieves a rapidly decaying optimization error with far fewer data accesses. This demonstrates that SSGM is a scalable and practical alternative for large-scale applications of CSSLM.

\section{Conclusion}\label{sec.conclude}

We proposed a novel convex SSLM formulation that, for specific hyperparameter choices, reduces to a convex quadratic programming problem. The convexity of our model enables rigorous analysis and derivation of results that are challenging or impossible for traditional nonconvex SSLM models, notably simplifying the study of hypersphere-based SVM solutions. We systematically investigated the influence of hyperparameters on the optimal solution and established explicit connections with traditional SSLM, identifying conditions under which the solution is unique. Moreover, we derived the $\nu$-property, providing insight into how hyperparameters govern the distribution of support vectors and margin errors across positive and negative classes.  

Extensive numerical experiments demonstrate that our method consistently outperforms both the primal SSLM and its dual in terms of efficiency and accuracy. By addressing the challenges of nonconvexity, our work paves the way for developing more robust algorithms for hypersphere-based SVMs. Future work will explore additional theoretical properties and algorithmic strategies that leverage the convexity of our formulation.

\vspace{2cm}
{\noindent \bf Acknowledgment}
This work was initiated through discussions between the first author and Prof. Shuisheng Zhou at Xidian University on the role of nonconvexity in anomaly detection. The authors gratefully acknowledge Prof. Zhou for his many insightful suggestions and constructive comments during the preparation of this paper. The importance of convex modeling in the context of bilevel optimization was inspired by discussions with Dr. Samuel Ward from the University of Southampton. The authors sincerely thank Dr. Ward for highlighting this point and for his valuable feedback.

Hongying Liu is supported by the National Natural Science Foundation of China under Grant  12171021 and 12131004.

\bibliography{reference}
\begin{appendices}

\section{Proof of \textcolor{red}{Lemma} 1 in the ill-posed cases}\label{sec:appendA}

\begin{proof}
If  $\mu>m/{\ell}$,  setting $(\ba,r,t) = ( \alpha \tilde \ba, 0, \alpha^2\|\tilde \ba\|^2)$ with any $\alpha>0$ and   $\tilde \ba\neq \bm 0$  yields
\[
\begin{array}{rl}
&g( \alpha \tilde \ba, 0, \alpha^2\|\tilde \ba\|^2)\\
= & - \frac{1}{2} \mu \alpha^2\|\tilde \ba\|^2 +\frac{1}{2\ell}\sum\limits_{i=1}^m \|\Phi(\bx_i)-\alpha \tilde \ba \|^2  +\frac{b}{2\ell}\sum\limits_{i=m+1}^\ell\left(\alpha^2\|\tilde \ba\|^2 - \|\Phi(\bx_i)-\alpha \tilde \ba\|^2\right)_+\\
= &\frac{1}{2}(\tfrac{m}{\ell}-\mu)\alpha^2 \|\tilde\ba\|^2-\tfrac{1}{\ell}\sum\limits_{i=1}^m\langle\Phi(\bx_i),\tilde\ba\rangle \alpha + \frac{b}{\ell}\sum\limits_{i=m+1}^{\ell}\left(\langle \Phi(\bx_i),\tilde\ba\rangle \alpha -\frac{k_{ii}}{2}\right)_+ + \tfrac{1}{2\ell}\sum\limits_{i=1}^mk_{ii}\\
\le& \frac{1}{2}(\tfrac{m}{\ell}-\mu)\alpha^2 \|\tilde\ba\|^2- \tfrac{1}{\ell}\left[\sum\limits_{i=1}^m\langle\Phi(\bx_i),\tilde\ba\rangle  - b\sum\limits_{i=m+1}^{\ell}\left( \langle \Phi(\bx_i),\tilde\ba\rangle\right)_+\right]\alpha +\tfrac{1}{2\ell}\sum\limits_{i=1}^mk_{ii},
\end{array}
\]
where the inequality is by $k_{ii}\ge 0$.  It follows that $g( \alpha \tilde \ba, 0, \alpha^2\|\tilde \ba\|^2)\to -\infty$ as $\alpha\to +\infty$, since $ \mu > m/{\ell}$.

If $\mu>bn/{\ell}$,  setting $(\ba,r,t) = ( 0, 0, t )$ with sufficiently large $t>0$ yields
\[
g(0,0,t)=\tfrac{1}{2}\Big(\tfrac{bn}{\ell}-\mu\Big)t+\tfrac{1}{2\ell }\sum_{i=1}^m\|\Phi(\bx_i)\|^2-\tfrac{b}{2\ell}\sum_{i=m+1}^{\ell}\|\Phi(\bx_i)\|^2,
\]
which implies  $g(0,0,t)\to -\infty$ as $t\to +\infty$.
\end{proof}

\end{appendices}

\begin{appendices}

\section{Proof of Theorem \ref{prop:SSLM-De}}\label{sec:appendB}

 \begin{proof}
 (i) Substituting $\mu=0$ into $g(a,r,t)$, one has
 \[
 \begin{array}{ll}
 g(\ba,r,t)&=\frac{\nu}{2}r+\frac{1}{2\ell}\sum\limits_{i=1}^m(\|\Phi(\bx_i)-\ba\|^2-r)_+ +\frac{b}{2\ell}\sum\limits_{i=m+1}^\ell( r+t-\|\Phi(\bx_i)-\ba\|^2)_+\\
 &\ge \frac{1}{2}(\nu-\tfrac{m}{\ell})r+\frac{1}{2\ell }\sum\limits_{i=1}^m\|\Phi(\bx_i)-\ba\|^2\\
 & \ge \frac{1}{2\ell }\sum\limits_{i=1}^m\|\Phi(\bx_i)-  \ba\|^2\\
 & \ge \frac{1}{2\ell }\sum\limits_{i=1}^m\|\Phi(\bx_i)-\hat \ba\|^2.
 \end{array}
 \]
Here  the first inequality holds as  equality   if and only if $r \le \|\Phi(\bx_i)-\ba\|^2,   i\in [1,m]$ and $ r+t \le \|\Phi(\bx_i)-\ba\|^2,  i\in [m+1, \ell]$;
 the second inequality holds because of  $\mu = 0, \nu \ge  m/\ell$ and the inequality holds as equality
if and only if $\ba = \hat \ba$ and $(\nu-m/{\ell})r/2 = 0$. The last inequality is by the definition \eqref{eq:degenerate-spherecenter}  of $\hat \ba$.  Therefore,  $g(\ba, r, t)$ reaches its lower bound
if and only $(\ba, r, t) \in \Omega$.

(ii)
Suppose $\nu+\mu \ge m/{\ell}$ is satisfied. We first prove \eqref{prob.unc} is equivalent to
\begin{equation}\label{prob.unc.rt}
  \inf\limits_{\ba\in H, t\in\mathbb{R}} g(\ba, 0, t).\end{equation}
Moreover, if  $\inf\limits_{\ba\in H, t \in \mathbb{R}} g(\ba, 0, t) > -\infty$,  then the global minimizer satisfies  $r=0$ and $t \ge 0$.
By $\nu+\mu \ge m/{\ell}$, one has $0 < \mu \le m/{\ell}$. Hence for any $\ba\in F,  r\geq0$,
\begin{equation*}\label{eq:degenerate}
\begin{array}{rl}
  g(\ba, r, t)
\geq& \frac{1}{2}(\nu r-\mu t)+\frac{1}{2\ell}\sum_{i=1}^m(\|\Phi(\bx_i)-\ba\|^2-r)+\frac{b}{2\ell}\sum_{i=m+1}^\ell (r+t-\|\Phi(\bx_i)-\ba\|^2)_+\\
=&\frac{1}{2}(\nu-\frac{m}{\ell})r-\frac{\mu}{2}t+\frac{1}{2\ell}\sum_{i=1}^m\|\Phi(\bx_i)-\ba\|^2+\frac{b}{2\ell}\sum_{i=m+1}^\ell ( r+t-\|\Phi(\bx_i)-\ba\|^2)_+\\
\ge&-\frac{\mu}{2}(r+t)+\frac{1}{2\ell}\sum_{i=1}^m\|\Phi(\bx_i)-\ba\|^2+\frac{b}{2\ell}\sum_{i=m+1}^\ell(r+t-\|\Phi(\bx_i)-\ba\|^2)_+\\
=& g(\ba,0,r+t) \\
\ge &  \inf\limits_{\ba\in H, t } g(\ba, 0, t)
\end{array}
\end{equation*}
where the first inequality comes from $(z)_+\ge z$ and
the second inequality  is due to condition $\nu+\mu \ge m/{\ell}$.
On the other hand, for any $t < 0$,
\begin{equation}\label{eq.ga0t}
 g(\ba, 0, t)    = -\tfrac{\mu}{2}t +\tfrac{1}{ 2\ell}\sum\limits_{i=1}^m\|\Phi(\bx_i)-\ba \|^2  \ge   \tfrac{1}{2\ell}\sum\limits_{i=1}^m\|\Phi(\bx_i)-\ba \|^2
=   g(\ba, 0, 0)
\end{equation}
by $\mu\ge 0$.
It follows that  $ g(\ba, r, t) \ge \inf\limits_{\ba\in H, t\in\R } g(\ba, 0, t) \ge  \inf\limits_{\ba\in H, t\ge 0} g(\ba, 0, t)$. It implies
\[\inf\limits_{\ba\in H, t\in\R } g(\ba, 0, t) =  \inf\limits_{\ba\in H, t\ge 0} g(\ba, 0, t).\]
Moreover, \eqref{eq.ga0t} indicates that if $\inf\limits_{\ba\in H, t } g(\ba, 0, t)  > - \infty$,
$(\ba_*,   t_*) \in  \arg\min\limits_{\ba\in H, t\in\mathbb{R}} g(\ba, 0, t)$ also satisfies
$t^* \ge 0$.
The former result implies that problem \eqref{prob.unc} in this case is equivalent to the unconstrained problem
\eqref{prob.unc.rt}, which is equivalent to a \emph{convex} problem by setting $z = \|\ba\|^2 - t$ in $g$:
 \begin{equation}\label{prob.without.s}
\min\limits_{\ba\in H,  z\in\R} \tilde f(\ba, z),
 \end{equation}
 with (see Appendix \ref{sec:appendD} for deriving $\tilde f(\ba, z)$)
 \[
  \tilde f(\ba, z)  =   \tfrac{\ell\lambda}{2} \|\ba\|^2 + \tfrac{\ell\mu}{2}z - m\langle\hat \ba, \ba\rangle
    + b\sum_{i=m+1}^\ell \left( - \tfrac{z+ k_{ii}}{2} + \langle\ba, \Phi(\bx_i)\rangle\right)_+.
  \]
  If $\mu < m/\ell \le \nu+\mu$,   then \eqref{prob.without.s} reverts to the convex quadratic program problem
\eqref{prob.qp}.

(iii)  If $ \mu  =   m/\ell$,  then \eqref{prob.without.s} reverts to the linear programming problem
\eqref{prob.lp}.
\end{proof}
\end{appendices}

\begin{appendices}

\section{Derivation of the objective function in the degenerate case}\label{sec:appendD}

It holds that
\[\begin{aligned}
 g(\ba, 0,t)
  = &   -\tfrac{\mu}{2}t+\tfrac{1}{2 \ell }\sum\limits_{i=1}^{m} \|\ba-\Phi(\bx_i)\|^2 + \tfrac{b}{2 \ell}\sum_{i=m+1}^\ell (t-  \|\Phi(\bx_i) - \ba\|_2^2)_+ \\
 = &   -\tfrac{\mu}{2}t+\tfrac{1}{2\ell  }\sum\limits_{i=1}^{m}(\|\ba\|^2  -2 \langle \ba, \Phi(\bx_i)\rangle+k_{ii}) + \tfrac{b}{2\ell}\sum_{i=m+1}^\ell (t-\|\ba\|^2+2\langle\ba, \Phi(\bx_i)\rangle-k_{ii})_+ \\
 = & - \tfrac{\mu}{2}t +\tfrac{m}{2\ell}\|\ba\|^2 - \tfrac{m}{\ell}\langle \hat\ba, \ba\rangle + \tfrac{b}{2\ell}\sum_{i=m+1}^\ell (t-\|\ba\|^2+2\langle\ba, \Phi(\bx_i)\rangle-k_{ii})_++ \tfrac{1}{2\ell}\sum_{i=1}^mk_{ii}
  \end{aligned} \]
 where $\hat\ba = \frac{1}{m}\sum_{i=1}^m\Phi(\bx_i)$.
Letting $z = \|\ba\|^2 - t$ in $f$,  it follows that
\[
g(\ba,0,t)=\tfrac{1}{\ell}\tilde f(\ba,z)+ \tfrac{1}{2\ell}\sum_{i=1}^m k_{ii}
\]
where
\[
\begin{aligned}
\tilde f(\ba,z) =
& \tfrac{\ell}{2}\left(\tfrac{m}{\ell} -\mu\right) \|\ba\|^2 + \tfrac{\ell\mu}{2}z - m\langle\hat \ba, \ba\rangle
    + b\sum_{i=m+1}^\ell \left( - \tfrac{z+ k_{ii}}{2} + \langle\ba, \Phi(\bx_i)\rangle\right)_+.
  \end{aligned}
  \]
\end{appendices}

\end{document}